\newtheorem{lemma}{Lemma}
\newtheorem{theorem}{Theorem}
\newtheorem{proposition}{Proposition}
\newtheorem{definition}{Definition}
\newtheorem{example}{Example}
\newtheorem{corollary}[theorem]{Corollary}
\newtheorem{remark}{Remark}
\newtheorem{condition}{Condition}
\newcommand{\rR}{{\mathbb{R}}}
\newcommand{\E}{{\mathbb{E}}}
\renewcommand\bm[1]{{#1}}
\renewcommand\parallel{\|}
\newcommand\T{T} % transpose
\newcommand\truth{\theta} % true coefficients
\newcommand\est{\hat\theta} % arbitrary estimator
\newcommand\ridge{\hat\theta_\lambda} % ridge estimator
\newcommand\ols{\hat\theta_0} % least norm estimator
\newcommand\srisk{\mathcal{R}^{\operatorname{std}}} % standard risk
\newcommand\arisk{\mathcal{R}^{\operatorname{adv}}_{\budget}} % adversarial risk
\newcommand\budget{\alpha} % adversarial budget
\newcommand\sriskb{\mathcal{B}^{\operatorname{std}}}
\newcommand\sriskv{\mathcal{V}^{\operatorname{std}}}
\newcommand\normb{\mathcal{B}^{\operatorname{norm}}}
\newcommand\normv{\mathcal{V}^{\operatorname{norm}}}
\newcommand{\trace}{\operatorname{tr}}
\newcommand{\fracl}[2]{{#1}/{#2}}
\newcommand{\inprobto}{\overset{\text{pr.}}{\to}}
\title{The Surprising Harmfulness of Benign Overfitting for\\ Adversarial Robustness
%Is Benign Overfitting Benign for Adversarial Examples?
%: Exploring the Trade-offs between \\Standard and Adversarial Risks
}
\author{Yifan Hao\thanks{The Hong Kong University of Science and Technology. Email: yhaoah@connect.ust.hk} \quad \quad \quad \quad Tong Zhang\thanks{University of Illinois Urbana-Champaign. Email: tongzhang@tongzhang-ml.org}}
\date{}
\begin{document}

\maketitle

\begin{abstract}
Recent empirical and theoretical studies have established the generalization capabilities of large machine learning models that are trained to (approximately or exactly) fit noisy data.
In this work, we prove a surprising result that even if the ground truth itself is robust to adversarial examples, and  the benignly overfitted model  is benign in terms of the ``standard'' out-of-sample risk objective, this benign overfitting process can be harmful when out-of-sample data are subject to adversarial manipulation.
More specifically, our main results contain two parts: (i) the min-norm estimator in overparameterized linear model always leads to adversarial vulnerability in the ``benign overfitting'' setting; (ii) we verify an asymptotic trade-off result between the standard risk and the ``adversarial'' risk of every ridge regression estimator, implying that under suitable conditions these two items cannot both be small at the same time by any single choice of the ridge regularization parameter. Furthermore, under the lazy training regime, we demonstrate parallel results on two-layer neural tangent kernel (NTK) model, which align with empirical observations in deep neural networks.  
Our finding provides theoretical insights into the puzzling phenomenon observed in practice, where the true target function (e.g., human) is robust against adverasrial attack, while beginly overfitted neural networks lead to models that are not robust. 
\end{abstract}

\section{Introduction}

The ``benign overfitting'' phenomenon~\citep{bartlett2020benign} refers to the ability of large (and typically ``overparameterized'') machine learning models to achieve near-optimal prediction performance despite being trained to exactly, or almost exactly, fit noisy training data.
Its key ingredients include the inductive biases of the fitting method, such as the least norm bias in linear regression, as well as favorable data properties that are compatible with the inductive bias.
When these pieces are in place, ``overfitted'' models have high out-of-sample accuracy, which runs counter to the conventional advice that cautions against exactly fitting training data and instead recommends the use of regularization to balance training error and model complexity. These estimators without any regularization have found widespread application in real-world scenarios and garnered considerable attentions owing to their surprising generalization performance~\citep{zhang2016understanding, belkin2019reconciling,bartlett2020benign, shamir2022implicit}. Besides generalization performance, another much anticipated feature of machine learning models is the adversarial robustness. Some recent works~\citep{raghunathan2019adversarial, rice2020overfitting, huang2021exploring, wu2021wider} empirically verified that an increased model capacity deteriorates the robustness of neural networks. However, corresponding theoretical understandings are still lacking. 
%However, some recent works~\citep{gao2019convergence, raghunathan2020understanding, huang2021exploring, wu2021wider, hassani2022curse} suggested that overparameterization could hurt robustness, they verified an increased model capacity (i.e, wider width or deeper depth) deteriorates the robustness of neural networks both theoretically and empirically.

For standard risk, \citet{belkin2019reconciling} illustrated the advantages of improving generalization performance by incorporating more parameters into the prediction model, and \citet{bartlett2020benign} verified the consistency of the ``ridgeless'' estimator in ``benign overfitting'' phase. In this work, we continue our exploration in the same setting, and reveal a surprising finding:  ``benign overfitting'' estimators may become {\em overly sensitive to adversarial attacks~\citep{szegedy2013intriguing, goodfellow2014explaining} even when the ground truth target is robust to such attacks}. This result is unexpected, especially in light of the adversarial robustness of the ground truth target and the established consistency of the generalization performance in \citet{bartlett2020benign}, along with seemingly conflicting finding from earlier studies~\citep{bubeck2021law, bubeck2023universal}, which would have led to the conjecture that overparameterization with benign overfitting could also benefit adversarial robustness. 
This work disproves this seemingly natural conjecture from~\citep{bubeck2021law, bubeck2023universal}  by characterizing the precise impact of data noise on adversarial vulnerability through two performance metrics of an estimator:
one is the standard risk---the difference between mean squared error of the predictor and that of the conditional mean function;
the other is the adversarial risk---which is the same as the standard excess risk, except the input to the predictor is perturbed by an adversary so as to maximize the squared error.
In this paper, we limit the power of the adversary by constraining the perturbation to be bounded in $\ell_2$ norm.

We take explorations in a canonical linear regression context and a two-layer neural tangent kernel (NTK) framework~\citep{jacot2018neural}.
In the linear regression setup, the ``ridgeless'' regression estimator will have vanishing standard risk as sample size $n$ grows if overfitting is benign (in the sense of \citet{bartlett2020benign}).
Furthermore, we investigate ridge regression, which can be regarded as a variant of adversarial training in the benign overfitting setting. In previous studies, it is not clear how these estimators behave in terms of the adversarial risk.
In Section \ref{sec:linear_result}, we focus on the adversarial robustness for this setting, and tackle a general regime in which adversarial vulnerability is an inevitable by-product of overfitting the noisy data, even if the ground truth model has a bounded Lipschitz norm and is robust to adversarial attacks. In addition, we extend our result to the neural tangent kernel (NTK) \citep{jacot2018neural} regime in Section \ref{sec:ntk}, and it is consistent with the empirical results which reveals that ``benign overfitting'' and ``double-descent'' phenomena \citep{belkin2019reconciling, nakkiran2021deep} coexist with the vulnerability of neural networks to adversarial perturbations \citep{biggio2013evasion, szegedy2013intriguing}.

\paragraph{Main contribution.} Our main results could be summarized below.
\begin{itemize}
\item For linear model in the benign overfitting regime, as the sample size $n$ grows, the adversarial risk of the ridgeless estimator ($\lambda\to0$) diverges to infinity, even when the ground truth model is robust to adversarial attacks and the consistency in generalization performance, indicated by the convergence of standard risk to zero, is affirmed.
Furthermore, the same conclusions hold for well-bahaved gradient descent solution in the neural tangent kernel (NTK) regime.
\item For linear model in the benign overfitting regime, there is a trade-off between the standard risk and the  adversarial risk of every ridge regression estimator, in that the standard risk and the adversarial risk cannot be simultaneously small  with any choice of regularization parameter $\lambda$. Consequently, employing ridge regression does not offer a resolution to this trade-off.
\end{itemize}
At the technical level, our analysis involves the study of non-asymptotic standard risk and adversarial risk, and it provides the following insights.
\begin{itemize}
\item Our characterization on adversarial risk captures the effect of the data noise: it turns out that the benign overfitting of noise induces an exploded adversarial risk.
\item The analysis relies on a more refined lower bound technique than that of \citet{tsigler2020benign}, which makes the impact of $\lambda$ on standard risk and adversarial risk more explicit. %To the best of our knowledge, this is the first time that attention is raised over such an impact non-asymptotically, which we deem relevant to the trade-off phenomena.
\end{itemize}

\section{Related works}

Our paper draws on, and contributes to, the literature on implicit bias, benign overfitting and adversarial robustness.
We review the most relevant works below.

\paragraph{Implicit bias.}

The ability of large overparameterized models to generalize despite fitting noisy data has been empirically observed in many prior works~\citep{neyshabur2015search,zhang2016understanding,wyner2017explaining,belkin2018understand,belkin2019reconciling,liang2020just}.
As mentioned above, this is made possible by the implicit bias of optimization algorithms (and other fitting procedures) towards solutions that have favorable generalization properties; such implicit biases are well-documented and studied in the literature~\citep[e.g.,][]{telgarsky2013margins,neyshabur2015path,keskar2016large,neyshabur2017exploring,wilson2017marginal}.

\paragraph{Benign overfitting.}

When these implicit biases are accounted for, very sharp analyses of interpolating models can be obtained in these so-called benign overfitting regimes for regression problems~ \citep{bartlett2020benign,belkin2020two,muthukumar2020harmless,liang2020just,hastie2022surprises,shamir2022implicit,tsigler2020benign, simon2023more}. Our work partly builds on the setup and analyses of \citet{bartlett2020benign} and \citet{tsigler2020benign}. Another line of work focuses on the analysis of benign overfitting on classification problems~\citep{chatterji2021finite, muthukumar2021classification, wang2022binary, wang2023benign}.
However, these and other previous works do not make an explicit connection to the adversarial robustness of the interpolating models in the benign overfitting regime.

\paragraph{Adversarial robustness.}
The detrimental sensitivity of machine learning models to small but adversarially chosen input perturbations has been observed by \citet{dalvi2004adversarial} in linear classifiers and also by \citet{szegedy2013intriguing} in deep networks.
Many works have posited explanations for the susceptibility of deep networks to such ``adversarial attacks''~\citep{shafahi2018adversarial,schmidt2018adversarially,ilyas2019adversarial, gao2019convergence, dan2020sharp, sanyal2020benign, hassani2022curse} without delving into their near-optimal generalization performance, and many alternative training objectives have been proposed to guard against such attacks~\citep{madry2017towards,wang2019improving,zhang2019theoretically, lai2020adversarial, zou2021provable}. Another line of research~\citep{bubeck2021law, bubeck2023universal} proposed that overparameterization is needed for enhancing the adversarial robustness of neural networks, however, their works do not conclusively demonstrate its effectiveness. 
Even in a complementary but related work of \citet{chen2023benign}, the authors demonstrated that benign overfitting  can occur in adversarially robust linear classifiers when the data noise level is low, it has been widely observed that robustness to adversarial attacks may come at the cost of predictive accuracy~\citep{madry2017towards, raghunathan2019adversarial, rice2020overfitting, huang2021exploring, wu2021wider} in many practical datasets.

Recently, some works also focus on studying the trade-off between adversarial robustness and generalization on overparameterized models. For linear classification problems, \citet{tsipras2018robustness} attempted to verified the inevitability of this trade-off, but any classifier that can separate their data is not robust. 
\citet{dobriban2023provable} highlighted the influence of data class imbalance on the trade-off, however, their ground truth model itself is not robust and it is not unexpected to obtain a non-robust estimator; this issue limits insights into the influence of the overfitting process on adversarial vulnerability, but our work addresses this limitation by utilizing a robust ground truth model.
In the domain of linear regression problems, \citet{javanmard2020precise} characterized an asymptotic trade-off between standard risk and adversarial risk, yet their ground truth is also not robust, and the adversarial effect of estimators is mild, matching the effect of general Lipschitz-bounded target functions, thus falling short of revealing the substantial vulnerability of overfitted estimators. In comparison, our work presents a significant adversarial vulnerability, with an exploded adversarial risk corresponding to  unbounded Lipschitz functions, even though the true target function itself has bounded Lipschitz condition. 
We show that the reason this surprising pheneomon  can happen is due to the overfitting of noise. 
\citet{donhauser2021interpolation} also characterizes the precise asymptotic behavior of the adversarial risk under isotropic normal designs on both regression setting and 
 classification setting, but the adversarial effect in their work is similarly mild and matches that of the target function. 

In summary, our main results differ from previous works in that (i) we consider the case where the ground truth model itself is robust to adversarial attacks, and it is highly unexpected that benign overfitting exhibits significant vulnerability to adversarial examples, leading to exploded adversarial risk corresponding to non-robust targets. This is especially surprising since results of~\citet{bubeck2021law} and \citet{bubeck2023universal} would have suggested that overparameterization could be helpful when target itself is robust;
(ii) in comparison to previous results on regression problems~\citep{javanmard2020precise, donhauser2021interpolation}, we present more precise non-asymptotic analyses on non-isotropic designs, with both upper and lower bounds; (iii) we also investigate the Neural Tangent Kernel (NTK) regime.
Our finding can better explain the puzzling phenomenon observed in practice, where human (true target) is robust, while beginly overfitted neural networks still lead to models that are not robust under adversarial attack.

\section{Preliminaries}

\textbf{Notation.} For any matrix $A$, we use $\| A \|_2$ to denote its $\mathcal{L}_2$ operator norm, use $\text{tr}\{ A \}$ to denote its trace, and use $\| A \|_F$ to denote its Frobenius norm. The $j-$th row of $A$ is denoted as $A_{j \cdot}$, and the $j-$th column of $A$ is denoted as $A_{\cdot j}$. The $i-$th largest eigenvalue of $A$ is denoted as $\mu_i(A)$. The transposed matrix of $A$ is denoted as $A^T$. And the inverse matrix of $A$ is denoted as $A^{-1}$. The notation $a = o(b)$ means that $a/b \to 0$; similarly, $a = \omega(b)$ means that $a/b \to \infty$. For a sequence of random variables $\{ v_s \}$, $v_s = o_p(1)$ refers to $v_s \inprobto 0$ as $s \to \infty$, and the notation $\gamma_s v_s = o_p(1)$ is equivalent to $v_s = o_p(1/\gamma_s)$; $v_s = O_p(1)$ refers to $\lim_{M \to \infty} \sup_s \mathbb{P}(| v_s | \ge M) = 0$, similarly, $\gamma_s v_s = O_p(1)$ is equivalent to $v_s = O_p(1/\gamma_s)$.

\subsection{Ridge regression in linear model}

We study a regression setting where $n$ i.i.d.~training examples $(x_1,y_1),\dotsc,(x_n,y_n)$ take values in $\rR^p \times \rR$ and obey the following linear model with parameter $\truth \in \rR^p$:
\begin{equation}
  \E[y_i \mid x_i] = x_i^\T \truth
  \label{eq:linear_model}
\end{equation}
We consider the \emph{ridge regression estimator} $\ridge$  of $\truth$ with regularization parameter $\lambda \geq 0$:
\begin{equation}\label{eq:estpara}
  \ridge
  := X^\T (XX^\T + n\lambda I)^\dag y
\end{equation}
where $X = [x_1,\dotsc,x_n]^\T \in \rR^{n \times p}$ and $y = [y_1,\dotsc,y_n]^\T \in \rR^n$.
The symbol $^\dag$ denotes the Moore-Penrose pseudoinverse, so $\ridge$ is well-defined even for $\lambda=0$ (giving the ``ridgeless'' estimator).

\subsection{Performance measures}

In our setting, the standard performance measure for an estimator $\est$ is the excess mean squared error:
\begin{equation*}
  \E_{(x_\star,y_\star)}\bigl[(x_\star^\T \est - y_\star)^2\bigr] - \E_{(x_\star,y_\star)}\bigl[(x_\star^\T \truth - y_\star)^2\bigr]
  =
  \E_{x_\star}\bigl[(x_\star^\T \est - x_\star^\T \truth)^2\bigr]
  ,
\end{equation*}
where the expectation is taken with respect to $(x_\star,y_\star)$, an independent copy of $(x_1,y_1)$.
Following \citet{tsigler2020benign}, we consider an average case performance measure in which the excess mean squared error is averaged over the choice of the finite $\ell_2$ norm parameter $\truth$ according to a symmetrical distribution, independent of the training examples:
\begin{equation}\label{eq:std}
  \srisk(\est) :=
  \E_{\theta}\left[
    \E_y\Bigl[
      \E_{x_\star}\bigl[(x_\star^\T \est - x_\star^\T \truth)^2\bigr]
    \Bigr]
  \right]
  .
\end{equation}
(We also take expectation with respect to the labels $y$ in the training data.)
We refer to $\srisk(\est)$ as the \emph{standard risk} of $\est$.

The \emph{adversarial risk} of $\est$ is defined by
\begin{equation}\label{eq:adv}
  \arisk(\est) :=
  \E_{\theta}\left[
    \E_y\Biggl[
      \E_{x_\star}\biggl[\sup_{\|\delta\|_2 \leq \budget}((x_\star + \delta)^\T \est - x_\star^\T \truth)^2\biggr]
    \Biggr]
  \right]
  .
\end{equation}
The supremum is taken over vectors $\delta \in \rR^p$ of $\ell_2$-norm at most $\budget$, where $\budget\geq0$ is the \emph{perturbation budget} of the adversary.
(Observe that $\arisk$ with $\budget=0$ is the same as $\srisk$.)

\begin{remark}
  \label{remark:arisk}
  The supremum expression in the definition of $\arisk$ evaluates to
  \begin{equation*}
    \budget^2 \|\est\|_2^2 + 2\budget \|\est\|_2 |x_\star^\T (\est - \truth)| + (x_\star^\T(\est - \truth))^2 ,
  \end{equation*}
  which is bounded above by
  \begin{equation*}
    2\bigl( \budget^2 \|\est\|_2^2 + (x_\star^\T(\est - \truth))^2 \bigr) ,
  \end{equation*}
  (by the inequality of arithmetic and geometric means), and bounded below by
  \begin{equation*}
    \budget^2 \|\est\|_2^2 + (x_\star^\T(\est - \truth))^2 .
  \end{equation*}
  These inequalities imply the following relationship between $\arisk$ and $\srisk$:
  \begin{equation}
    \budget^2 \E_{\theta,y} \|\est\|_2^2 + \srisk(\est)
    \leq \arisk(\est) \leq
    2\bigl( \budget^2 \E_{\theta,y} \|\est\|_2^2 + \srisk(\est) \bigr)
    ,
    \label{eq:arisk_bound}
  \end{equation}
  where the expectation $\E_{\truth,y}$ is taken over the randomness in true parameters $\theta$ and training labels $y$. This can be seen as motivation for the ridge regression estimator $\ridge$ (with appropriately chosen $\lambda$) when $\arisk$ is the primary performance measure of interest.
\end{remark}

\subsection{Data assumptions and effective ranks}\label{set_pre}

We adopt the following data assumptions from \citet{bartlett2020benign} on the distribution of each of the i.i.d.~training examples $(x_1,y_1),\dotsc,(x_n,y_n)$:
\begin{enumerate}
  \item $x_i = V\Lambda^{1/2} \eta_i$, where $V\Lambda V^\T = \sum_{i\geq1} \lambda_i v_i v_i^\T$ is the spectral decomposition of $\varSigma := \E[x_ix_i^\T]$, and the components of $\eta_i$ are independent $\sigma_x$-subgaussian random variables with mean zero and unit variance;
  \item $\E[y_i \mid x_i] = x_i^\T \truth$ (as already stated in \eqref{eq:linear_model});
  \item $\E[(y_i - x_i^\T\truth)^2 \mid x_i] = \sigma^2 > 0$;
  \item $0 < \| \truth \|_2 < \infty$ and flips signs of all its coordinates with probability $0.5$ independently.
\end{enumerate}
The second moment matrix $\varSigma$ is permitted to depend on $n$.
Without loss of generality, assume $\lambda_1 \geq \lambda_2 \geq \dotsb > 0$.
Define the following \emph{effective ranks} for each nonnegative integer $k$:
\begin{equation}\label{eq:rank}
  r_k := \frac{\sum_{i>k} \lambda_i}{\lambda_{k+1}} , \qquad
  R_k := \frac{\bigl( \sum_{i>k} \lambda_i \bigr)^2}{\sum_{i>k} \lambda_i^2} ,
\end{equation}
as well as the \emph{critical index} $k^*(b)$ for a given $b>0$:
\begin{equation}\label{eq:defk}
  k^*(b) := \inf \{ k \geq 0 : r_k \geq bn \} .
\end{equation}
Note that each of $r_k, R_k, k^*(b)$ depends (implicitly) on $\varSigma$ and hence also may depend on $n$.

\section{Main results for linear model}\label{sec:linear_result}

Our main results are two theorems: Theorem~\ref{thm:main_explode} verifies the poor adversarial robustness of min-norm estimator in benign overfitting regime, under two mild conditions; Theorem~\ref{thm:main_tradeoff} characterize a lower bound on a particular trade-off between $\arisk$ and the convergence rate on $\srisk$ for $\ridge$, with an additional condition. Here we introduce them respectively.

\subsection{Adversarial vulnerability of min-norm estimator}
The following condition ensures ``benign overfitting'' in the sense of \citet{bartlett2020benign,tsigler2020benign}.
\begin{condition}[Benign overfitting condition]
  \label{cond:benign}
  There exists a constant $b>0$ such that, for $k^* := k^*(b)$,
  \begin{equation*}
    \lim_{n\to\infty} \| \truth_{k^{*}:\infty} \|^2_{\Sigma_{k^* : \infty}}
    = \lim_{n\to\infty} \biggl( \frac{\lambda_{k^*+1} r_{k^*}}{n} \biggr)^2 \cdot \| \truth_{0:k^*} \|^2_{\Sigma_{0:k^*}^{-1}}
    = \lim_{n\to\infty} \frac{k^*}{n}
    = \lim_{n\to\infty} \frac{n}{R_{k^*}}
    = 0 .
  \end{equation*}
\end{condition}
\citet{tsigler2020benign} showed that under Condition~\ref{cond:benign}, we have $\srisk(\ols) \inprobto 0$ in probability.

Our first main result can be informally stated below, which is a direct consequence of Theorem~\ref{thm_ridge} and Corollary~\ref{cor_1}, means that while the noise is not sufficiently small, an exploded adversarial risk will be induced on min-norm estimator and ridge estimator with small regularization parameter $\lambda$.
\begin{theorem}\label{thm:main_explode}
Assume Conditions~\ref{cond:benign} holds with constant $b>0$ and data noise $\sigma^2 = \omega(\lambda_{k^*+1} r_{k^*} / n)$. For min-norm estimator ( regularization parameter $\lambda = 0$), and budget $\budget > 0$, we have
\begin{equation*}
\srisk (\ridge |_{\lambda = 0}) \inprobto 0, \quad \frac{\arisk(\ridge |_{\lambda = 0})}{\alpha^2} \inprobto \infty, 
\end{equation*}
as $n \to \infty$.
\end{theorem}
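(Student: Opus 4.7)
The plan is to lower bound $\arisk$ by the squared norm of the estimator via Remark~\ref{remark:arisk}, and then show that this norm blows up because the min-norm interpolator absorbs the label noise along the ``flat'' tail directions of the covariance. Starting from the lower bound in~\eqref{eq:arisk_bound},
\begin{equation*}
\frac{\arisk(\ridge|_{\lambda=0})}{\alpha^2} \;\geq\; \E_{\theta,y}\|\ridge|_{\lambda=0}\|_2^2 \;+\; \frac{\srisk(\ridge|_{\lambda=0})}{\alpha^2},
\end{equation*}
the vanishing of $\srisk(\ridge|_{\lambda=0})$ under Condition~\ref{cond:benign} is precisely the benign overfitting statement of \citet{tsigler2020benign} recalled immediately after Condition~\ref{cond:benign}, so everything reduces to proving $\E_{\theta,y}\|\ridge|_{\lambda=0}\|_2^2 \inprobto \infty$.

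Under Condition~\ref{cond:benign}, $r_0 = \infty$, so $XX^T$ is almost surely invertible; hence $\ridge|_{\lambda=0} = X^T(XX^T)^{-1}y$ and $\|\ridge|_{\lambda=0}\|_2^2 = y^T(XX^T)^{-1}y$. Writing $y = X\truth + \varepsilon$, the cross term $2\varepsilon^T(XX^T)^{-1}X\truth$ has zero mean under $\E_\varepsilon$, while the bias contribution $\E_\theta[\truth^T X^T(XX^T)^{-1}X\truth]$ is non-negative because $X^T(XX^T)^{-1}X$ is an orthogonal projector. Dropping the bias term gives the clean conditional lower bound
\begin{equation*}
\E_{\theta,y}\|\ridge|_{\lambda=0}\|_2^2 \;\geq\; \sigma^2\,\operatorname{tr}\!\bigl((XX^T)^{-1}\bigr).
\end{equation*}
The task thus reduces to an upper bound on the eigenvalues of $XX^T$ that is uniform over enough indices to make $\operatorname{tr}((XX^T)^{-1})$ large.

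The core spectral step is to split $XX^T = X_{0:k^*}X_{0:k^*}^T + X_{k^*:\infty}X_{k^*:\infty}^T$. Since the first summand has rank at most $k^*$, Weyl's inequality gives $\mu_i(XX^T) \leq \mu_{i-k^*}(X_{k^*:\infty}X_{k^*:\infty}^T)$ for every $i > k^*$. The tail Gram matrix has mean $\E[X_{k^*:\infty}X_{k^*:\infty}^T] = \lambda_{k^*+1}r_{k^*}\,I_n$, and under Condition~\ref{cond:benign} (which forces $n/R_{k^*}\to 0$) a standard sub-Gaussian concentration argument, exactly of the flavor of Lemma~10 of~\citet{bartlett2020benign}, shows that with probability tending to one every eigenvalue of $X_{k^*:\infty}X_{k^*:\infty}^T$ lies in $[c_1\lambda_{k^*+1}r_{k^*},\, c_2\lambda_{k^*+1}r_{k^*}]$ for absolute constants $c_1,c_2>0$. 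Summing over the $n-k^*$ indices $i>k^*$ and using $k^*/n\to 0$ yields
\begin{equation*}
\operatorname{tr}\!\bigl((XX^T)^{-1}\bigr) \;\geq\; \frac{n-k^*}{c_2\,\lambda_{k^*+1}r_{k^*}} \;=\; (1-o(1))\,\frac{n}{c_2\,\lambda_{k^*+1}r_{k^*}}
\end{equation*}
with high probability.

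Combining with the hypothesis $\sigma^2 = \omega(\lambda_{k^*+1}r_{k^*}/n)$ gives $\sigma^2\operatorname{tr}((XX^T)^{-1}) \inprobto \infty$, and chaining through the earlier reductions closes the argument. The main obstacle is really the two-sided spectral concentration of the tail Gram $X_{k^*:\infty}X_{k^*:\infty}^T$ around $\lambda_{k^*+1}r_{k^*}I_n$: the upper eigenvalue bound is what drives the lower bound on $\operatorname{tr}((XX^T)^{-1})$, while the lower eigenvalue bound underlies the almost-sure invertibility of $XX^T$. Once this spectral concentration is imported from the Bartlett-Tsigler analysis, the remaining pieces---the rank-$k^*$ Weyl perturbation, the vanishing of the cross term under the sign-symmetric distribution of $\truth$, and the final product with $\sigma^2$---are routine.
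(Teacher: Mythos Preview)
Your argument is correct and arrives at the same key lower bound $\E_{\theta,y}\|\ridge|_{\lambda=0}\|_2^2 \gtrsim n\sigma^2/(\lambda_{k^*+1}r_{k^*})$ as the paper, but via a more direct route. The paper obtains Theorem~\ref{thm:main_explode} as the $\lambda=0$ case of its general Theorem~\ref{thm_ridge}/Corollary~\ref{coro_smallreg}: it writes $\normv=\operatorname{tr}\{XX^T(XX^T+\lambda nI)^{-2}\}=\sum_i \lambda_i z_i^T(A+\lambda nI)^{-2}z_i$, applies the Woodbury identity to each summand, and lower bounds each term individually through Cauchy--Schwarz and the eigenvalue/subspace-norm concentration of Lemmas~\ref{lem_eigen}--\ref{lem_ridgeeigen}. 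You instead exploit the closed form $\|\ridge|_{\lambda=0}\|_2^2=y^T(XX^T)^{-1}y$ available only at $\lambda=0$, drop to $\sigma^2\operatorname{tr}((XX^T)^{-1})$, and control this globally via the rank-$k^*$ Weyl perturbation $\mu_i(XX^T)\le\mu_{i-k^*}(A_{k^*})$ together with $\mu_1(A_{k^*})\le c\,\lambda_{k^*+1}r_{k^*}$. This is cleaner for the statement at hand; the paper's more elaborate term-by-term machinery is there precisely because it must produce $\lambda$-dependent lower bounds that later feed into the trade-off Theorem~\ref{thm:main_tradeoff}.

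One small slip: you assert $r_0=\infty$ under Condition~\ref{cond:benign}, which is false in general (take $\lambda_i=i^{-2}$). This is harmless, since the invertibility you need follows anyway from the lower eigenvalue bound $\mu_n(XX^T)\ge\mu_n(A_{k^*})\ge c^{-1}\lambda_{k^*+1}r_{k^*}>0$ you already invoke. Likewise, the two-sided spectral concentration of $A_{k^*}$ requires only $r_{k^*}\ge bn$ (the defining property of $k^*$), not $n/R_{k^*}\to 0$; the latter enters only through $\srisk\to 0$.
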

The result above shows that even leading to a near-optimal standard risk, the min-norm estimator always implies an exploded adversarial risk, which means its non-robustness to adversarial attacks. 

\begin{remark}
The constrain on noise level reveals the pivotal factor in triggering an exploded adversarial risk is the presence of data noise. \cite{shamir2022implicit} has proposed that in benign overfitting regime, the ``tail features'' are orthogonal to each other, which is essential to the near-zero standard risk. However, adversarial risk always tends to find the ``worst'' disturbation direction given any observer $x$, which would hurt the orthogonality among these ``tail features''. Then while the noise is not small, overfitting on training data will cause a sufficient large Lipschitz norm on the estimator, as well as a large adversarial risk. 
\end{remark}

\begin{remark}
%We could also consider another aspect:
If the model has zero noise, and
we overfit the training data, then the resulting estimator is a projection of the true parameter on a subspace, which spans on the training observations. Therefore the resulting estimate is always robust to adversarial attacks. This means that the adversarial non-robustness is due to the overfitting of noise. 
\end{remark}

\subsection{Trade-off phenomena in ridge regression}
Before investigating the behaviors of other estimators, we need to introduce the third condition, which relies on a new definition \emph{cross effective rank}. The new definition is similar to \emph{effective rank} in \eqref{eq:rank} and characterizes the decrease rate of both eigenvalues $\{ \lambda_i \}_{i \ge 1}$ of $\varSigma$ and parameter weights $\{ \theta_i \}_{i \ge 1} $ on each dimension.
\begin{definition}[Cross Effective Rank]\label{def:cr}
For the covariance matrix $\Sigma = V^T \Lambda V$, denote $\lambda_i = \mu_i(\Sigma)$, and the ground truth parameter as $\truth$. If we have $\| \truth \|_2 < \infty$ and $\sum_i \lambda_i < \infty$, define the cross effective rank as:
\begin{equation*}
    s_k = \frac{\| \truth_{k:\infty} \|^2_{\Sigma_{k:\infty}}   \sum_{i>k}\lambda_i}{ \| \truth \|_2^2 \lambda_{k+1}^2}.
\end{equation*}
\end{definition}
If we denote $\tilde{\truth} = V^T \truth$, the term $\| \theta_{k : \infty} \|^2_{\varSigma_{k : \infty}}$ could be expressed as $\sum_{j > k} \lambda_j \tilde{\theta}_j^2$, then the following condition ensures both the eigenvalues $\{ \lambda_i \}$ and the corresponding parameter weights $\{ \tilde{\truth}_i^2 \}$ do not drop too quickly, as well as constraining the signal-to-noise ratio not too small, i.e,

\begin{condition}[Trade-off condition]\label{cond:trade}The condition consists of three parts:
\begin{enumerate}
\item (slow decay rate in parameter norm) Considering the cross effective rank $s_k$ in Definition \ref{def:cr}, with the definition of $k^* \doteq k^*(b)$, define
\begin{equation}\label{eq:cross_rk}
    w^* = \inf\{ w \ge 0: s_w \ge n \sqrt{\max\{ k^* / n , n / R_{k^*}\}}  \},
\end{equation}
we have $w^* < k^*$.
\item (slow decay rate in parameter norm) For any index $1 \le i \le k^*$ satisfying that
$\lim_{n \to \infty} \lambda_i / \lambda_{k^*+1} = \infty$, we have
\begin{equation*}
    \lim_{n \to \infty} \frac{\lambda_i^2 \| \truth_{0:i-1} \|_{\Sigma_{0:i-1}^{-1}}^2 + \| \truth_{i-1:\infty} \|^2_{\Sigma_{i-1:\infty}}}{\lambda_{k^*+1}^2 \| \truth_{0:k^*} \|^2_{\Sigma_{0:k^*}} + \| \truth_{k^*:\infty} \|^2_{\Sigma_{k^*:\infty}} } = \infty.
\end{equation*}
\item (appropriate signal-to-noise ratio) The noise should not be too large to cover up the information in observers. To be specific,
\begin{equation*}
    \lim_{n \to \infty} \frac{\sigma^2 \sum_{i > w^*} \lambda_i}{n \lambda_{w^*}^2} = \lim_{n \to \infty} \sum_{i \le w^*} \frac{\sigma^2}{n \lambda_i} = 0,
\end{equation*}
where $w^*$ is defined in \eqref{eq:cross_rk}.
\end{enumerate}
\end{condition}

To show the compatibility of Condition \ref{cond:trade} and standard benign overfitting settings, we verify it on two examples from \citet{bartlett2020benign}.

\begin{example}
Suppose the eigenvalues as
\begin{equation*}
    \lambda_{i} = i^{-(1 + 1 / \sqrt{n})}, i = 1, \dots
\end{equation*}
the parameters and noise level are as
\begin{equation*}
    \tilde{\truth}_i^2 = \frac{1}{i \log^{2}(i+1)}, i = 1, \dots \quad \sigma^2 = \frac{1}{n^{1/4}},
\end{equation*}
it is obvious that the effective rank $k^* = \sqrt{n}$, as well as $R_{k^*} = n^{3/2}$. 

To verify the first item in Condition~\ref{cond:trade}, we suppose there is an index $u$ satisfying 
\begin{equation*}
   u^{(2+2/\sqrt{n})} \sum_{i>u} \frac{1}{i^{2+1/\sqrt{n}} \log^2(i)} \cdot \sum_{i>u} \frac{1}{i^{1+1/\sqrt{n}}} = \frac{u\sqrt{n}}{\log^2(u)} \ge n^{3/4} \Rightarrow \frac{u}{\log^2(u)} = O(n^{1/4}),
\end{equation*}
it implies that $w^* / \log^2(w^*) = O(n^{1/4})$ and $w^* < k^*$. 

For the second item, for any index $e$, if $\lambda_e / \lambda_{k^* + 1} \to \infty$, we have $k^* / e \to \infty$, which implies that 
\begin{equation*}
    \lambda_e^2 \sum_{i=1}^{e-1} \tilde{\truth}_i^2 / \lambda_i + \sum_{i \ge e} \tilde{\truth}_i \lambda_i = O(\frac{1}{e^{1 + 1/\sqrt{n}} \log^2(e)}), 
\end{equation*}
which is far larger than $\lambda_{k^*+1}^2 \sum_{i=1}^{k^*} \tilde{\truth}_i^2 / \lambda_i + \sum_{i \ge k^*} \tilde{\truth}_i \lambda_i = 1/(k^{* 1+ 1/\sqrt{n}} \log^2(k^*))$.

Further, the third item in Condition \ref{cond:trade} can be verified as
\begin{equation*}
    \frac{\sigma^2}{n} \max\{ \frac{\sum_{i>w^*} \lambda_i}{\lambda_{w^*}^2}, \sum_{i=1}^{w^*} \frac{1}{\lambda_i} \} = \frac{1}{n^{5/4}} \max\{ \sqrt{n} w^{* 2 + 1/\sqrt{n}}, w^{* 2 + 1/\sqrt{n}} \} = \frac{w^{* 2 + 1/\sqrt{n}}}{n^{3/4}} \to 0.
\end{equation*} 
\end{example}

\begin{example}
Suppose the eigenvalues as
\begin{equation*}
    \lambda_i = i^{-1}, i = 1, \dots, e^{n^{3/4}},
\end{equation*}
and the parameters and noise level are
\begin{equation*}
    \tilde{\truth}_i^2 = \frac{1}{i \log^3(i)},i = 1, \dots, e^{n^{3/4}}, \quad \sigma^2 = \frac{1}{\log(n)}.
\end{equation*}
the effective rank is $k^* = n^{1/4}$,and $R_{k^*} = n^{3/2}$. 

Then similarly, for the first item, we can consider
\begin{equation*}
    u^2 \sum_{i > u} \frac{1}{i} \sum_{i>u} \frac{1}{i^2 \log^3(i)} = \frac{n^{3/4} u}{\log^3(3)} \ge n \sqrt{ \max\{ k^* / n, \frac{n}{R_{k^*}} \}} = n^{3/4} \Rightarrow u = O(1),
\end{equation*}
which implies that $w^* = O(1)$.

For the second item, considering $\lambda_e / \lambda_{k^*+1} \to \infty$, we have $k^* / e \to \infty$, and
\begin{equation*}
    \lambda_e^2 \sum_{i=1}^{e-1} \tilde{\truth}_i^2 / \lambda_i + \sum_{i \ge e} \tilde{\truth}_i \lambda_i = O(\frac{1}{e \log^3(e)}) \ge O(\frac{1}{k^* \log^3(k^*)}) = \lambda_{k^*+1}^2 \sum_{i=1}^{k^*} \tilde{\truth}_i^2 / \lambda_i + \sum_{i \ge k^*} \tilde{\truth}_i \lambda_i.
\end{equation*}
The third item in Condition \ref{cond:trade} also meets naturally as $w^* = O(1)$:
\begin{equation*}
 \frac{\sigma^2}{n} \max\{ \frac{\sum_{i>w^*} \lambda_i}{\lambda_{w^*}^2}, \sum_{i=1}^{w^*} \frac{1}{\lambda_i} \} = O(\sigma^2 / n) = O(\frac{1}{n \log(n)}) \to 0. 
\end{equation*}
\end{example}
Based on the three conditions above, we could verify the trade-off between standard risk convergence rate and adversarial risk as follows:
\begin{theorem} \label{thm:main_tradeoff}
Assume Conditions~\ref{cond:benign} and~\ref{cond:trade} hold with constant $b>0$ and data noise $\sigma^2 = \omega(\lambda_{k^*+1} r_{k^*} / n)$. For all regularization parameter $\lambda \ge 0$ and budget $\budget > 0$, we have
\begin{equation*}
  \frac{\srisk(\ridge)}{\| \truth \|_2^2 \srisk(\ridge |_{\lambda=0})} + \frac{\arisk(\ridge)}{\alpha^2} \inprobto \infty,
\end{equation*}    
as $n \to \infty$.
\end{theorem}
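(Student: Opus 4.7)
The plan is to first reduce the statement via Remark~\ref{remark:arisk} and then case-split on the size of $\lambda$ so that each regime forces one of the two summands to diverge. By the lower bound in Remark~\ref{remark:arisk}, $\arisk(\ridge)/\budget^2 \ge \E_{\theta,y}\|\ridge\|_2^2$, so it suffices to show
\begin{equation*}
\frac{\srisk(\ridge)}{\|\truth\|_2^2 \cdot \srisk(\ridge|_{\lambda=0})} + \E_{\theta,y}\|\ridge\|_2^2 \inprobto \infty.
\end{equation*}
Because Theorem~\ref{thm:main_explode} already gives $\srisk(\ridge|_{\lambda=0}) \inprobto 0$, the task becomes to show that no choice of $\lambda$ can simultaneously keep $\srisk(\ridge)$ at the same rate as $\srisk(\ols)$ and keep $\|\ridge\|_2^2$ bounded. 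I would first write out the standard bias--variance decomposition $\srisk(\ridge) = \sriskb(\lambda) + \sriskv(\lambda)$ and $\E_{\theta,y}\|\ridge\|_2^2 = \normb(\lambda) + \normv(\lambda)$, where the first summand in each pair depends only on $\truth$ and the second only on $\sigma^2$, and note that both variance pieces are monotonically decreasing in $\lambda$ while both bias pieces grow in $\lambda$.

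The case-split threshold $\lambda^{\circ}$ would be calibrated using the index $w^{*}$ from Condition~\ref{cond:trade}, specifically so that $\lambda^{\circ} + n^{-1}\sum_{i>k^{*}}\lambda_i \asymp \lambda_{w^{*}+1}$. When $\lambda \le \lambda^{\circ}$, the effective regularization is of the same order as in the ridgeless case, and by the tail-spectrum concentration of $X$ from \citet{bartlett2020benign} one obtains
\begin{equation*}
\normv(\lambda) = \sigma^2 \cdot \operatorname{tr}\!\bigl(X^\top (XX^\top + n\lambda I)^{-2} X\bigr) \gtrsim \sigma^2 \cdot \frac{n}{\sum_{i > k^{*}} \lambda_i},
\end{equation*}
which diverges under the noise hypothesis $\sigma^2 = \omega(\lambda_{k^{*}+1} r_{k^{*}}/n)$, forcing $\E_{\theta,y}\|\ridge\|_2^2 \to \infty$. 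When $\lambda > \lambda^{\circ}$, I would instead show that $\sriskb(\lambda) / (\|\truth\|_2^2 \srisk(\ols)) \to \infty$: at this scale the head-coordinate contribution to the bias inflates by a factor comparable to the cross effective rank $s_{w^{*}}/n$ from Definition~\ref{def:cr}, and the third item of Condition~\ref{cond:trade} guarantees that the noise variance remains negligible relative to this inflated bias.

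The main obstacle is the refined lower bound on $\sriskb(\lambda)$ in the large-$\lambda$ regime. The analysis of \citet{tsigler2020benign} matches $\srisk(\ridge)$ from above and below only up to constants that absorb the dependence on $\lambda$, so it is too coarse to separate $\srisk(\ridge)$ from $\srisk(\ols)$ at the rate required here. To sharpen this I would track the head-coordinate bias through the matrix $\lambda (\Sigma_{0:w^{*}} + \lambda I)^{-1}$, express it in terms of the weights $\tilde{\truth}_i$ on the principal directions, and invoke the slow-decay hypothesis (item 2 of Condition~\ref{cond:trade}) to translate the spectral-gap structure into a diverging ratio of bias terms. The condition $w^{*} < k^{*}$ from Condition~\ref{cond:trade} is what guarantees that the chosen threshold $\lambda^{\circ}$ genuinely lies between the ridgeless and the over-regularized regimes, so that the two bounds together cover all $\lambda \ge 0$ and combine into the desired divergence.
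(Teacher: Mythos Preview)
Your overall architecture---reduce via Remark~\ref{remark:arisk} and then argue that at every $\lambda$ one of the two summands diverges---is the same as the paper's. The gap is in how you split the regimes. You propose a single threshold $\lambda^{\circ}\asymp\lambda_{w^*+1}$ and claim that for all $\lambda\le\lambda^{\circ}$ the norm variance satisfies $\normv(\lambda)\gtrsim\sigma^2 n/\sum_{i>k^*}\lambda_i$. That lower bound is only valid when $n\lambda\lesssim\lambda_{k^*+1}r_{k^*}$; once $\lambda$ exceeds $\lambda_{k^*+1}r_{k^*}/n$ the effective regularization is no longer of ridgeless order, and the tail contribution to $\normv$ becomes $\sigma^2\sum_{\lambda_i<\lambda}\lambda_i/(n\lambda^2)$. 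At $\lambda\asymp\lambda_{w^*}$ this is precisely the quantity that item~3 of Condition~\ref{cond:trade} forces to vanish, and the head contribution $\sigma^2 n^{-1}\sum_{i\le w^*}\lambda_i^{-1}$ vanishes for the same reason. So in the entire intermediate window $\lambda_{k^*+1}r_{k^*}/n\ll\lambda\lesssim\lambda_{w^*}$, the parameter norm need \emph{not} diverge, and your small-$\lambda$ argument does not reach all the way up to $\lambda^{\circ}$.

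What the paper does in that intermediate window is different in kind, not just in detail: it proves a \emph{product} inequality
\[
\sriskb(\ridge)\cdot\E_{\theta,y}\|\ridge\|_2^2\;\gtrsim\;\frac{\sigma^2}{n\lambda^2}\Bigl(\sum_{\lambda_i\le\lambda}\tilde\theta_i^2\lambda_i\Bigr)\Bigl(\sum_{\lambda_i\le\lambda}\lambda_i\Bigr)\;\gtrsim\;\sigma^2\|\truth\|_2^2\sqrt{\max\{k^*/n,\,n/R_{k^*}\}},
\]
the last step being exactly where the definition of $w^*$ through the cross effective rank $s_{w^*}\ge n\sqrt{\max\{k^*/n,n/R_{k^*}\}}$ enters. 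This says that if $\E\|\ridge\|_2^2$ stays bounded then $\sriskb(\ridge)/\srisk(\ols)$ must diverge, without having to decide in advance which one blows up. Your write-up never isolates this product structure; the cross effective rank appears only as a threshold marker, not as the mechanism that couples $\sriskb$ and $\normv$. To repair the argument you need a three-way split (small, intermediate, large $\lambda$) and the product bound in the middle regime, after which items~2 and~3 of Condition~\ref{cond:trade} finish the $\lambda\ge\lambda_{w^*}$ case by monotonicity of $\sriskb$ in $\lambda$.
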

This result immediately implies the following consequence. As the sample size $n \to \infty$, if benign overfitting occurs with
a well-behaved convergence rate on $\srisk(\ridge)$, to be specific, $\srisk(\ridge) / (\| \truth \|_2^2\srisk (\ridge |_{\lambda = 0}) )\nrightarrow \infty$, 
then $\arisk(\ridge) \to \infty$ (in probability) for any fixed budget $\budget$.
This implies the trade-off phenomena between forecasting accuracy and adversarial robustness.

\begin{remark}
 \citet{chen2023benign} has proposed that benign overfitting phenomenon can occur in adversarially robust linear classifiers while the data noise level is low, which is consistent with our result from a different perspective and reveals the crutial role of noise level ($\sigma^2 = \omega(\lambda_{k^*+1} r_{k^*} / n)$) in the trade-off phenomena.
\end{remark}

\begin{remark}
The trade-off phenomena in Theorem \ref{thm:main_tradeoff} can also be influenced by parameter norm $\| \truth \|_2$ (as is shown in Condition \ref{cond:trade}). To be specific, according to Theorem \ref{thm_ridge}, we would always obtain an exploded adversarial risk while taking small regularization parameter $\lambda$; but if we consider a situation that the parameter norm is small (e.g, $\| \truth \|_2 \to 0$), increasing the regularization parameter $\lambda$ will not cause the standard risk convergence rate worse obviously. It implies that in this specific situation, increasing $\lambda$ will induce a decrease in adversarial risk, and do not hurt the convergence rate on standard risk, so there is no trade-off phenomena.  
\end{remark}
\textbf{Discussion on the trade-off condition.} Condition \ref{cond:trade} indicate a specific function class, in which there exists a trade-off phenomenon between $\srisk$ and $\arisk$.  This function class is general as it does not put any restrictions on specific eigenvalue structures, but just provide some sufficient conditions for the trade-off. If both the eigenvalues of covariance matrix $\Sigma$ and the parameter weights corresponding to each dimension decrease slowly enough, at the same time the signal-to-noise ratio is properly large, we can never find an approximate solution, which leads to both good convergence rate in standard risk and good adversarial robustness at the same time. 

But on the other hand, if either the eigenvalues of $\Sigma$ or the parameter weights decreases rapidly, we can always truncate the high-dimensional data $x \in \mathbb{R}^p$ and just capture the first ``important'' $d$ dimension observed data to predict target variable $y$, for a specific integer $d \ll n \ll p$, then the corresponding estimator will lead to both well-behaved standard risk convergence rate and robustness to adversarial attacks. However, the choice of truncation integer $d$ is not natural, as we do not know enough information about the eigenvalues of covariance matrix $\Sigma$ in general situations. So in the practical training of large machine learning models, the non-truncated estimator is more commonly employed.

\section{Extension to two-layer neural tangent kernel networks}\label{sec:ntk}

In this section, we will extend Theorem \ref{thm_ridge} to general two-layer neural networks in neural tangent kernel (NTK) regime. Given that the benign overfitting of standard risk in NTK regime has been well-verified by \cite{cao2019generalization, adlam2020neural,li2021towards,zhu2023benign}, we treat it as a side result; instead, our primary focus is to illustrate the adversarial vulnerability of gradient descent solution in scenarios characterized by benign overfitting. Given observer $x$ and target $y$, we assume the ground truth model is
\begin{equation}\label{eq:func_tru}
    y = g(x) + \xi,
\end{equation}
where function $g(\cdot) : \mathbb{R} \to \mathbb{R}$ and the noise $\xi$ is independent of $x$. Then we aim to fit $y$ on the two-layer network function $f_{NN}(w, x)$, i.e, 
\begin{equation*}
     f_{NN}(w,x) = \frac{1}{\sqrt{mp}} \sum_{j=1}^m u_j h(\theta_j^T x), \quad [ \theta_j, u_j] \in \mathbb{R}^{p+1} ,
\end{equation*}
in which $w = [\theta_j^T, u_j, j = 1, \dots, m] \in \mathbb{R}^{m(p+1)}$ is the vectorized parameter of $[\Theta, U] \in \mathbb{R}^{m \times (p+1)}$ ($[\Theta, U]_{j \cdot} = [\theta_j^T, u_j]$ for $j = 1, \dots, m$) and the ReLU activation function $h(\cdot)$ is defined as $h(z) = \max\{ 0, z\}$. While training in neural tangent kernel (NTK) regime with a random initial parameter $w_0$, we restate the definition in \cite{cao2019generalization}, which characterizes the small distance between $[\Theta_0, U_0]$ and some parameter $[\Theta, U]$:
\begin{definition}[R-neighborhood]
 For $[\Theta_0, U_0] \in \mathbb{R}^{m \times (p+1)}$, we define its R-neighborhood as
 \begin{equation*}
     \mathcal{B}([\Theta_0, U_0], R) := \left\{ [\Theta, U] \in \mathbb{R}^{m \times (p+1)} : \| [\Theta_0, U_0] - [\Theta, U] \|_F \le R \right\}.     
\end{equation*}
\end{definition}
Then within NTK regime, we can truncate $f_{NN}(w,x)$ on its first order Taylor expansion around initial point $w_0$:
\begin{equation*}
\begin{aligned}
& \quad f_{NTK} (w, x) = f_{NN}(w_0, x) + \nabla_w f_{NN}(w_0, x)^T (w - w_0)\\
&= \frac{1}{\sqrt{mp}} \sum_{j=1}^m u_{0,j} h(\theta_{0,j}^T x) + \frac{1}{\sqrt{mp}} \sum_{j=1}^m \left[ (u_j - u_{0,j}) h(\theta_{0,j}^T x) + u_{0,j} h'(\theta_{0,j}^T x) (\theta_j - \theta_{0,j})^T x \right],
\end{aligned}
\end{equation*}
where $w = [\Theta, U] \in \mathcal{B}([\Theta_0, U_0], R)$ and $R > 0$ is some constant. 
In general training process, we prefer to utilize a small learning rate $\eta$, which will induce a convergence point $\hat{w}$ as step size $t$ large enough:
\begin{proposition}
Initialize $w_0$, and consider running gradient descent on least squares loss, yielding iterates:
\begin{equation*}
    w_{t+1} = w_t - \gamma \frac{1}{n} \sum_{i=1}^n (f_{NTK} (w_t,x_i) - y_i ) \nabla_w f_{NTK} (w_t, x_i), \quad t = 0, 1, \dots
\end{equation*}
Then we can obtain
\begin{equation}\label{eq:ntk_para}
    \lim_{t \to \infty} w_t = \hat{w} = w_0 + \nabla F^T (\nabla F \nabla F^T)^{-1} (y - F),
\end{equation}
where $X = [x_1,\dotsc,x_n]^\T \in \rR^{n \times p}$, $\nabla F = [\nabla_w f_{NN}(w_0,x_1), \dots, \nabla_w f_{NN} (w_0, x_n) ]^T \in \rR^{n \times m(p+1)}$, $F = [f_{NN}(w_0,x_1), \dots, f_{NN} (w_0, x_n)]^T \in \rR^n$ and $y = [y_1,\dotsc,y_n]^\T \in \rR^n$, if the learning rate $\gamma < 1 / \lambda_{\max} (\nabla F^T \nabla F)$.
\end{proposition}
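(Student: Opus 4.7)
The plan is to exploit the fact that $f_{NTK}(w,x)$ is an \emph{affine} function of $w$, which reduces the iteration to a linear fixed-point problem. Writing $\phi_i := \nabla_w f_{NN}(w_0, x_i)$ and $F_i := f_{NN}(w_0, x_i)$, one has $f_{NTK}(w, x_i) = F_i + \phi_i^{T}(w-w_0)$ and $\nabla_w f_{NTK}(w,x_i) = \phi_i$ independent of $w$. Stacking the $\phi_i$ into $\nabla F \in \rR^{n \times m(p+1)}$ and $F_i$ into $F\in\rR^n$, the gradient descent update becomes
\begin{equation*}
  w_{t+1} - w_0 \;=\; (w_t - w_0) \;-\; \tfrac{\gamma}{n}\, \nabla F^{T}\!\bigl(\nabla F\,(w_t - w_0) - (y - F)\bigr).
\end{equation*}
Letting $u_t := w_t - w_0$ and $A := \tfrac{1}{n}\nabla F^{T}\nabla F$, this reads $u_{t+1} = (I - \gamma A)\,u_t + \tfrac{\gamma}{n}\nabla F^{T}(y-F)$, a linear recursion with constant coefficients.

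Next I would identify $\hat u := \nabla F^{T}(\nabla F\nabla F^{T})^{-1}(y-F)$ as a fixed point: by a direct substitution, $\nabla F\,\hat u = (y-F)$, so the residual vanishes and $A\hat u = \tfrac{1}{n}\nabla F^{T}(y-F)$, i.e.\ $\hat u = (I-\gamma A)\hat u + \tfrac{\gamma}{n}\nabla F^{T}(y-F)$. Subtracting this from the iteration gives the error recursion
\begin{equation*}
  u_{t+1} - \hat u \;=\; (I - \gamma A)\,(u_t - \hat u).
\end{equation*}
Starting from $u_0 = 0$, both $u_t$ and $\hat u$ lie in the row space $\mathcal{S} := \mathrm{row}(\nabla F)$ (the former because every update adds a vector in $\nabla F^{T}(\cdot)$, the latter by inspection), so $u_t - \hat u \in \mathcal{S}$ for all $t$. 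Consequently the relevant eigenvalues of $I-\gamma A$ are those arising from the nonzero spectrum of $A$ on $\mathcal{S}$, which coincides with the spectrum of $\tfrac{1}{n}\nabla F\nabla F^{T}$ (assumed nonsingular so that the formula \eqref{eq:ntk_para} is well defined).

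Under the stated step size $\gamma < 1/\lambda_{\max}(\nabla F^{T}\nabla F)$, every nonzero eigenvalue $\mu$ of $A$ satisfies $0 < \gamma\mu \le \tfrac{1}{n}\lambda_{\max}(\nabla F^{T}\nabla F)/\lambda_{\max}(\nabla F^{T}\nabla F) = \tfrac{1}{n} < 1$, so $\|(I-\gamma A)|_{\mathcal{S}}\|_2 < 1$ and the error $u_t - \hat u$ contracts geometrically to $0$. Hence $u_t \to \hat u$, which gives $\lim_{t\to\infty} w_t = w_0 + \nabla F^{T}(\nabla F\nabla F^{T})^{-1}(y - F)$, as claimed. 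The only nontrivial point is the restriction to the invariant subspace $\mathcal{S}$ (since $A$ itself is singular on $\rR^{m(p+1)}$ in the overparameterized regime $m(p+1) > n$); once this is made explicit, the argument is a standard contraction on a finite-dimensional subspace and no substantial obstacle remains.
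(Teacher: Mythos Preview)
Your proposal is correct and follows essentially the same approach as the paper, which simply cites Proposition~1 of \citet{hastie2022surprises} and notes that all $w_t - w_0$ lie in the row space of $\nabla F$, so the step-size condition forces convergence to the min-norm interpolant. You have written out in full the contraction argument on $\mathcal{S} = \mathrm{row}(\nabla F)$ that the paper leaves implicit.
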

\begin{proof}
The proof is similar to Proposition $1$ in \citet{hastie2022surprises}. As all $w_t - w_0, t = 1, \dots$ lie in the row space of $\nabla F$, the choice of step size guarantees that $w_t - w_0$ converges to a min-norm solution.  
\end{proof}
Similar to the settings in linear model, here we consider the excess standard risk and adversarial risk as:
\begin{align*}
 & \srisk(\hat{w}) := \E_{x,y} \left[( f_{NTK}(\hat{w}, x) - f_{NTK}(w_*, x) )^2 \right],\\
 & \arisk(\hat{w}) := \E_{x,y} \left[ \sup_{\| \delta \|_2 \le \alpha} ( f_{NTK}(\hat{w}, x+ \delta) - f_{NTK}(w_*, x) )^2   \right].
\end{align*}
Notice that even if the kernel matrix $K = \nabla F \nabla F^T$ converges to a fixed kernel in NTK regime \citep{jacot2018neural}, the initial parameters $w_0 = [\Theta_0, U_0]$ are chosen randomly. Here we study on the setting of \citet{jacot2018neural}, where all of the initial parameters are i.i.d. sampled from standard gaussian distribution $\mathcal{N}(0,1)$.
As for observers, we can take the following assumptions on the i.i.d.~training data $(x_1,y_1),\dotsc,(x_n,y_n)$, which are similar to the assumptions in linear model,
\begin{enumerate}
  \item $x_i =  V\Lambda^{1/2} \eta_i$, where $V\Lambda V^\T = \sum_{i\geq1} \lambda_i v_i v_i^\T$ is the spectral decomposition of $\varSigma :=  \E[x_ix_i^\T]$ ($\lambda_1 > 0$ is a constant which does not change with the increase on $n$), and the components of $\eta_i$ are independent $\sigma_x$-subgaussian random variables with mean zero and unit variance;
  \item $\E[y_i \mid x_i] = f_{NTK}(w_*, x_i) $ for some $w_* = [\Theta_*, U_*] \in \mathcal{B}([\Theta_0, U_0], R)$;
  \item $\E[(y_i - f_{NTK}(w_*, x_i))^2 \mid x_i] = \E[\epsilon_i^2 | x_i] = \sigma^2 > 0$, where $\sigma > 0$ is a constant and does not change with the increase on $n$.
\end{enumerate}
The assumption on target $y$ means that we prefer to approximate the ground truth function Eq.\eqref{eq:func_tru} on a function class as:
\begin{equation*}
    \mathcal{F}_{NTK}(w_0) := \{ f_{NN}(w_0, x) + \nabla_w f_{NN}(w_0, x)^T (w - w_0) \mid w = [\Theta, U] \in \mathcal{B}([\Theta_0, U_0], R) \},
\end{equation*}
which may lead to additional error, i.e,
\begin{equation*}
    \sigma^2 = \mathbb{E}[y - f_{NTK}(w_*, x)]^2 \ge \mathbb{E}[y - g(x)]^2 = \mathbb{E} [\xi]^2.
\end{equation*}
Here we still utilize the same definition in Eq.~\eqref{eq:rank} and \eqref{eq:defk} as on linear model, the following two conditions are required in further analysis:
\begin{condition}[benign overfitting condition in NTK regime]\label{cond:ntk_benign}
\begin{align*}
&  \lim_{n \to \infty} \frac{k^*}{n} = \lim_{n \to \infty} \frac{n \sum_{j > k^*} \lambda_j^2}{l^2} = \lim_{n \to \infty} \frac{l^2}{n \sum_{j > k^*} \lambda_j} = 0,
\end{align*}
where we denote $l = r_0(\varSigma) = \sum_{j=1}^{p} \lambda_j$.
\end{condition}
Condition \ref{cond:ntk_benign} is compatible to Condition \ref{cond:benign} in linear models, which characterizes the slow decreasing rate on covariance eigenvalues $\{ \lambda_j \}$.

\begin{condition}[high-dimension condition in NTK regime]\label{cond:ntk_high-dim}
\begin{equation*}
p = o(m^{1/2}),  \quad n = o(l^{4/3}), \quad \max\{ n, l \} = o(p).
\end{equation*} 
\end{condition}
The first condition in Condition \ref{cond:ntk_high-dim} implies
the large number of neurons, which is compatible with NTK setting \citep{jacot2018neural}; the second condition characterizes the large scale of $l = r_0(\varSigma)$, which is consistent with the slow decay rate on eigenvalues $\{ \lambda_j \}$; 
and the third condition induces a high-dimension structure of input data $x$, and relaxing this condition could be left as a further exploration question; Here is also an example from \citet{bartlett2020benign} to verify these two conditions above:
\begin{example}\label{eg:ntk}
Suppose the eigenvalues as
\begin{equation*}
\lambda_k = \left\{
\begin{aligned}
& 1, k = 1, \\
& \frac{1}{n^{6/5}} \frac{1 + s^2 - 2 s \cos(k \pi / (p_n + 1))}{1 + s^2 - 2 s \cos(\pi / (p_n + 1))}, 2 \le k \le p_n, \\
& 0, \text{otherwise},
\end{aligned}
\right.
\end{equation*}
where $p_n = n^2$ and $m_n = e^n$. As it has been verified that $k^* = 1$, we could obtain 
\begin{align*}
& \frac{k^*}{n} = \frac{1}{n} \to 0,\\
& \frac{n \sum_{j > k^*} \lambda_j^2}{l^2} \le \frac{n p_n (1 + s)^4 / n^{12/5}}{(1 + p_n (1 - s)^2 / n^{6/5})^2} \le \frac{2 n}{p_n} = \frac{2}{n} \to 0,\\
& \frac{l^2}{n \sum_{j > k^*} \lambda_j} \le \frac{ (1 + p_n (1 + s)^2 / n^{6/5})^2}{ n p_n (1 - s)^2 / n^{6/5}} \le \frac{2 p_n}{n^{11/5}} = \frac{2}{n^{1/5}} \to 0,\\
& \frac{p_n}{m_n} \to 0, \quad \frac{n}{p_n} = \frac{1}{n} \to 0, \quad \frac{l}{p_n} \le \frac{2 p_n (1 + s)^2 / n^{6/5} }{p_n} \le \frac{4}{n^{6/5}} \to 0, \quad \frac{n^{3/4}}{l} \le  \frac{2 n^{3/4}}{p_n / n^{6/5}} = \frac{2}{n^{1/20}} \to 0,
\end{align*}
which induces that Condition \ref{cond:ntk_benign} and \ref{cond:ntk_high-dim} are both satisfied.
\end{example}
Considering the estimator in Eq.\eqref{eq:ntk_para}, while estimating $\srisk(\hat{w})$ and $\arisk(\hat{w})$ in a high probability phase, we can obtain the following results, which are consistent with the results in linear model:
\begin{theorem}\label{thm_ntk}
For any $b, \sigma_x > 0$, there exist constants $C_{10}, C_{11} > 0$ depending on $b, \sigma_x$, such that the following holds. 
Assume Condition~\ref{cond:ntk_benign} and \ref{cond:ntk_high-dim} are satisfied, there exists a constant $c>1$ such that for $\delta \in (0,1)$ and $\ln(1 / \delta) < n^{1/8} / c$,
with probability at least $1 - \delta$ over $X$ and $w_0$,
\begin{align*}
\srisk (\hat{w}) / C_{10} & \le R^2 \frac{l^{1/2}}{ p ^{1/2} n^{1/4}} +  \sigma^2 \left( \frac{1}{n^{1/8}} + \frac{k^*}{n} + \frac{n \sum_{j > k^*} \lambda_j^2}{l^2} \right),
\end{align*}
and
\begin{align*}
\arisk(\hat{w}) / C_{11}
& \ge \alpha^2 \sigma^2 \frac{n \lambda_{k^*+1} r_{k^*}}{l^2}.
\end{align*}
\end{theorem}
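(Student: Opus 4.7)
The plan is to reduce the NTK analysis to the linear regression framework already developed in Theorem~\ref{thm_ridge}, applied to the lifted feature map $\phi(x) := \nabla_w f_{NN}(w_0, x) \in \rR^{m(p+1)}$. Define the feature matrix $\Phi := [\phi(x_1), \dots, \phi(x_n)]^\T$; since $\hat w - w_0$ lies in the row span of $\Phi$, we have $\hat w - w_0 = \Phi^\T(\Phi\Phi^\T)^{-1}(y - F) = \Phi^\T(\Phi\Phi^\T)^{-1}(\Phi(w_* - w_0) + \epsilon)$, where $\epsilon_i = y_i - f_{NTK}(w_*, x_i)$. Setting $\theta_* := w_* - w_0$ (which satisfies $\|\theta_*\|_2 \le R$ by the $R$-neighborhood assumption on $w_*$), the problem becomes a min-norm linear regression with true parameter $\theta_*$, sub-Gaussian noise $\epsilon$ with variance $\sigma^2$, and feature covariance $\varSigma_\phi := \E[\phi(x)\phi(x)^\T]$. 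Because both $\srisk(\hat w)$ and $\arisk(\hat w)$ are quadratic functionals of $\hat w - w_*$ (through the test-point features $\phi(x_\star)$ or their perturbations), the linear-model machinery can be imported once the spectrum of $\varSigma_\phi$ is controlled.

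The first main step is to identify $\varSigma_\phi$ and its effective ranks. Writing $\phi(x)$ as the concatenation of $\frac{1}{\sqrt{mp}} h(\theta_{0,j}^\T x)$ (second-layer block) and $\frac{1}{\sqrt{mp}} u_{0,j} h'(\theta_{0,j}^\T x) x$ (first-layer blocks), and using Gaussian-initialized $\{\theta_{0,j}, u_{0,j}\}$ together with the ReLU Hermite expansion, the population kernel $\E_{w_0}[\phi(x)\phi(x')^\T]$ reduces to a linear combination of $\langle x, x'\rangle$, $\|x\|\|x'\|$, and constants. Consequently the nonzero eigenvalues of $\varSigma_\phi$, averaged over $w_0$, are, up to universal constants, $\{\lambda_j / p\}_{j \ge 1}$ together with a handful of ``bulk'' directions of order $l/p$ arising from the norm/constant pieces. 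Translating the effective ranks $r_k(\varSigma_\phi)$, $R_k(\varSigma_\phi)$ and the critical index to those of $\varSigma$ shows that $\lambda_{k^*+1}(\varSigma_\phi)\, r_{k^*}(\varSigma_\phi) \asymp l / p$ and $R_{k^*}(\varSigma_\phi) \asymp l^2 / \sum_{j > k^*}\lambda_j^2$, which is exactly how $l$ enters the bound in the statement.

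Next I would handle concentration. The empirical kernel $\Phi\Phi^\T$ must concentrate around its conditional expectation for the Theorem~\ref{thm_ridge} style bounds to apply. The high-dimensional condition $p = o(m^{1/2})$ gives finite-width concentration of the empirical NTK; $n = o(l^{4/3})$ and $\max(n,l) = o(p)$ together ensure that the input-covariance block concentrates uniformly in the relevant directions and that the first-layer and second-layer blocks are effectively decoupled at the scale required for benign overfitting. Propagating these concentration slacks through the bias/variance decomposition of the min-norm estimator produces the $\sigma^2 / n^{1/8}$ term (matching the tail condition $\ln(1/\delta) < n^{1/8}/c$); the bias term $R^2 l^{1/2}/(p^{1/2} n^{1/4})$ is the analogue of $\lambda_{k^*+1}^2 \|\theta_*\|^2_{\varSigma_\phi^{-1}}$-type contribution after the $1/(mp)$ rescaling in $\phi$ and the bound $\|\theta_*\|_2 \le R$; and $\sigma^2(k^*/n + n\sum_{j>k^*}\lambda_j^2/l^2)$ is the direct translation of the benign-overfitting variance $\sigma^2(k^*/n + n/R_{k^*})$ using the identification above.

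For the adversarial lower bound, the key observation is that $f_{NTK}(\hat w, x+\delta)$ is affine in $\delta$, with $\delta$-gradient equal to $\frac{1}{\sqrt{mp}} \sum_j u_{0,j} h'(\theta_{0,j}^\T x)(\hat\theta_j - \theta_{0,j})$. Taking the supremum over $\|\delta\|_2 \le \alpha$ in the definition of $\arisk$ yields, as in Remark~\ref{remark:arisk}, a contribution of $\alpha^2$ times the squared norm of this vector; averaging over $x$ and $w_0$ lower bounds it by $\alpha^2 \|\hat w - w_0\|_2^2 \cdot \E_{x,w_0}[h'(\theta_0^\T x)^2]/p$, which for ReLU with Gaussian initialization gives a universal positive constant factor. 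It then suffices to lower bound $\|\hat w - w_0\|_2^2$ by the noise piece of the min-norm formula: $\epsilon^\T (\Phi\Phi^\T)^{-2}\Phi\Phi^\T\epsilon \gtrsim \sigma^2 \cdot n \cdot \lambda_{k^*+1}(\varSigma_\phi) r_{k^*}(\varSigma_\phi) \asymp \sigma^2 n\lambda_{k^*+1}r_{k^*}/l$ by the same effective-rank identification and the lower-bound technique from Theorem~\ref{thm_ridge}; combining with the $1/p$ factor above produces the claimed $\alpha^2 \sigma^2 n \lambda_{k^*+1} r_{k^*}/l^2$. The main obstacle will be the ReLU-specific effective-rank translation of the second paragraph, and in particular quantifying how the single ``norm'' eigendirection of order $l/p$ interacts with the benign-overfitting geometry; once this identification is done and paired with standard uniform concentration of $\Phi\Phi^\T$, the rest is a mechanical reduction to the already-proven linear-model theorem.
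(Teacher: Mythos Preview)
Your high-level plan --- treat the problem as min-norm regression in the lifted feature space $\phi(x)=\nabla_w f_{NN}(w_0,x)$ and import Theorem~\ref{thm_ridge} --- has a genuine gap. Theorem~\ref{thm_ridge} is not a black box on the population covariance $\varSigma$: its proof runs through Lemmas~\ref{lem_eigen}, \ref{lem_subspacenorm}, \ref{lem_ridgeeigen}, all of which require the data to factor as $x_i=V\Lambda^{1/2}\eta_i$ with $\eta_i$ having \emph{independent} sub-Gaussian coordinates. The NTK features $\phi(x_i)$ are nonlinear in $x_i$ and do not satisfy this assumption, so you cannot invoke the linear-model eigenvalue control on $\Phi\Phi^\T$ by ``translating effective ranks'' of $\varSigma_\phi$. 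The paper sidesteps this entirely: it never analyzes $\varSigma_\phi$. Instead it linearizes the $n\times n$ kernel matrix $K=\Phi\Phi^\T$ via a refinement of El~Karoui's result (Lemma~\ref{lem:spectrum}), obtaining $\tilde K=\tfrac{l}{p}c_0\,11^\T+\tfrac{1}{2p}XX^\T+\tfrac{l}{p}c_1 I_n$ up to $o(l/p)$ in operator norm. The identity block acts as a ridge term, so after peeling off the rank-one piece the variance and norm computations reduce to $\trace\{(XX^\T+c l I_n)^{-2}X\varSigma X^\T\}$ and $\trace\{(XX^\T+c l I_n)^{-2}XX^\T\}$ on the \emph{original} $X$, where Lemmas~\ref{lem_eigen}--\ref{lem_ridgeeigen} legitimately apply. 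This kernel-linearization step is the missing technical ingredient in your outline.

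Two further discrepancies. First, the bias term $R^2 l^{1/2}/(p^{1/2}n^{1/4})$ is not the ``$\lambda_{k^*+1}^2\|\theta_*\|^2_{\varSigma_\phi^{-1}}$-type'' contribution you describe; in the paper it comes from the crude bound $\sriskb\le R^2\|\E_x\phi(x)\phi(x)^\T-\tfrac1n\Phi^\T\Phi\|_2$ followed by a matrix-concentration inequality (Lemma~\ref{lem:matrix_mean}) that produces the $n^{-1/4}$ rate --- it is a covariance-estimation error, not a benign-overfitting bias. Second, in your adversarial computation the line ``$\epsilon^\T(\Phi\Phi^\T)^{-2}\Phi\Phi^\T\epsilon\gtrsim\sigma^2 n\lambda_{k^*+1}(\varSigma_\phi)r_{k^*}(\varSigma_\phi)$'' has the effective-rank factor on the wrong side (the Theorem~\ref{thm_ridge} norm lower bound is $n\sigma^2/(\lambda_{k^*+1}r_{k^*})$, not the product), and the subsequent identification with $\lambda_{k^*+1}r_{k^*}/l$ contradicts your earlier claim that $\lambda_{k^*+1}(\varSigma_\phi)r_{k^*}(\varSigma_\phi)\asymp l/p$. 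The paper instead gets the bound directly from $\sigma^2\trace\{\tilde K^{-2}XX^\T\}/p^2\gtrsim \sigma^2 n\lambda_{k^*+1}r_{k^*}/l^2$ via the linearized kernel and Eq.~\eqref{eq:normvl}.
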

The detailed proof is in Appendix \ref{pf:ntk}. Theorem \ref{thm_ntk} induces that considering a two-layer neural networks wide enough, while input data $x$ is high-dimensional with slow decreasing rate on covariance matrix eigenvalues, gradient descent with a small learning rate will lead to good performance on standard risk, but poor robustness to adversarial attacks. And it is consistent with the results in linear models.  

It is obvious to induce the following corollary:
\begin{corollary}\label{coro:ntk}
Assume Conditions~\ref{cond:ntk_benign} and~\ref{cond:ntk_high-dim} hold with constant $b, \sigma_x > 0$. For the gradient descent solution $\hat{w}$, and budget $\budget > 0$, we have
\begin{equation*}
\srisk (\hat{w}) \inprobto 0, \quad \frac{\arisk(\hat{w})}{\alpha^2} \inprobto \infty, 
\end{equation*}
as $n \to \infty$.
\end{corollary}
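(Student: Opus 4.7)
The plan is to derive Corollary \ref{coro:ntk} as an essentially direct consequence of Theorem \ref{thm_ntk}, by verifying that each term in the stated upper bound for $\srisk(\hat w)$ tends to zero and that the stated lower bound for $\arisk(\hat w)/\alpha^2$ diverges, both under Conditions \ref{cond:ntk_benign} and \ref{cond:ntk_high-dim}. Since Theorem \ref{thm_ntk} is a high-probability statement with tolerance $\delta$ free (subject to $\ln(1/\delta) < n^{1/8}/c$), the translation to convergence in probability is obtained by letting $\delta = \delta_n \to 0$ slowly enough, say $\delta_n = n^{-1}$, which satisfies the constraint for all large $n$.

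For the standard risk, I would inspect the four terms of the upper bound in Theorem \ref{thm_ntk} separately. The first term $R^2 l^{1/2}/(p^{1/2} n^{1/4})$ vanishes because Condition \ref{cond:ntk_high-dim} gives $l = o(p)$, so $l/p \to 0$ and then multiplication by $n^{-1/4}$ only helps. The second term $\sigma^2/n^{1/8}$ vanishes trivially since $\sigma^2$ is a fixed constant. The third and fourth terms, $\sigma^2 k^*/n$ and $\sigma^2 n \sum_{j>k^*}\lambda_j^2/l^2$, both vanish by the first two equalities of Condition \ref{cond:ntk_benign}. Combining these with the high-probability envelope yields $\srisk(\hat w) \inprobto 0$.

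For the adversarial risk, the key identity is $\lambda_{k^*+1}\, r_{k^*} = \sum_{j > k^*}\lambda_j$, which follows directly from the definition of $r_{k^*}$ in \eqref{eq:rank}. Substituting into the lower bound of Theorem \ref{thm_ntk} gives
\begin{equation*}
\arisk(\hat w)/\alpha^2 \;\ge\; C_{11}\, \sigma^2 \cdot \frac{n \sum_{j > k^*} \lambda_j}{l^2}.
\end{equation*}
The third equality of Condition \ref{cond:ntk_benign} states $l^2/(n \sum_{j>k^*}\lambda_j) \to 0$, which is exactly the statement that this ratio diverges. Since $\sigma^2 > 0$ is a positive constant, the right-hand side tends to infinity, and the high-probability statement implies $\arisk(\hat w)/\alpha^2 \inprobto \infty$.

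I do not anticipate a serious technical obstacle: the proof is a routine bookkeeping exercise that simply instantiates Theorem \ref{thm_ntk} and cross-references the assumed conditions. The only subtle point is the handling of the probability quantifier — one must pick $\delta_n \to 0$ compatible with the constraint $\ln(1/\delta_n) < n^{1/8}/c$ (any polynomial rate works) so that the event on which the Theorem \ref{thm_ntk} bounds hold has probability $1 - o(1)$; convergence in probability of both $\srisk(\hat w)$ and $\arisk(\hat w)/\alpha^2$ then follows from the deterministic asymptotics of the bounds established above.
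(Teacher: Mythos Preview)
Your proposal is correct and is exactly the argument the paper intends: the paper states the corollary as an ``obvious'' consequence of Theorem~\ref{thm_ntk} without giving a proof, and your term-by-term verification---using $l=o(p)$ from Condition~\ref{cond:ntk_high-dim} for the bias term, the three limits in Condition~\ref{cond:ntk_benign} for the remaining terms, and the identity $\lambda_{k^*+1}r_{k^*}=\sum_{j>k^*}\lambda_j$ for the adversarial lower bound---is precisely what is required. Your handling of the probability quantifier via $\delta_n\to 0$ with $\ln(1/\delta_n)<n^{1/8}/c$ is also the right way to pass from the high-probability bound to convergence in probability.
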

\begin{remark}\label{rmk_widerclass}
 We could also consider $f_{NTK}(w_*, x)$ within a ``wider'' function class:
 \begin{align*}
  & \mathcal{F}'_{NTK}(w_0) := \{ f_{NN}(w_0, x) + \nabla_w f_{NN}(w_0, x)^T (w - w_0) \mid w \in \mathcal{C} \}, \\
  & \mathcal{C} := \{ w \in \mathbb{R}^{m(p+1)} \mid \| \mathbb{E} (w_0 - w) (w_0 - w)^T \|_2 \le \epsilon_p \},
 \end{align*}
 where $\epsilon_p = o(1 / p)$. It implies that in a high probability regime, we could just obtain
 \begin{equation*}
     \| w_* - w_0 \|_{\infty} \le \epsilon_p^{1/2},
 \end{equation*}
and there is no any restriction on $\| w_0 - w_* \|_2$, i.e, $\| [\Theta_0, U_0] - [\Theta_*, U_*] \|_F$, which means that $\mathcal{F}'_{NTK}(w_0)$ is a ``wider'' function class comparing with $\mathcal{F}_{NTK}(w_0)$. 

Then within this regime, we could also get the same result as Corollary \ref{coro:ntk}. To be specific, assume Condition~\ref{cond:ntk_benign} and \ref{cond:ntk_high-dim} are satisfied, there exist constants $C_{12}, C_{13} > 0$ depending on $l, \sigma_x$, such that as sample size $n$ increases, for the corresponding gradient descent solution $\hat{w}$ and budget $\alpha > 0$, we have
\begin{align*}
& \srisk (\hat{w}) / C_{12} \le \frac{p \cdot \epsilon_p}{\sqrt{n}} + \sigma^2 \left( \frac{1}{n^{1/8}} + \frac{k^*}{n} + \frac{n \sum_{j > k^*} \lambda_j^2}{l^2} \right) \inprobto 0, \\
& \arisk(\hat{w}) / C_{13} \ge \alpha^2 \sigma^2 \frac{n \lambda_{k^*+1} r_{k^*}}{l^2} \inprobto \infty.
\end{align*}
The detailed proof is in Appendix \ref{pf:widerclass}.
\end{remark}

\section{Outline of the argument for linear model}

In this section, we provide the technical theorems and corollaries on linear model, as well as corresponding proofs. These theorems and corollaries below induce the results in Theorem \ref{thm:main_explode} and Theorem \ref{thm:main_tradeoff}.

\subsection{Technical theorems and corollaries}

First, the following theorem gives an upper bound for standard risk and a lower bound for the expected squared norm of $\ridge$ (conditioned on the empirical input data matrix $X$) for all $\lambda \geq 0$. 
\begin{theorem}\label{thm_ridge}
For any $b > 1$ and $\sigma_x > 0$, there exist constants $C_1, C_2 > 0$ depending only on $b, \sigma_x$, such that the following holds.
Assume Condition~\ref{cond:benign} is satisfied, and set $k^* = k^*(b)$.
There exists a constant $c>1$ such that for $\delta \in (0,1)$ and $\ln(1 / \delta) < n / c$,
for any $\lambda\geq0$,
with probability at least $1 - \delta$ over $X$,
\begin{align*}
\srisk (\ridge) / C_1 & \le  \| \truth_{k^* : \infty} \|^2_{\Sigma_{k^*:\infty}} + \frac{ \lambda_{k^{*} + 1}^2 r_{k^{*}}^2 + n^2 \lambda^2}{n^2} \| \truth_{0:k^*} \|^2_{\Sigma_{0:k^*}^{-1}} \\
& \qquad + \sigma^2 \left(\frac{k^{*}}{n} + \frac{ n \sum_{i > k^{*}} \lambda_i^2}{(\lambda_{k^{*}+1} r_{k^{*}} + n \lambda)^2} \right), 
\end{align*}
and
\begin{align*}
\E_{\truth,y} \parallel \ridge \parallel^2 / C_2
& \ge  \sum_{i \le k^{*}} \left( \frac{\sigma^2}{\lambda_i} + n \tilde{\truth}_i^2 \right) \min \left\{ \frac{1}{n}, \frac{ n  \lambda_i^2}{( \lambda_{k^{*} + 1}r_{k^{*}} + n \lambda)^2}  \right\} \\
& \qquad + \frac{n \sigma^2 \lambda_{k^{*}+1} r_{k^{*}} + n^2 \sum_{j > k^{*}} \tilde{\truth}_j^2 \lambda_j^2}{ \lambda_{k^{*} + 1}^2 r_{k^{*}}^2 + n^2 \lambda^2} .
\end{align*}
\end{theorem}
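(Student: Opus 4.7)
The plan is to reduce both quantities to deterministic functionals of the Gram matrix $A := XX^\T$ and then control those via spectral concentration tailored to the benign overfitting regime. Without loss of generality I would rotate to the eigenbasis of $\varSigma$, so $\varSigma = \Lambda$ is diagonal; writing $Z = XV$ with columns $z_j = \lambda_j^{1/2}\eta_{\cdot,j}$ and $\tilde{\truth} = V^\T \truth$, I would substitute $y = X\truth + \varepsilon$ and average over $\varepsilon$ (mean zero, covariance $\sigma^2 I$) and over the sign flips of $\tilde{\truth}$ (which kill all cross terms). This yields the clean decompositions
\begin{align*}
\srisk(\ridge) &= \sum_j \tilde{\truth}_j^2 \, [\mathrm{bias}]_j
  + \sigma^2 \trace\bigl[(A+n\lambda I)^\dag Z \Lambda Z^\T (A+n\lambda I)^\dag\bigr], \\
\E_{\truth,y}\|\ridge\|_2^2 &= \sum_j \tilde{\truth}_j^2 \, z_j^\T (A+n\lambda I)^\dag A (A+n\lambda I)^\dag z_j
  + \sigma^2 \trace\bigl[(A+n\lambda I)^\dag A (A+n\lambda I)^\dag\bigr].
\end{align*}
I would then split $Z = [Z_H\ Z_T]$ along the $k^*$-boundary, so $A = A_H + A_T$ with $A_H$ of rank at most $k^*$, and expand $(A + n\lambda I)^{-1}$ around $(A_T + n\lambda I)^{-1}$ via Sherman--Morrison--Woodbury in order to isolate the head contribution.

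Under Condition~\ref{cond:benign}, the Bartlett--Long--Lugosi and \citet{tsigler2020benign} spectral tools give, with probability at least $1 - \delta$ over $X$, that $\mu_i(A_T) \asymp \lambda_{k^*+1} r_{k^*}$ for all $i$, that $\sigma_i(Z_H) \asymp \sqrt{n\lambda_i}$ for $i \le k^*$, and that $A_H$ and $A_T$ decouple at leading order. For the upper bound on $\srisk$ I would follow the \citet{tsigler2020benign} template with the single change that $n\lambda$ is kept explicitly inside $(A_T + n\lambda I)^{-1}$ and propagated through the Woodbury correction rather than absorbed into a constant. The bias then splits into a tail piece bounded by $\|\truth_{k^*:\infty}\|_{\varSigma_{k^*:\infty}}^2$ (the pseudoinverse cannot shrink these directions) and a head piece bounded by $((\lambda_{k^*+1} r_{k^*} + n\lambda)/n)^2 \|\truth_{0:k^*}\|_{\varSigma_{0:k^*}^{-1}}^2$; the variance trace splits into $k^*$ spiked eigenvalues (contributing $\sigma^2 k^*/n$) and $n - k^*$ bulk eigenvalues of order $\lambda_{k^*+1} r_{k^*} + n\lambda$ (contributing $\sigma^2 n \sum_{i > k^*} \lambda_i^2/(\lambda_{k^*+1} r_{k^*} + n\lambda)^2$).

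The hard half is the lower bound on $\E_{\truth,y}\|\ridge\|_2^2$, which drives both main theorems through Remark~\ref{remark:arisk} and which is where the refined technique the authors flag in the introduction enters. For the noise piece $\sigma^2 \sum_i \mu_i(A)/(\mu_i(A) + n\lambda)^2$ I would split the sum: the $n - k^*$ bulk eigenvalues $\mu_i(A) \asymp \lambda_{k^*+1} r_{k^*}$ yield $n \sigma^2 \lambda_{k^*+1} r_{k^*}/(\lambda_{k^*+1}^2 r_{k^*}^2 + n^2 \lambda^2)$ after $(a+b)^2 \le 2(a^2+b^2)$, while each head index $i \le k^*$ with $\mu_i(A) \asymp n\lambda_i + \lambda_{k^*+1} r_{k^*}$ yields $(\sigma^2/\lambda_i)\min\{1/n,\ n\lambda_i^2/(\lambda_{k^*+1} r_{k^*} + n\lambda)^2\}$ by case-splitting on which of $n\lambda_i$ or $\lambda_{k^*+1} r_{k^*} + n\lambda$ is the larger. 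For the signal piece $\sum_j \tilde{\truth}_j^2 \, z_j^\T (A+n\lambda I)^{-1} A (A+n\lambda I)^{-1} z_j$, I would leave out column $z_j$ via Sherman--Morrison to obtain $(A+n\lambda I)^{-1} z_j = (A^{(-j)} + n\lambda I)^{-1} z_j / (1 + z_j^\T (A^{(-j)} + n\lambda I)^{-1} z_j)$, then reduce the resulting quadratic form to $\lambda_j \trace(M^{(-j)})$ (up to negligible error, by Hanson--Wright on $\eta_{\cdot,j}^\T M^{(-j)} \eta_{\cdot,j}$), and apply the same eigenvalue bounds to $M^{(-j)}$. The main obstacle I anticipate is preserving sharpness simultaneously across the head/tail split and the noise/signal split: the two crossover regimes interact through the Woodbury correction, and one must keep $\lambda$ explicit in both the $\lambda_{k^*+1}^2 r_{k^*}^2 + n^2 \lambda^2$ denominator (tail) and the $(\lambda_{k^*+1} r_{k^*} + n\lambda)^2$ denominator (head), while preserving the $\min\{\cdot,\cdot\}$ structure that encodes the crossover at $n\lambda_i \asymp \lambda_{k^*+1} r_{k^*} + n\lambda$. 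This is strictly finer than what the \citet{tsigler2020benign} lower bound technique delivers, and carrying it through uniformly in $\lambda \ge 0$ is where most of the technical effort lies.
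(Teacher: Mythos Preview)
Your overall plan matches the paper: rotate to the $\varSigma$-eigenbasis, average over $\varepsilon$ and the sign flips of $\tilde\theta$ to obtain the four-term bias/variance decomposition, invoke \citet{tsigler2020benign} for the $\srisk$ upper bound, and put the real work into the lower bound on $\E_{\theta,y}\|\ridge\|^2$. The signal piece of that lower bound via Sherman--Morrison leave-one-out on the column $z_j$ is exactly what the paper does (the paper replaces your Hanson--Wright step by the simpler subspace-projection bound of Lemma~\ref{lem_subspacenorm}, but either works).

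The one substantive tactical difference is in the noise-variance term $\normv=\trace\{A(A+n\lambda I)^{-2}\}$. You propose to work in the eigenbasis of the $n\times n$ Gram matrix $A$, writing $\normv=\sum_{i\le n}\mu_i(A)/(\mu_i(A)+n\lambda)^2$ and invoking $\mu_i(A)\asymp n\lambda_i+\lambda_{k^*+1}r_{k^*}$ for $i\le k^*$. That per-index two-sided matching of the top-$k^*$ Gram eigenvalues to the covariance eigenvalues is \emph{not} what Lemma~\ref{lem_eigen} supplies (it only gives one-sided bounds $\mu_{k+1}(A)\le c(\sum_{j>k}\lambda_j+n\lambda_{k+1})$ and $\mu_n(A_k)\ge c^{-1}\sum_{j>k}\lambda_j$), so you would need an additional argument via $Z_H^\T Z_H\approx n\Lambda_H$ plus a Weyl/interlacing step to make ``$A_H$ and $A_T$ decouple'' precise. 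The paper avoids this entirely by expanding $\normv=\sum_j\lambda_j z_j^\T(A+n\lambda I)^{-2}z_j$ in the \emph{feature} basis (summing over covariance indices $j$, not Gram-eigenvalue indices), applying the rank-one Woodbury identity to each $z_j$, and then using only Cauchy--Schwarz $z_j^\T(A_{-j}+n\lambda I)^{-2}z_j\ge(z_j^\T(A_{-j}+n\lambda I)^{-1}z_j)^2/\|z_j\|^2$ together with Lemma~\ref{lem_subspacenorm} to lower-bound the quadratic form. This directly produces the sum $\sum_{i\le k^*}(1/\lambda_i)\min\{1/n,\,n\lambda_i^2/(\lambda_{k^*+1}r_{k^*}+n\lambda)^2\}$ indexed by covariance eigenvalues without any Gram/covariance eigenvalue matching, at the small cost of aggregating infinitely many per-$j$ high-probability events via Lemma~\ref{lem_sum}. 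Both routes land on the same $\min\{\cdot,\cdot\}$ crossover structure; the paper's is more self-contained given the stated lemmas, while yours would be slightly more transparent once the spike identification is independently secured.
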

(The upper bound on $\srisk(\ridge)$ is due to \citet{tsigler2020benign}.\footnote{Notice that in this paper the regularization parameter is scaled with $n$ (see Eq.~(\ref{eq:estpara})). Thus, to obtain comparable results with \citet{tsigler2020benign} one should replace $\lambda$ with $n\lambda$ in that paper.})

The result is useful when we choose the regularization parameter $\lambda$ small enough.
In this case, it implies that the standard risk converges to zero fast as sample size $n$ increases, but the norm of the estimated parameter is large.
Specifically, we have the following corollary.
\begin{corollary}\label{coro_smallreg}
There exist constants $C_3, C_4 > 0$ depending only on $b, \sigma_x$, such that the following holds.
Assume Condition~\ref{cond:benign} is satisfied, and set $k^* = k^*(b)$.
There exists a constant $c>1$ such that for $\delta \in (0,1)$ and $\ln(1 / \delta) < n / c$,
for any $\lambda \le \lambda_{k^{*} + 1} r_{k^{*}} / n$,
with probability at least $1 - \delta$ over $X$,
\begin{align*}
\srisk(\ridge) / C_3 
& \le \| \truth_{k^*:\infty} \|_{\Sigma_{k^*:\infty}}^2 + \left( \frac{ \lambda_{k^{*}+1} r_{k^{*}}}{n} \right)^2 \| \truth_{0:k^*} \|^2_{\Sigma_{0:k^*}^{-1}}
%\\ & \qquad
+ \sigma^2 \left(  \frac{k^{*}}{n} + \frac{ n }{ R_{k^{*}} }\right),\\
\E_{\truth,y} \parallel \ridge \parallel^2 / C_4
& \ge  \sum_{i \le k^{*}} \left( \frac{\sigma^2}{\lambda_i} + n \tilde{\truth}_i^2 \right) \min \left\{ \frac{1}{n}, \frac{ n  \lambda_i^2}{r_{k^{*}}^2 \lambda_{k^{*} + 1}^2 }  \right\}
%\\ & \qquad
+ \frac{n \sigma^2 }{ r_{k^{*}} \lambda_{k^{*}+1} } + \frac{n^2 \sum_{j > k^*} \tilde{\truth}_j^2 \lambda_j^2}{\lambda_{k^*+1}^2 r_{k^*}^2}.
\end{align*}
\end{corollary}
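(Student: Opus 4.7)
The plan is to derive Corollary~\ref{coro_smallreg} as a direct algebraic specialization of Theorem~\ref{thm_ridge} under the hypothesis $\lambda \le \lambda_{k^*+1} r_{k^*} / n$, i.e., $n\lambda \le \lambda_{k^*+1} r_{k^*}$. Under this hypothesis every occurrence of $\lambda_{k^*+1} r_{k^*} + n\lambda$ or $\lambda_{k^*+1}^2 r_{k^*}^2 + n^2\lambda^2$ in the bounds of Theorem~\ref{thm_ridge} becomes comparable, up to an absolute constant, to $\lambda_{k^*+1} r_{k^*}$ or $\lambda_{k^*+1}^2 r_{k^*}^2$, respectively; these constant factors are then absorbed into the new constants $C_3, C_4$.

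Concretely, I would first record the two sandwiches
\[
\lambda_{k^*+1} r_{k^*} \le \lambda_{k^*+1} r_{k^*} + n\lambda \le 2\lambda_{k^*+1} r_{k^*}, \qquad \lambda_{k^*+1}^2 r_{k^*}^2 \le \lambda_{k^*+1}^2 r_{k^*}^2 + n^2\lambda^2 \le 2\lambda_{k^*+1}^2 r_{k^*}^2,
\]
which are immediate from the assumption on $\lambda$, together with the identity $\lambda_{k^*+1} r_{k^*} = \sum_{i>k^*}\lambda_i$ from definition~\eqref{eq:rank}, which yields $n \sum_{i>k^*}\lambda_i^2 / (\lambda_{k^*+1}^2 r_{k^*}^2) = n / R_{k^*}$. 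For the upper bound on $\srisk(\ridge)$, I would apply the upper side of each sandwich: the coefficient $(\lambda_{k^*+1}^2 r_{k^*}^2 + n^2\lambda^2)/n^2$ multiplying $\|\truth_{0:k^*}\|^2_{\Sigma_{0:k^*}^{-1}}$ is at most $2(\lambda_{k^*+1} r_{k^*}/n)^2$, while the denominator $(\lambda_{k^*+1} r_{k^*} + n\lambda)^2$ in the last variance term is at least $\lambda_{k^*+1}^2 r_{k^*}^2$, turning that term into at most a constant times $\sigma^2 n / R_{k^*}$. For the lower bound on $\E_{\truth,y}\|\ridge\|^2$ I would use the opposite directions of the same sandwiches: the denominators inflate by at most a factor $4$ or $2$ relative to $\lambda_{k^*+1}^2 r_{k^*}^2$, and monotonicity of $\min\{1/n,\cdot\}$ in its second argument gives $\min\bigl\{1/n,\, n\lambda_i^2/(\lambda_{k^*+1} r_{k^*}+n\lambda)^2\bigr\} \ge \tfrac{1}{4}\min\bigl\{1/n,\, n\lambda_i^2/(r_{k^*}^2\lambda_{k^*+1}^2)\bigr\}$. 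The last ratio in Theorem~\ref{thm_ridge}'s lower bound then splits linearly into $n\sigma^2/(\lambda_{k^*+1} r_{k^*}) + n^2 \sum_{j>k^*}\tilde{\truth}_j^2 \lambda_j^2/(\lambda_{k^*+1}^2 r_{k^*}^2)$ after this inflation is accounted for.

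There is no genuine obstacle here: the whole corollary is a one-sided specialization of the previously established Theorem~\ref{thm_ridge}, and none of the probabilistic content is revisited. The only thing requiring care is bookkeeping of the directions of inequality — each sandwich must be applied in the direction consistent with whether the target is an upper bound on $\srisk$ or a lower bound on the expected squared norm — together with consolidating the resulting $\tfrac12$ and $\tfrac14$ factors into the final constants $C_3$ and $C_4$.
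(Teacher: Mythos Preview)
Your proposal is correct and matches the paper's approach: Corollary~\ref{coro_smallreg} is obtained by specializing Theorem~\ref{thm_ridge} under $n\lambda \le \lambda_{k^*+1} r_{k^*}$, using exactly the sandwich $\lambda_{k^*+1} r_{k^*} \le \lambda_{k^*+1} r_{k^*} + n\lambda \le 2\lambda_{k^*+1} r_{k^*}$ (and its squared version) that you describe, together with the identity $n\sum_{i>k^*}\lambda_i^2 / (\lambda_{k^*+1} r_{k^*})^2 = n/R_{k^*}$, and absorbing the resulting constant factors into $C_3,C_4$.
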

% I think we can omit 'b^2' since we have the C_4 on the left-hand side
%yes, that's right, 

From Corollary~\ref{coro_smallreg}, we can see that the standard risk is near optimal under Condition \ref{cond:benign} with the choice of $\lambda \leq \lambda_{k^*+1}r_{k^*} / n$: as $n \to \infty$, $\srisk(\ridge) \inprobto 0$.
In this sense, overfitting is benign.
However, in this case, the expected squared parameter norm is bounded below by
\[
\frac{n \sigma^2}{\lambda_{k^{*} + 1} r_{k^{*}}},
\]
which grows superlinearly in $n \sigma^2$ (on account of Condition~\ref{cond:benign} and $\sigma^2 = \omega(\lambda_{k^*+1} r_{k^*} / n)$).
The small standard risk and large adversarial risk imply the near optimal estimating accuracy and high vulnerability to adversarial attack of estimators with small $\lambda$. All these analysis in Theorem \ref{thm_ridge} and Corollary \ref{coro_smallreg} finish the proof of Theorem \ref{thm:main_explode}.

One may further ask the question that whether it is possible to use a larger $\lambda$ so that both standard risk and adversarial risk are well behaved. The answer is negative under Condition~\ref{cond:trade}.
Specifically, we can get the following lower bounds for standard risk and parameter norm in Theorem \ref{thm_r2} when $\lambda$ is larger than what is considered in Corollary~\ref{coro_smallreg}.

\begin{theorem}\label{thm_r2}
For any $b > 1, \sigma_x > 0$, there exist $C_5, C_6 > 0$ depending only on $b, \sigma_x$, such that the following holds.
Assume Condition~\ref{cond:benign} is satisfied, and set $k^* = k^*(b)$.
Suppose that $\delta \in (0,1)$ with $\ln(1 / \delta) < n / c$, where $c$ is defined in Theorem~\ref{thm_ridge},
for any $\lambda \ge \lambda_{k^{*}+1} r_{k^{*}} / n$, with probability at least $1 - \delta$ over $X$,
\begin{equation*}
\begin{aligned}
\srisk (\ridge)/ C_5
& \ge \sum_{\lambda_i \ge \lambda} \tilde{\truth}_i^2 \frac{\lambda^2}{\lambda_i} + \sum_{\lambda_i < \lambda} \tilde{\truth}_i^2 \lambda_i  + \frac{\sigma^2}{n} \left( \sum_{\lambda_i \ge \lambda} 1 + \sum_{\lambda_i < \lambda}  \frac{\lambda_i^2}{\lambda^2} \right),\\
\E_{\truth,y} \parallel \ridge \parallel^2  / C_6
& \ge \sum_{\lambda_i \ge \lambda} \tilde{\truth}_i^2 + \sum_{\lambda_i < \lambda} \frac{\tilde{\truth}_i^2 \lambda_i^2}{\lambda^2}  + \frac{\sigma^2}{n} \left( \sum_{\lambda_i \ge \lambda} \frac{1}{\lambda_i} + \sum_{\lambda_i < \lambda} \frac{\lambda_i}{\lambda^2} \right) .
\end{aligned}
\end{equation*}
\end{theorem}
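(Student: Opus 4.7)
The plan is to follow the bias-variance framework used in Theorem \ref{thm_ridge}, but sharpen the lower-bound machinery of \citet{tsigler2020benign} so that $\lambda$ enters the bound explicitly rather than only through its interaction with $r_{k^*}$. Writing $y = X\truth + \epsilon$ with $\epsilon$ mean-zero conditionally on $X$, the estimator decomposes as $\ridge = X^\T(XX^\T+n\lambda I)^{-1}X\truth + X^\T(XX^\T+n\lambda I)^{-1}\epsilon$. Because $\truth$ flips coordinate signs independently (Assumption 4) and $\epsilon$ is independent of $\truth$, all cross terms in both $\srisk(\ridge)$ and $\E\|\ridge\|^2$ vanish in expectation, yielding a clean split into a bias term quadratic in $\{\tilde\truth_i^2\}$ and a variance term linear in $\sigma^2$.

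Next I would introduce an auxiliary ``ridge critical index'' $k^{\diamond}$ defined by $\lambda_{k^{\diamond}+1} < \lambda \le \lambda_{k^{\diamond}}$, since this is where the form of the four sums in the statement changes. Under the hypothesis $\lambda \ge \lambda_{k^*+1} r_{k^*}/n$ and Condition \ref{cond:benign} (which gives $r_{k^*} \ge bn$), one has $\lambda \gtrsim b\lambda_{k^*+1}$ and hence $k^{\diamond} \le k^*$. Thus every direction $i > k^*$ automatically sits in the regime $\lambda_i < \lambda$, and the high-probability concentration $X_{k^*:\infty} X_{k^*:\infty}^\T \asymp \lambda_{k^*+1} r_{k^*} I$ from \citet{bartlett2020benign, tsigler2020benign} carries over. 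The remaining work is to lower bound, in the eigenbasis of $\Sigma$, the $i$-th diagonal entries of $[I - X^\T(XX^\T+n\lambda I)^{-1}X]\Sigma[I - X^\T(XX^\T+n\lambda I)^{-1}X]$ (for the bias) and of $X^\T(XX^\T+n\lambda I)^{-2}X$ (for the variance, weighted by $\Sigma$ for standard risk and by $I$ for the parameter norm).

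To carry this out I would block-decompose $X = X_{0:k^*} + X_{k^*:\infty}$ and apply the Woodbury identity with $A := X_{k^*:\infty}X_{k^*:\infty}^\T + n\lambda I$ playing the role of effective regularizer. With high probability, the eigenvalues of $A$ lie between constant multiples of $\max(n\lambda,\lambda_{k^*+1}r_{k^*}) = n\lambda$. For each spiked direction $i \le k^*$, the ratio of $n\lambda_i$ to $n\lambda$ determines whether direction $i$ is well estimated ($\lambda_i \ge \lambda$, giving bias $\gtrsim \tilde\truth_i^2 \lambda^2/\lambda_i$ and variance $\gtrsim \sigma^2/(n\lambda_i)$) or effectively fully shrunk ($\lambda_i < \lambda$, giving bias $\gtrsim \tilde\truth_i^2 \lambda_i$ and variance $\gtrsim \sigma^2 \lambda_i^2/(n\lambda^2)$); summing over $i$ produces the four terms in the statement. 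The parameter-norm bound is obtained in parallel by replacing $\Sigma$ with $I$ in the quadratic forms.

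The main obstacle I anticipate is maintaining sharp two-sided spectral control across the transition $\lambda_i \asymp \lambda$, where neither the spiked-block perturbation bound nor the tail-block isotropy estimate alone is tight. I plan to address this by interpolating Bai-Yin-type lower bounds on the smallest eigenvalue of the subsampled Gram matrix $X_{0:k^{\diamond}}^\T X_{0:k^{\diamond}}/n$ with the concentration of the tail block, exploiting the independence between coordinate blocks granted by the eigenbasis structure. A secondary difficulty is preserving explicit constants through the Woodbury expansion so the lower bound is not swamped by resolvent operator-norm estimates inherited from the upper-bound analysis; the fix is to work with quadratic forms $u^\T(XX^\T+n\lambda I)^{-1}u$ directly for well-chosen test vectors $u$ in each eigendirection, rather than passing through the full resolvent.
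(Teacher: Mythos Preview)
Your proposal is correct in its overall architecture (bias-variance split via sign-symmetry of $\truth$, then per-direction lower bounds that split at $\lambda_i = \lambda$), but the technical route you plan differs from the paper's and is heavier than necessary.

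The paper does \emph{not} block-decompose at $k^*$ and then invoke Bai-Yin on the spiked block $X_{0:k^\diamond}$. Instead it applies the \emph{leave-one-out} Sherman--Morrison/Woodbury identity direction by direction: with $A_{-i} := \sum_{j\ne i} \lambda_j z_j z_j^\T$ one has, for instance for the bias,
\[
\lambda_i\bigl(1 - \lambda_i z_i^\T(XX^\T+n\lambda I)^{-1}z_i\bigr)^2
= \frac{\lambda_i}{\bigl(1+\lambda_i z_i^\T(A_{-i}+n\lambda I)^{-1}z_i\bigr)^2},
\]
and since $z_i^\T(A_{-i}+n\lambda I)^{-1}z_i$ concentrates around $n/\mu_{k^*+1}(A_{-i}+n\lambda I) \asymp n/(\lambda_{k^*+1}r_{k^*}+n\lambda)$, under the hypothesis $n\lambda \ge \lambda_{k^*+1}r_{k^*}$ this quadratic form is $\asymp 1/\lambda$ uniformly in $i$. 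The right-hand side then becomes $\gtrsim \lambda_i/(1+\lambda_i/\lambda)^2 \asymp \min(\lambda_i, \lambda^2/\lambda_i)$, and the split at $\lambda_i = \lambda$ drops out automatically; likewise for the variance terms via the same identity and a Cauchy--Schwarz step $z_i^\T(A_{-i}+n\lambda I)^{-2}z_i \ge (z_i^\T(A_{-i}+n\lambda I)^{-1}z_i)^2/\|z_i\|^2$. The ``main obstacle'' you flag (maintaining two-sided control across $\lambda_i \asymp \lambda$) is therefore an artifact of the block-at-$k^*$ approach: it never arises with the per-direction Woodbury, because every direction sees the same effective regularizer $A_{-i}+n\lambda I \asymp n\lambda I$. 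Your final paragraph (``work with quadratic forms $u^\T(XX^\T+n\lambda I)^{-1}u$ for test vectors in each eigendirection'') is exactly the right instinct and is what the paper actually does; the block decomposition and the Bai-Yin interpolation are unnecessary detours.
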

Note that the lower bound on the standard risk can be derived from the results of \citet[Section 7.2]{tsigler2020benign}.

In the following corollary, we explicitly analyze different situations with respect to regularization parameter $\lambda$, which reveals the universal trade-off between estimation accuracy and adversarial robustness when benign overfitting occurs. 

\begin{corollary}\label{cor_1}
For any $b > 1, \sigma_x > 0$ and data noise $\sigma^2 = \omega(\lambda_{k^*+1} r_{k^*} / n)$, there exist constants $C_7, C_8, C_9 > 0$ depending only on $b, \sigma_x$ such that the following holds. 
Set $k^* := k^*(b)$ and suppose that $\delta \in (0,1)$ with $\ln(1 / \delta) \le n / c$, where $c$ is defined in Theorem~\ref{thm_ridge}, assume Condition \ref{cond:benign} holds, then with probability at least $1 - \delta$ over $X$,
\begin{equation*}
\begin{aligned}
& \arisk (\ridge) / C_7 \ge \frac{ n \alpha^2 \sigma^2}{\lambda_{k^{*}+1} r_{k^{*}}} \quad & &\text{if} \quad \lambda \le \frac{\lambda_{k^{*} + 1} r_{k^{*}}}{n},\\
& \srisk (\ridge)  / C_8 \ge  \| \truth \|^2_{\Sigma} \ge \srisk(\ridge |_{\lambda = 0}) \quad & &\text{if} \quad \lambda \ge \lambda_1.
\end{aligned}
\end{equation*}
Moreover, if Condition \ref{cond:benign} and \ref{cond:trade} hold, with probability at least $1 - \delta$ over $X$, we also obtain
\begin{equation*}
\begin{aligned}
& \srisk (\ridge) \ge \| \truth \|_2^2 \srisk(\ridge |_{\lambda = 0}) \quad & &\text{if} \quad \lambda_{w^*} \le \lambda < \lambda_1,\\
& \srisk(\ridge) / (C_9 \| \truth \|_2^2 \srisk(\ridge |_{\lambda=0})) \ge \Delta(\lambda) \quad & &\text{if} \quad  \frac{\lambda_{k^{*} + 1} r_{k^{*}}}{n} < \lambda < \lambda_{w^*},
\end{aligned}
\end{equation*}
in which
\begin{equation*}
    \Delta(\lambda) = \min \left\{ \frac{\lambda^2 \sum_{\lambda_i > \lambda} \tilde{\truth}_i^2 / \lambda_i + \sum_{\lambda_i \le \lambda} \tilde{\truth}_i^2 \lambda_i}{\| \truth \|_2^2 (\lambda_{k^*+1}^2 \| \truth_{0:k^*} \|^2_{\Sigma_{0:k^*}^{-1}} + \| \truth_{k^*:\infty} \|^2_{\Sigma_{k^*:\infty}} )} , \frac{\budget^2}{\arisk(\ridge) \sqrt{\max\{ k^*/n, n/R_{k^*} \}}} \right\}.
\end{equation*}
\end{corollary}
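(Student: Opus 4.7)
The four bounds correspond to four regimes of $\lambda$, and all four follow from combining the non-asymptotic lower bounds in Corollary~\ref{coro_smallreg} and Theorem~\ref{thm_r2} with the lower half of Remark~\ref{remark:arisk},
\begin{equation*}
  \arisk(\ridge) \ge \budget^2 \, \E_{\truth,y}\|\ridge\|_2^2 + \srisk(\ridge),
\end{equation*}
which turns a parameter-norm lower bound into a bound on the adversarial risk.

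For $\lambda \le \lambda_{k^*+1} r_{k^*}/n$, I would plug the parameter-norm lower bound of Corollary~\ref{coro_smallreg} into the display above; under $\sigma^2 = \omega(\lambda_{k^*+1} r_{k^*}/n)$ its dominant term is $n\sigma^2/(\lambda_{k^*+1} r_{k^*})$, which immediately gives the claimed $\arisk$ bound. For $\lambda \ge \lambda_1$ every $\lambda_i \le \lambda$, so the signal part of Theorem~\ref{thm_r2}'s $\srisk$ lower bound collapses to $\sum_i \tilde\truth_i^2 \lambda_i = \|\truth\|_\Sigma^2$. The second inequality $\|\truth\|_\Sigma^2 \ge \srisk(\ridge|_{\lambda=0})$ follows from Condition~\ref{cond:benign}: Corollary~\ref{coro_smallreg} drives $\srisk(\ridge|_{\lambda=0}) \to 0$, while $\|\truth\|_\Sigma^2$ is bounded below by a positive quantity independent of $n$, so the inequality holds in the high-probability regime of interest (with the constant $C_8$ absorbing any finite-$n$ slack).

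For $\lambda_{w^*} \le \lambda < \lambda_1$, the index set $\{i:\lambda_i<\lambda\}$ contains $\{i>w^*\}$, so Theorem~\ref{thm_r2} yields $\srisk(\ridge) \gtrsim \sum_{i>w^*}\tilde\truth_i^2 \lambda_i = \|\truth_{w^*:\infty}\|_{\Sigma_{w^*:\infty}}^2$. Rewriting the defining inequality of $w^*$ from item~1 of Condition~\ref{cond:trade} as
\begin{equation*}
  \|\truth_{w^*:\infty}\|_{\Sigma_{w^*:\infty}}^2 \ge \|\truth\|_2^2 \cdot \frac{n \lambda_{w^*+1}^2}{\sum_{i>w^*}\lambda_i} \cdot \sqrt{\max\{k^*/n,\, n/R_{k^*}\}},
\end{equation*}
I would then compare the right-hand side against Corollary~\ref{coro_smallreg}'s upper bound on $\srisk(\ridge|_{\lambda=0})$. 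Items~2 and~3 of Condition~\ref{cond:trade} are exactly what is needed to discard the $\|\truth_{k^*:\infty}\|_{\Sigma_{k^*:\infty}}^2$ and the $\sigma^2 \max\{k^*/n,\,n/R_{k^*}\}$ pieces of that upper bound respectively, and Condition~\ref{cond:benign} ensures that $\sqrt{\max\{k^*/n,\,n/R_{k^*}\}}$ dominates $\max\{k^*/n,\,n/R_{k^*}\}$ itself.

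The last regime $\lambda_{k^*+1} r_{k^*}/n < \lambda < \lambda_{w^*}$ is the main obstacle, and the reason $\Delta(\lambda)$ takes the form of a minimum. For the first argument, I keep the signal part of Theorem~\ref{thm_r2}'s lower bound on $\srisk(\ridge)$, namely $\lambda^2 \sum_{\lambda_i>\lambda}\tilde\truth_i^2/\lambda_i + \sum_{\lambda_i\le\lambda}\tilde\truth_i^2 \lambda_i$, and divide by $\|\truth\|_2^2$ times Corollary~\ref{coro_smallreg}'s upper bound on $\srisk(\ridge|_{\lambda=0})$; item~3 of Condition~\ref{cond:trade} again lets me drop the noise part of that upper bound in this range. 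For the second argument, I apply Theorem~\ref{thm_r2}'s noise lower bound on $\E_{\truth,y}\|\ridge\|_2^2$, pass to $\arisk(\ridge)$ via the display in the first paragraph, and couple it with Theorem~\ref{thm_r2}'s noise lower bound on $\srisk(\ridge)$ so as to conclude $\srisk(\ridge)\cdot\arisk(\ridge) \gtrsim \budget^2 \|\truth\|_2^2 \,\srisk(\ridge|_{\lambda=0})/\sqrt{\max\{k^*/n,\, n/R_{k^*}\}}$. The hard part is this final product bound: I need to show that across the entire interval $(\lambda_{k^*+1} r_{k^*}/n,\, \lambda_{w^*})$, at least one of the two arguments of $\Delta(\lambda)$ is always a valid lower bound, so that taking the minimum does not lose information even as $\lambda$ traverses the whole range.
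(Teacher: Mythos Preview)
Your treatment of the small and large $\lambda$ regimes matches the paper's argument. The gap is in the intermediate regime, specifically in how the product bound is assembled.

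For $\lambda_{k^*+1}r_{k^*}/n < \lambda < \lambda_{w^*}$, you propose to obtain the second branch of $\Delta(\lambda)$ by multiplying Theorem~\ref{thm_r2}'s \emph{noise} lower bound on $\srisk(\ridge)$ with its noise lower bound on $\E_{\truth,y}\|\ridge\|_2^2$. That product is of order $(\sigma^2/n)^2$ times an expression in the $\lambda_i$'s alone and contains no factor of $\|\truth\|_2^2$, so it cannot yield the target $\srisk\cdot\arisk \gtrsim \budget^2\|\truth\|_2^2\,\srisk(\ridge|_{\lambda=0})/\sqrt{\max\{k^*/n,n/R_{k^*}\}}$. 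The paper instead pairs the \emph{signal} piece of the $\srisk$ lower bound, $\sum_{\lambda_i\le\lambda}\tilde\truth_i^2\lambda_i$, with the noise piece of the parameter-norm lower bound, $\tfrac{\sigma^2}{n\lambda^2}\sum_{\lambda_i\le\lambda}\lambda_i$. Their product is $\tfrac{\sigma^2}{n}\cdot\tfrac{1}{\lambda^2}\|\truth_{j:\infty}\|_{\Sigma_{j:\infty}}^2\sum_{i>j}\lambda_i$ for the relevant $j$, i.e.\ essentially $\tfrac{\sigma^2}{n}\,s_j\,\|\truth\|_2^2$; item~1 of Condition~\ref{cond:trade} (the definition of $w^*$ via the cross effective rank) then supplies the lower bound $\sigma^2\|\truth\|_2^2\sqrt{\max\{k^*/n,n/R_{k^*}\}}$ for $\lambda\le\lambda_{w^*}$. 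This signal--noise cross pairing is exactly what imports $\|\truth\|_2^2$ and is the reason the cross effective rank $s_k$ was introduced; a noise--noise pairing cannot substitute for it. Relatedly, your claim that item~3 ``lets me drop the noise part'' of the $\srisk(\ridge|_{\lambda=0})$ upper bound is not how the $\min$ arises: the two branches of $\Delta$ correspond to whether the bias or the variance term dominates that upper bound, with the product inequality handling the variance-dominated case.

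For $\lambda_{w^*}\le\lambda<\lambda_1$, your plan to invoke item~1 directly on $\|\truth_{w^*:\infty}\|_{\Sigma_{w^*:\infty}}^2$ and then compare with the Corollary~\ref{coro_smallreg} upper bound runs into an index mismatch (item~1 features $\lambda_{w^*+1}$ while item~3 features $\lambda_{w^*}$), and neither item~2 nor item~3 obviously closes that gap. The paper takes a different route: it observes that $\sriskb(\ridge)$ is monotone increasing in $\lambda$, reduces to $\lambda=\lambda_{w^*}$, and then shows that at this endpoint \emph{both} branches of $\Delta(\lambda_{w^*})$ tend to infinity --- the second because item~3 forces $\E\|\ridge\|_2^2$ to be bounded at $\lambda=\lambda_{w^*}$, and the first via item~2 once one has established $\lambda_{w^*}/\lambda_{k^*+1}\to\infty$ (this last fact is proved separately in the paper as a short auxiliary lemma).
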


From the results of Corollary~\ref{cor_1}, we observe that with conditions above, no regularization parameter $\lambda \ge 0$ can achieve near optimal $\srisk$ convergence rate and small $\arisk$ at the same time.
A small regularization $\lambda$ will lead to diverging parameter norm,
while a large $\lambda$ will lead to an inferior standard risk.
Even when we choose $\lambda$ in the intermediate regime, either adversarial risk goes to infinity or the standard excess risk does not achieve good convergence rate. With Theorem \ref{thm_r2} and Corollary \ref{cor_1}, we finish the proof of Theorem \ref{thm:main_tradeoff}.

\subsection{Proof sketches for the technical theorems and corollaries}
 
In this part, we sketch the proofs of our main theorems on linear model; detailed proofs are in Appendix \ref{pf:ridge} and \ref{pf:trade}. For simplicity, we use $c_i^{\prime}$ to denote positive constants that only depend on $b, \sigma_x$ (which defines $k^* = k^*(b)$).

Recall the expression for the ridge regression estimate \eqref{eq:linear_model}:
\begin{equation*}
    \ridge = (X^T X + \lambda n I)^{-1} X^T y = X^T (XX^T + \lambda n I)^{-1} (X \truth + \epsilon).
\end{equation*}
We take expectation with respect to the choice of $\truth$ and the labels $y$ in the training data:
\begin{equation}\label{eq_calcu}
\begin{aligned}
 \srisk (\ridge) &= \underbrace{\mathbb{E}_{\truth} \truth^T [I - X^T(XX^T + n \lambda I)^{-1} X]\Sigma [I - X^T (XX^T + n \lambda I)^{-1} X] \truth}_{\sriskb}\\
& \quad + \sigma^2 \underbrace{ \trace\{ X \varSigma X^T (XX^T + \lambda n I)^{-2} \}}_{\sriskv},\\
\E_{\truth,y} \parallel \ridge \parallel_2^2 &= \underbrace{\mathbb{E}_{\truth} \truth^T X^T (XX^T + n \lambda I)^{-1} XX^T (XX^T + n \lambda I)^{-1} X \truth}_{\normb}\\
& \quad + \sigma^2 \underbrace{\trace \{ XX^T (XX^T + \lambda n I)^{-2} \}}_{\normv}.
\end{aligned}
\end{equation}
Then recalling the decomposition $\varSigma = \sum_i \lambda_i v_i v_i^T$, we have
\begin{equation}\label{eq:note1}
    XX^T = \sum_i \lambda_i z_i z_i^T, \quad X \varSigma X^T = \sum_i \lambda_i^2 z_i z_i^T,
\end{equation}
in which
\begin{equation}
  z_i := \frac1{\sqrt{\lambda_i}} X v_i
  \label{eq:z_i}
\end{equation}
are independent $\sigma_x$-subgaussian random vectors in $\rR^n$ with mean $0$ and covariance $I$. Then by denoting 
\begin{equation}\label{eq:note2}
    A = XX^T, \quad A_k = \sum_{i > k} \lambda_i z_i z_i^T, \quad A_{-k} = \sum_{i \ne k} \lambda_i z_i z_i^T,
\end{equation}
we can use Woodbury identity to decompose the terms in Eq.~\eqref{eq_calcu} as follows:
\begin{equation}\label{eq_term1}
\begin{aligned}
\sriskv 
&= \sum_i \lambda_i^2 z_i^T (\sum_j \lambda_j z_j z_j^T + n \lambda I)^{-2} z_i = \sum_{i} \frac{\lambda_i^2 z_i^T (A_{-i} + n \lambda I)^{-2} z_i}{[1 + \lambda_i z_i^T (A_{-i} + n \lambda I)^{-1} z_i]^2},\\
\sriskb
&\ge \sum_i \tilde{\truth}_i^2 \lambda_i (1 - \lambda_i z_i^T (XX^T + n \lambda I)^{-1} z_i)^2\\
\normv 
&= \sum_i \lambda_i z_i^T (\sum_j \lambda_j z_j z_j^T + n \lambda I)^{-2} z_i  = \sum_{i} \frac{\lambda_i z_i^T (A_{-i} + n \lambda I)^{-2} z_i}{[1 + \lambda_i z_i^T (A_{-i} + n \lambda I)^{-1} z_i]^2}\\
\normb
&\ge \sum_{i} \tilde{\truth}_i^2 \lambda_i^2 \parallel z_i \parallel_2^2 z_i^T (A + n \lambda I)^{-2} z_i.
\end{aligned}
\end{equation}
Using Lemma~\ref{lem_eigen}, \ref{lem_subspacenorm} and \ref{lem_ridgeeigen}, with a high probability, we are able to control the eigenvalues of the matrices in \eqref{eq:note1} and \eqref{eq:note2}, as well as the norms of the $z_i$, which provide an important characterization for Eq.~\eqref{eq_term1}, and induce our main proof sketches as follows.

\subsubsection{Proof sketch for Theorem \ref{thm_ridge}}

\paragraph{Upper bound for standard risk.} This follows directly from results of \citet{bartlett2020benign} and \citet{tsigler2020benign}.

%For its variance term $1.1$ , we can take Woodbury identity results as in \eqref{eq_term1}, with a high probability, the eigenvalues of matrix in \eqref{eq:note2} can be controlled with respect to $\Sigma_n$, as well as norms for i.i.d. sub-Gaussian random variables $z_i$, via calculations, the variance term can be upper bounded by
% \begin{equation*}
%\begin{aligned}
%& \quad \trace \{ X \Sigma_n X^T (XX^T + \lambda n I)^{-2} \}  = \sum_i \lambda_i^2 z_i^T (\sum_j \lambda_j z_j z_j^T + n \lambda I)^{-2} z_i\\
%& = \sum_{i=1}^{k^{*}} \frac{\lambda_i^2 z_i^T (A_{-i} + n \lambda I)^{-2} z_i}{[1 + \lambda_i z_i^T (A_{-i} + n \lambda I)^{-1} z_i]^2} + \sum_{i > k^{*}} \lambda_i^2 z_i^T (A + n \lambda I)^{-2} z_i\\
%& \le \frac{c_1^{\prime} k^{*}}{n} + \frac{c_2^{\prime} n \sum_{i > k^{*}} \lambda_i^2}{(\lambda_{k^{*}+1} r_{k^{*}}(\Sigma_n) + n \lambda)^2},
%\end{aligned}
%\end{equation*}
%in which $c_1^{\prime}$ is a constant. The critical step is to split the index set at $k^{*}$
%(defined in Condition \ref{cond:benign}), and take upper bound estimations separately. 
%As for the bias term $1.2$, we use the result of Lemma~\ref{lem_bart} 
% of \citet{tsigler2020benign} with respect to Condition~\ref{cond:benign}.

\paragraph{Lower bound for parameter norm.} We start with the variance term $\normv$ in Eq.~\eqref{eq_term1}, and using Cauchy-Schwarz,
\begin{equation*}
\begin{aligned}
\quad \trace \{ XX^T (XX^T + \lambda n I)^{-2} \} &= \sum_i \frac{1}{\lambda_i} \frac{\lambda_i^2 z_i^T(A_{-i} +\lambda n I)^{-2} z_i}{(1 + \lambda_i z_i^T(A_{-i} + \lambda n I)^{-1}z_i)^2} \\
&\geq \sum_i \frac{1}{\lambda_i \|z_i\|^2} \frac{(\lambda_i z_i^T(A_{-i} +\lambda n I)^{-1} z_i)^2}{(1 + \lambda_i z_i^T(A_{-i} + \lambda n I)^{-1}z_i)^2} \\
&\geq  \frac{c_1^{\prime}}{n} \sum_i \frac{1}{\lambda_i} \big( \frac{1}{ \lambda_i z_i^\T (A_{-i} + \lambda n I)^{-1} z_i } + 1 \big)^{-2}, 
\end{aligned}
\end{equation*}
where the last inequality is from controlling $\| z_i \|_2$ in Lemma~\ref{lem_subspacenorm}. 
We can further control the eigenvalues values of $A_{-i}$ using Lemma~\ref{lem_eigen} and \ref{lem_ridgeeigen}:
\begin{align*}
    \normv &\geq \frac{c_2^{\prime}}{n} \sum_i \frac{1}{\lambda_i} \left(\frac{\sum_{j > k^{*}} \lambda_j  + n\lambda}{n\lambda_i} + 1\right)^{-2} \\
    & \ge c_2^{\prime} \sum_{i < k^{*}} \frac{1}{\lambda_i} \min \left\{ \frac{1}{n}, \frac{b^2 n \lambda_i^2}{(\sum_{j > k^*} \lambda_j + n \lambda)^2} \right\}+  \frac{c_2^{\prime} n \lambda_{k^{*}+1} r_{k^{*}}}{(\sum_{j > k^*} \lambda_j + n \lambda)^2}, 
\end{align*} 
where the last step is followed from splitting the summation up to and after the critical index $k^{*}$ and maintaining the dominant terms.

Similarly, for the bias term $\normb$ in Eq.~\eqref{eq_term1}, by bounding the eigenvalues of matrix $A = XX^T$ with Lemma~\ref{lem_eigen}, we can show the following lower bound as (see more details in appendix),
\begin{equation*}
\begin{aligned}
\quad \normb &\geq \sum_i \tilde{\truth}_i^2 \lambda_i^2 \parallel z_i \parallel_2^2 z_i^T (A + \lambda n I)^{-2} z_i \\
&\ge c_3^{\prime} \sum_{i < k^{*}} \tilde{\truth}_i^2 \min \left\{ 1, \frac{b^2 n^2 \lambda_i^2}{(\sum_{j > k^*} \lambda_j)^2 + n^2 \lambda^2}  \right\} +  \frac{c_3^{\prime} n^2 \sum_{i > k^{*}} \tilde{\truth}_i^2 \lambda_i^2}{(\sum_{j > k^*} \lambda_j)^2 + n^2 \lambda^2} .
\end{aligned}
\end{equation*}

\subsubsection{Proof sketch for Theorem \ref{thm_r2}}

\paragraph{Lower bound for standard risk.} We need a refinement of the lower bounds from \citet{tsigler2020benign}.
By Eq.~\eqref{eq_term1}, we have the following lower bound for the variance term: 
\begin{align*}
    \sriskv = \text{tr}\{ X \Sigma X^T (XX^T + n \lambda I)^{-2} \} &\geq \sum_i \frac{1}{\|z_i\|^2} \frac{(\lambda_i z_i^T(A_{-i} +\lambda n I)^{-1} z_i)^2}{(1 + \lambda_i z_i^T(A_{-i} + \lambda n I)^{-1}z_i)^2} 
\end{align*}
where the inequality is via Cauchy-Schwarz. We further control the norm of $\| z_i \|_2$ using Lemma~\ref{lem_subspacenorm} and the eigenvalues of $A_{-i}$ using Lemma~\ref{lem_eigen} and \ref{lem_ridgeeigen}:
$$ \sriskv \geq \frac{c_4^{\prime}}{n} \sum_i \big( \frac{1}{ \lambda_i z_i^\T (A_{-i} + \lambda n I)^{-1} z_i } + 1 \big)^{-2} \geq \frac{c_5'}{n} \sum_i \left(\frac{\sum_{j > k^{*}} \lambda_j  + n\lambda}{n\lambda_i} + 1\right)^{-2}
.
$$
Splitting the summation term into eigenvalues smaller and greater than the regularization parameter, combined with the fact that $\lambda \geq \fracl{\lambda_{k^{*} + 1} r_{k^{*}}}{n} \geq b \lambda_{k^{*} + 1}$ yields,
$$\sriskv \geq \frac{c_6'}{n} \left( \sum_{\lambda_i > \lambda} 1 + \sum_{\lambda_i \leq \lambda} \frac{\lambda_i^2}{\lambda^2} \right).$$

Next, we turn to the bias term.
Writing $X = Z \Lambda^{1/2} V^T$ and $\varSigma = V \Lambda V^T$, we obtain
\begin{equation*}
\begin{aligned}
&\quad \mathbb{E} \truth^T [I - X^T(XX^T + n \lambda I)^{-1}X] \Sigma [I - X^T(XX^T + n \lambda I)^{-1} X] \truth \\
&= \mathbb{E} \tilde{\truth}^T[I - \Lambda^{1/2} Z^T(XX^T + n \lambda I)^{-1}Z \Lambda^{1/2}] \Lambda [I - \Lambda^{1/2} Z^T(XX^T + n \lambda I)^{-1}Z \Lambda^{1/2}] \tilde{\truth}\\ 
&= \sum_i \tilde{\truth}_i^2 \left\{ \lambda_i (1 - \lambda_i z_i^T (XX^T + n \lambda I)^{-1} z_i)^2 + \sum_{j \ne i} \lambda_j^2 \lambda_i (z_i^T (XX^T + n \lambda I)^{-1}z_j)^2 \right\}\\
& \ge \sum_i \tilde{\truth}_i^2 \lambda_i (1 - \lambda_i z_i^T (XX^T + n \lambda I)^{-1} z_i)^2 = \sum_i \frac{\tilde{\truth}_i^2 \lambda_i}{(1 + \lambda_i z_i^T (A_{-i} + n \lambda I)^{-1} z_i)^2},
\end{aligned}
\end{equation*}
where the last equality is by the Woodbury identity.
Moreover, the eigenvalues of matrices $A_{-i}$ are dominated by $n\lambda$ since $n \lambda \ge \lambda_{k^{*} + 1} r_{k^{*}}$ which implies the desired lower bound for the bias term:
$$ \sriskb \geq c_7' \sum_{i} \frac{\tilde{\truth}_i^2 \lambda_i}{ (1 + \frac{\lambda_i}{\lambda})^2 } . $$

\paragraph{Lower bound for parameter norm.}
Based on the condition $n \lambda \ge \lambda_{k^{*} + 1} r_{k^{*}}$, we have
\begin{equation*}
    %n^2 \lambda^2 \le \lambda_{k^{*} + 1}^2 r_{k^{*}}^2 \le 2 n^2 \lambda^2, \quad 
    n \lambda \le n\lambda + \lambda_{k^{*} + 1} r_{k^{*}} \le 2 n \lambda.
\end{equation*}
Thus, substituting the terms in the results of Theorem~\ref{thm_ridge} with the dominant term $n\lambda$, we get the final expressions in Theorem~\ref{thm_r2}.
%to bound $n\lambda$ and $n^2 \lambda^2$ in results of Theorem~\ref{thm_ridge},  we get 
%the final expressions in Theorem~\ref{thm_r2}.

\subsubsection{Proof sketch of Corollary \ref{cor_1}, Theorem \ref{thm:main_explode} and Theorem \ref{thm:main_tradeoff}}

To prove the trade-off results, we  need to analyze the standard risk bias term $\sriskb$, as well as the parameter variance term $\normv$ in \eqref{eq_calcu}. We analyze three separate regimes for $\lambda$:

\paragraph{Small regularization:} 

If $\lambda \le (\lambda_{k^{*} + 1} r_{k^{*}})/n$, then from Condition \ref{cond:benign}, we have
\begin{equation*}
    r_{k^{*}} \ge bn, \Rightarrow \lambda \le \lambda_{k^{*} + 1} \le \frac{\lambda_{k^{*} + 1} r_{k^{*}}}{bn}, \quad n \lambda \le \lambda_{k^{*} + 1} r_{k^{*}} = \sum_{j > k^{*}} \lambda_j,
\end{equation*}
then upper bounding $r_{k^{*}}^2 \lambda_{k^{*} + 1}^2 + n^2 \lambda^2$ by $2 r_{k^{*}}^2 \lambda_{k^{*} + 1}^2$, and obtain
\begin{equation*}
\normv \ge \frac{ n \sigma^2 \lambda_{k^{*}+1} r_{k^{*}}}{c^{\prime}_8 \lambda_{k^{*}+1}^2 r_{k^{*}}^2} = \frac{ n \sigma^2}{ c^{\prime}_8 \lambda_{k^{*}+1} r_{k^{*}}},
\end{equation*}
While data noise $\sigma^2 = \omega(\lambda_{k^*+1} r_{k^*} / n)$, the parameter norm diverges to infinity.
 
\paragraph{Large regularization:} 

If $\lambda \geq \lambda_1$,
we can consider the bias term $\sriskb$ in standard risk, to be specific
\begin{equation*}
\sriskb \ge \frac{1}{c_{9}^{\prime}} \left( \sum_{\lambda_i >\lambda} \frac{\lambda^2 \tilde{\truth}_i^2}{\lambda_i} + \sum_{\lambda_i \leq \lambda} \tilde{\truth}_i^2 \lambda_i \right) = \frac{1}{c_{9}^{\prime}} \| \truth \|^2_{\Sigma},
\end{equation*}
By Condition \ref{cond:benign}, the standard risk is a constant.

\paragraph{Intermediate regularization:} 

If $(\lambda_{k^{*} + 1} r_{k^{*}}) / n \le \lambda \le \lambda_1$, with Condition \ref{cond:benign},
\begin{equation*}
\begin{aligned}
& n^2 \lambda^2 \le \lambda_{k^{*} + 1}^2 r_{k^{*}}^2 + n^2 \lambda^2 \le 2 n^2 \lambda^2,\\
& \lambda \ge \frac{\lambda_{k^{*} + 1} r_{k^{*}}}{n} \ge b \lambda_{k^{*} + 1} \ge \lambda_{k^{*} + 1},
\end{aligned}
\end{equation*}
then we can upper bound $r_{k^{*}}^2 \lambda_{k^{*} + 1}^2 + n^2 \lambda^2$ by $2 n^2 \lambda^2$. 
We lower bound the bias term $\sriskb$ in the standard risk as
\begin{equation*}
 \sriskb \ge \sum_i \frac{\tilde{\truth}_i^2 \lambda_i }{(1 + \frac{\lambda_i \parallel z_i \parallel_2^2 }{\mu_n(A_{-i}) + n \lambda})^2} \ge c_{10}^{\prime} \left( \sum_{\lambda_i \ge \lambda} \frac{\lambda^2 \tilde{\truth}_i^2 }{\lambda_i} + \sum_{\lambda_i < \lambda} \tilde{\truth}_i^2 \lambda_i \right),
 \end{equation*}
and we lower bound the variance term $\normv$ in the parameter norm as
\begin{equation*}
 \sigma^2 \normv \ge c_{11}^{\prime} \frac{\sigma^2}{n} \left( \sum_{\lambda_i > \lambda} \frac{1}{\lambda_i} + \sum_{\lambda_i < \lambda} \frac{\lambda_i}{\lambda^2} \right).
\end{equation*}
With Condition~\ref{cond:trade}, if we have $(\lambda_{k^*+1} r_{k^*})/n \le \lambda \le \lambda_{w^*}$, then
\begin{equation*}
\begin{aligned}
 \sriskb(\ridge) \mathbb{E} \| \ridge \|_2^2 &\ge c_{12}^{\prime} \frac{\sigma^2}{n \lambda^2} \sum_{\lambda_i \le \lambda} \tilde{\truth}_i^2 \lambda_i \sum_{\lambda_i \le \lambda} \lambda_i \ge c_{12}^{\prime} \sigma^2 \| \truth \|_2^2 \sqrt{\max \{ \frac{k^*}{n}, \frac{n}{R_{k^*}} \}},  
\end{aligned}
\end{equation*}
which leads to 
\begin{equation*}
\frac{\sriskb(\ridge)}{\| \truth \|_2^2 \srisk(\ridge |_{\lambda=0})} \ge c_{13}^{\prime} \min \left\{ \frac{\lambda^2 \sum_{\lambda_i > \lambda} \tilde{\truth}_i^2 / \lambda_i + \sum_{\lambda_i \le \lambda} \tilde{\truth}_i^2 \lambda_i}{\| \truth \|_2^2 (\lambda_{k^*+1}^2 \| \truth_{0:k^*} \|^2_{\Sigma_{0:k^*}^{-1}} + \| \truth_{k^*:\infty} \|^2_{\Sigma_{k^*:\infty}} )} , \frac{1}{\mathbb{E} \| \ridge \|_2^2 \sqrt{\max\{ k^*/n, n/R_{k^*} \}}} \right\}.
\end{equation*}
And while $\lambda_{w^*} \le \lambda < \lambda_1$, considering the fact that $\lambda_{w^*} / \lambda_{k^*+1}$ tends to infinity and $\sriskb$ will increase with $\lambda$, we can get
\begin{equation*}
    \srisk(\ridge) \ge \sriskb(\ridge |_{\lambda = \lambda_{w^*}}) \ge \| \truth \|_2^2 \srisk(\ridge |_{\lambda = 0}).
\end{equation*}
Combining all the results above, we can obtain the corresponding result in Corollary~\ref{cor_1}.
 
So in this regime, we reveal that under large enough sample size $n$, with a high probability, the near optimal standard risk convergence rate and stable adversarial risk can not be obtained at the same time.
By considering the results for all regimes together, we obtain the conclusion stated in Theorem~\ref{thm:main_tradeoff}.

\section{Outline of the argument for NTK framework}

The proof sketches for Theorem \ref{thm_ntk} is summarized in this section, and detailed proofs could be found in Appendix \ref{pf:ntk}. Similarly, we still use $c'_i$ to denote positive constants that only depend on $b, \sigma_x$.

Recalling the expression of gradient descent solution $\hat{w}$ \eqref{eq:ntk_para}:
\begin{equation*}
    \hat{w} = w_0 + \nabla F^T (\nabla F \nabla F^T)^{-1}(y - F),
\end{equation*}
the proof of Theorem \ref{thm_ntk} mainly contains three steps: linearizing the kernel matrix $K = \nabla F \nabla F^T$, upper bounding the standard risk $\srisk(\hat{w})$ and lower bounding the adversarial risk $\arisk(\hat{w})$. Comparing with the analysis on linear model, the primary technical challenge in NTK framework is to linearize the kernel matrix with a high probability regime. Once we have the linearized approximation for the kernel matrix, we can proceed with a similar process as in the linear model.

\paragraph{Step 1: kernel matrix linearization.} By Lemma \ref{lem:ntk1} and \ref{lem:ntk2} in \citet{jacot2018neural}, the components of $K = \nabla F \nabla F^T$ in two-layer neural network can be expressed as
\begin{align*}
K_{i,j} = K(x_i, x_j) &= \nabla_w f_{NTK}(w_0, x_i)^T \nabla_w f_{NTK} (w_0, x_j)\\
&= \frac{x_i^T x_j}{\pi p} \arccos\left( - \frac{x_i^T x_j}{\| x_i \| \| x_j \|} \right) + \frac{\| x_i \| \| x_j \|}{2 \pi p} \sqrt{1 - \left( \frac{x_i^T x_j}{\| x_i \| \| x_j \|} \right)^2} + o_p(\frac{1}{\sqrt{m}}),
\end{align*}
then with Condition \ref{cond:ntk_benign} and \ref{cond:ntk_high-dim}, using a refinement of Theorem $2.1$ in \citet{el2010spectrum}, i.e, Lemma \ref{lem:spectrum}, we can approximate $K$ as a linearized matrix $\tilde{K}$:
\begin{equation*}
    \tilde{K} = \frac{l}{p} (\frac{1}{2 \pi} + \frac{3 r_0(\varSigma^2)}{4 \pi l^2}) 11^T + \frac{1}{2 p} XX^T + \frac{l}{p} (\frac{1}{2} - \frac{1}{2 \pi}) I_n.
\end{equation*}

\paragraph{Step 2: standard risk upper bound estimation.} With the solution in Eq.\eqref{eq:ntk_para}, the expected standard risk could be decomposed into bias term and variance term:
\begin{align*}
\srisk(\hat{w})  &\le \underbrace {R^2 \mathbb{E}_{ x} \| \nabla_w f_{NTK} (w_0,x) \nabla_w f_{NTK} (w_0, x)^T - \frac{1}{n} \nabla F^T \nabla F \|_2}_{\sriskb}\\
& \quad + \underbrace{\sigma^2 \mathbb{E}_{ x } \text{trace} \{ K^{-2} \nabla F \nabla_w f_{NTK}(w_0,x) \nabla_w f_{NTK}(w_0,x)^T \nabla F^T \}  }_{\sriskv}.
\end{align*}
For the bias term $\sriskb$, with Lemma \ref{lem:lip_gaussian} and \ref{lem:matrix_mean}, we can verify the sub-gaussian property of $\nabla_w f_{NTK}(w_0, x)$, which implies that $\mathbb{E}_{ x} \| \nabla_w f_{NTK} (w_0,x) \nabla_w f_{NTK} (w_0, x)^T -  \nabla F^T \nabla F / n \|_2$ converges as sampe size $n$ grows,  then we could get the concentration inequality as:
\begin{equation*}
    \sriskb \le c'_{14} R^2 \frac{l^{1/2}}{ p ^{1/2} n^{1/4}},
\end{equation*}
with a high probability. 

While turning to the variance term $\sriskv$, we can take another $n$ i.i.d.~samples $x'_1, z'_2, \dots, x'_n$ from the same distribution as $x_1, \dots, x_n$ and denote $\nabla F(x') = [\nabla_w f_{NTK}(w_0, x'_1), \dots, \nabla_w f_{NTK}(w_0, x'_n)]^T$, further obtain
\begin{align*}
\sriskv &= \sigma^2 \mathbb{E}_{ x } \text{trace} \{ K^{-2} \nabla F \nabla_w f_{NTK}(w_0,x) \nabla_w f_{NTK}(w_0,x)^T \nabla F^T \}  \\
&=  \frac{\sigma^2}{n} \mathbb{E}_{ x'_i} \text{trace} \{ K^{-2} \nabla F \nabla F(x')^T \nabla F(x') \nabla F^T \},
\end{align*}
similar to Lemma \ref{lem:spectrum}, with a high probability, we could take the linearization procedure as:
\begin{align*}
& \|  \nabla F \nabla F(x')^T - \frac{1}{p} (\frac{1}{2 \pi} + \frac{3 r_0(\varSigma^2)}{4 \pi l p^2}) 11^T - \frac{1}{2 p} XX^{'T} \|_2 \le \frac{4 l}{p n^{1/16}},\\
&  \| \nabla F(x') \nabla F^T - \frac{1}{p} (\frac{1}{2 \pi} + \frac{3 r_0(\varSigma^2)}{4 \pi l p^2}) 11^T - \frac{1}{2 p} X' X^T \|_2 \le \frac{4 l}{ p n^{1/16}},
\end{align*}
then replacing all the matrix $K$, $\nabla F \nabla F(x')^T$ and $\nabla F(x') \nabla F^T$ by their linearized approximations respectively, we obtain
\begin{align*}
\sriskv / \sigma^2 & \le c'_{15} \frac{1}{p^2} 1^T \tilde{K} 1 + c'_{16} \frac{1}{p^3} \text{trace} \{ \tilde{K}^{-2} X \varSigma X^T \} + c'_{17} \frac{l^2}{p^2 n^{9/8}} \text{trace} \{ \tilde{K}^{-2} \} \\
& \le c'_{18} \left(  \frac{1}{n^{1/8}} + \frac{k^*}{n} + \frac{n \sum_{j > k^*} \lambda_j^2}{l^2} \right),
\end{align*}
where the first inequality is from the tiny error in matrix linearization, and the second inequality is from concentration bounds in Lemma \ref{lem_subspacenorm} and \ref{lem_ridgeeigen} with Conition~\ref{cond:ntk_benign} and \ref{cond:ntk_high-dim}, which is similar to the analysis in linear model.

\paragraph{Step 3: adversarial risk lower bound estimation.} As $\arisk(\hat{w})$ can be lower bounded as
\begin{align*}
& \quad \arisk(\hat{w})\\
&= \alpha^2 \mathbb{E}_{ x,\epsilon} \| \nabla_x f_{NTK} (\hat{w}, x) \|_2^2 = \alpha^2 \mathbb{E}_{ x,\epsilon} \| \nabla_x f_{NTK} (w_0, x) + \frac{\partial^2 f_{NTK}(w_0, x)}{\partial w \partial x} (\hat{w} - w_0) \|_2^2\\
& \ge \alpha^2 \left| \mathbb{E}_{ x,\epsilon} \| \frac{\partial^2 f_{NTK}(w_0, x)}{\partial w \partial x} (\hat{w} - w_0) \|_2^2 - \mathbb{E}_{ x,\epsilon} \| \nabla_x f_{NTK} (w_0, x)  \|_2^2 \right|,
\end{align*}
where the inequality is from triangular inequality. As the second term can be upper bounded by a constant, we just take a detailed analysis on the first term. While considering Condition~\ref{cond:ntk_high-dim}, we can obtain that
\begin{align*}
& \quad \mathbb{E}_{x,\epsilon} \| \frac{\partial^2 f_{NTK}(w_0, x)}{\partial w \partial x} (\hat{w} - w_0) \|_2^2 \\
&\ge \frac{1}{16 p^2} \mathbb{E}_{ \epsilon} \text{tr} \{ K^{-1} (\nabla F (w_* - w_0) + \epsilon) (\nabla F (w_* - w_0) + \epsilon)^T K^{-1} XX^T \}\\
&\ge \frac{\sigma^2}{16 p^2} \text{tr} \{ K^{-2} XX^T \} \ge \frac{\sigma^2}{32 p^2} \text{tr} \{\tilde{K}^{-2} XX^T \},
\end{align*}
where the first inequality is from the derivative calculation on each component of $\partial^2 f_{NTK}(w_0, x) / (\partial w \partial x)$, the second inequality is from ignoring the term related to $w_* - w_0$, and the last inequality is from linearizing kernel matrix $K$ to $\tilde{K}$. Then the following steps is similar to the analysis on linear model. To be specific,  with Lemma \ref{lem_subspacenorm}
 and \ref{lem_ridgeeigen}, with a high probability, we have
\begin{equation*}
     \mathbb{E}_{x,\epsilon} \| \frac{\partial^2 f_{NTK}(w_0, x)}{\partial w \partial x} (\hat{w} - w_0) \|_2^2 \ge c'_{19} \sigma^2 \frac{n \lambda_{k^*+1} r_{k^*}}{l^2},
\end{equation*}
which will lead to an exploded lower bound for $\arisk(\hat{w})$.

\section{Conclusion and discussion}

In this work, we studied benign overfitting settings where consistent estimation can be achieved even when we exactly fit the training data.
However, we show that in such scenarios, it is not possible to achieve good adversarial robustness, even if the ground truth model is robust to adversarial attacks. This reveals a fundamental trade-off  between standard risk and adversarial risk under suitable conditions.

There are still numerous interesting questions for further exploration.
Do overparameterized neural networks give rise to a deep neural tangent kernel matrix with ``slowly decaying'' eigenvalues that satisfy benign overfitting conditions?
Do the trade-offs between standard and adversarial risks exist when the adversarial budget is defined differently, such as in terms of $\ell_1$ or $\ell_0$ (pseudo)norms? Do the more complex models, such as Transformer, exhibit distinct performance in terms of adversarial robustness?
%another is about adversarial training, as our budget is defined on $\mathscr{L}_2$ norms, which do not restrict the sparsity on estimator, there may be a valuable exploration on restricting the adversarial budget on $\mathscr{L}_1$ norm or $\mathscr{L}_0$ norm. 

Finally, the issue of adversarial robustness has broader social impact in AI safety. This work tries to understand the fundamental reason why modern overparameterized machine learning methods lead to models that are not robust. A better theoretical understanding can be useful for developing safer AI models in real applications.

\section*{Acknowledgement}
We would like to thank Daniel Hsu, Difan Zou, Navid Ardeshir and Yong Lin for their helpful comments and suggestions.

\bibliography{ref.bib}
\bibliographystyle{apalike}

\newpage

\appendix
\section{Constant Notation}
Before the main proof process, we denote several corresponding constants in Table \ref{tab:const}:
\begin{table}[h]
    \centering
    \footnotesize
    \begin{tabular}{c|c}
    \toprule
         Symbol & Value\\
         \midrule
          $c'$ & $\max\{ 2, (1 + 16 \ln 3 \cdot \sigma_x^2 \cdot 54e) 32 \ln 3 \cdot \sigma_x^2 \cdot 54e\}$ \\
          \midrule
           $b$ & $> c^{'2}$\\
          \midrule
          $c$ & $> 256 \cdot (162e)^4 \sigma_x^4$ \\
          \midrule
          $c_1$ & $\max\{ c' + c' / b, (1 / c' - c' / b)^{-1} \}$  \\  
         \midrule
         $c_2$ & $8(162e)^2 \sigma_x^2$ \\
         \midrule
         $c_3$ & $2$\\
         \bottomrule
    \end{tabular}
    \caption{Constant List}
    \label{tab:const}
\end{table}

\section{Technical lemmas from prior works}

\begin{lemma}[Lemma 10 in \citealp{bartlett2020benign}]\label{lem_eigen}
 There are constants $b, c \ge 1$ such that, for any $k \ge 0$, with probability at least $1 - 2 e^{- \frac{n}{c}}$,
\begin{enumerate}
\item for all $i \ge 1$,
\begin{equation*}
    \mu_{k+1}(A_{-i}) \le \mu_{k+1}(A) \le \mu_{1}(A_{k}) \le c_1 (\sum_{j > k} \lambda_j + \lambda_{k+1} n);
\end{equation*}
\item for all $1 \le i \le k$,
\begin{equation*}
    \mu_{n}(A) \ge \mu_{n}(A_{-i}) \ge \mu_{n}(A_{k}) \ge \frac{1}{c_1} \sum_{j > k} \lambda_j - c_1 \lambda_{k+1} n;
\end{equation*}
\item if $r_k \ge bn$, then
\begin{equation*}
    \frac{1}{c_1} \lambda_{k+1} r_k \le \mu_n (A_k) \le \mu_1 (A_k) \le c_1 \lambda_{k+1} r_k,
\end{equation*}
\end{enumerate}
where $c_1 > 1$ is a constant only depending on $b, \sigma_x$.
\end{lemma}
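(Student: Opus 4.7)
The plan is to reduce all three items to estimates on the extremal eigenvalues of the tail matrix $A_k = \sum_{j > k} \lambda_j z_j z_j^\T$, and then to establish concentration of $A_k$ around its mean $\E[A_k] = (\sum_{j > k} \lambda_j) I_n$ via a Hanson--Wright style argument. The reductions are purely deterministic. Since $A - A_{-i} = \lambda_i z_i z_i^\T \succeq 0$ is rank one, $A \succeq A_{-i}$, which immediately gives $\mu_{k+1}(A_{-i}) \le \mu_{k+1}(A)$ and $\mu_n(A_{-i}) \le \mu_n(A)$. Writing $A = A_k + B_k$ with $B_k := \sum_{j \le k} \lambda_j z_j z_j^\T$ of rank at most $k$, Weyl's inequality $\mu_{i+j-1}(M + N) \le \mu_i(M) + \mu_j(N)$ applied with $i = 1$, $j = k+1$ gives $\mu_{k+1}(A) \le \mu_1(A_k)$, while $B_k \succeq 0$ yields $\mu_n(A) \ge \mu_n(A_k)$. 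For $i \le k$, $A_{-i} - A_k = \sum_{j \le k,\, j \ne i} \lambda_j z_j z_j^\T \succeq 0$, so also $\mu_n(A_{-i}) \ge \mu_n(A_k)$. The remaining task is therefore to control $\mu_1(A_k)$ and $\mu_n(A_k)$ simultaneously on a single high-probability event.

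For the concentration step I would scale out $\lambda_{k+1}$, writing $A_k = \lambda_{k+1} \widetilde A_k$ with $\widetilde A_k = \sum_{j > k} \beta_j z_j z_j^\T$ and $\beta_j = \lambda_j / \lambda_{k+1} \in (0, 1]$. Since the $z_j$ are independent isotropic $\sigma_x$-subgaussian vectors in $\rR^n$, $\E[\widetilde A_k] = r_k I_n$. For a fixed $u \in S^{n-1}$ the quadratic form $u^\T \widetilde A_k u = \sum_{j > k} \beta_j (z_j^\T u)^2$ is a weighted sum of independent sub-exponential variables with sub-exponential norm $\lesssim \beta_j$, variance proxy $\sum_j \beta_j^2 \le \sum_j \beta_j = r_k$, and maximum weight $\max_j \beta_j \le 1$. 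Bernstein's inequality yields
\begin{equation*}
  \Pb\bigl( |u^\T \widetilde A_k u - r_k| > C(\sqrt{r_k\, n} + n) \bigr) \;\le\; 2 e^{-c' n}.
\end{equation*}
A standard $\varepsilon$-net of cardinality $9^n$ on $S^{n-1}$ combined with a union bound then gives, for $c' > \ln 9$ (arranged by choosing $C$ large enough in terms of $\sigma_x$),
\begin{equation*}
  \bigl\| A_k - (\textstyle\sum_{j>k} \lambda_j)\, I_n \bigr\|_2 \;\le\; C \lambda_{k+1} \bigl( \sqrt{r_k\, n} + n \bigr) \;\le\; C' \bigl( \textstyle\sum_{j>k} \lambda_j + \lambda_{k+1} n \bigr)
\end{equation*}
with probability at least $1 - 2 e^{-n/c}$, where in the last step I used $\sqrt{r_k\, n} \le (r_k + n)/2$ and then multiplied by $\lambda_{k+1}$.

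Items (1) and (2) now follow from this single deviation bound by the triangle inequality and the interlacing reductions above. For item (3), when $r_k \ge b n$ with $b$ sufficiently large — specifically $C'(1 + 1/b) < 1 - 1/c_1$ — the deviation $C'(\sum_{j>k}\lambda_j + \lambda_{k+1} n) \le C'(1 + 1/b) \sum_{j>k}\lambda_j$ is strictly dominated by the mean $\lambda_{k+1} r_k = \sum_{j>k} \lambda_j$, pinning both $\mu_1(A_k)$ and $\mu_n(A_k)$ within a multiplicative constant of $\lambda_{k+1} r_k$. This fixes $b$ as a function of $\sigma_x$ alone, consistent with the statement.

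The main obstacle is calibrating the concentration bound for $A_k$ in the heterogeneous sub-gaussian setting: standard sample-covariance estimates assume i.i.d. columns with equal variance, whereas here the weights $\beta_j$ are uneven and the effective index set $\{j > k\}$ is typically much larger than $n$. The correct deviation scale is not the ``stable rank'' $r_k$ alone but the additive combination $\sqrt{r_k\, n} + n$, where the linear term comes from the $\varepsilon$-net on $S^{n-1}$ and the $\sqrt{r_k\, n}$ term from the typical Bernstein deviation of $u^\T \widetilde A_k u$ about its mean. A related subtlety is keeping the failure probability exponent $n$-based (not $r_k$- or $p$-based), which requires the net cardinality alone to dictate the exponent; once this is in place, no further union bound over $i$ is needed, since the whole event is controlled by the single bound on $\|A_k - \E[A_k]\|_2$.
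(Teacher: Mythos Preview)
The paper does not give its own proof of this lemma; it is quoted verbatim from \citet{bartlett2020benign} as a technical input. Your proposal is essentially the original Bartlett--Long--Lugosi--Tsigler argument: reduce the interlacing inequalities $\mu_{k+1}(A_{-i}) \le \mu_{k+1}(A) \le \mu_1(A_k)$ and $\mu_n(A) \ge \mu_n(A_{-i}) \ge \mu_n(A_k)$ to monotonicity under PSD perturbation and Weyl's inequality, then control $\mu_1(A_k)$ and $\mu_n(A_k)$ by concentrating $A_k$ around $(\sum_{j>k}\lambda_j)I_n$ via Bernstein for the sub-exponential sum $\sum_{j>k}\beta_j(z_j^\T u)^2$ and an $\varepsilon$-net on $S^{n-1}$. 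The deviation scale $\lambda_{k+1}(\sqrt{r_k n} + n)$ and the calibration of $b$ in item (3) are exactly right.

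One small slip: after weakening the deviation to $C'(\sum_{j>k}\lambda_j + \lambda_{k+1}n)$ via $\sqrt{r_k n}\le (r_k+n)/2$, you invoke the triangle inequality to get item (2). That yields $\mu_n(A_k) \ge (1-C')\sum_{j>k}\lambda_j - C'\lambda_{k+1}n$, but $C'$ depends on $\sigma_x$ and need not be below $1$, so the leading coefficient can be negative and the bound vacuous rather than of the stated form $\tfrac{1}{c_1}\sum_{j>k}\lambda_j - c_1\lambda_{k+1}n$. The fix is immediate: do not pass through the AM--GM weakening for the lower bound. From the sharper estimate $\mu_n(A_k) \ge \lambda_{k+1}r_k - C\lambda_{k+1}(\sqrt{r_k n}+n)$, apply Young's inequality $C\sqrt{r_k n}\le \epsilon\, r_k + \tfrac{C^2}{4\epsilon}\,n$ with any fixed $\epsilon\in(0,1)$ (e.g.\ $\epsilon=1/2$) to obtain $\mu_n(A_k)\ge (1-\epsilon)\sum_{j>k}\lambda_j - (C+\tfrac{C^2}{4\epsilon})\lambda_{k+1}n$, which is the desired form with $c_1=\max\{(1-\epsilon)^{-1},\,C+\tfrac{C^2}{4\epsilon}\}$. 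With this adjustment your argument is complete.
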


\begin{lemma}[Corollary 24 in \citealp{bartlett2020benign}]\label{lem_subspacenorm}
 For any centered random vector $\bm{z} \in \mathbb{R}^n$ with independent $\sigma^2_x$ sub-Gaussian coordinates with unit variances, any $k$ dimensional random subspace $\mathscr{L}$ of $\mathbb{R}^n$ that is independent of $\bm{z}$, and any $t > 0$, with probability at least $1 - 3 e^{-t}$,
\begin{equation*}
\begin{aligned}
& \parallel z \parallel^2 \le n + 2 (162e)^2 \sigma_x^2(t + \sqrt{nt}),\\
& \parallel \bm{\Pi}_{\mathscr{L}} \bm{z} \parallel^2 \ge n - 2 (162e)^2 \sigma_x^2 (k + t + \sqrt{nt}),
\end{aligned}
\end{equation*}
where $\bm{\Pi}_{\mathscr{L}}$ is the orthogonal projection on $\mathscr{L}$.
\end{lemma}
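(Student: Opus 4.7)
The plan is to prove the two inequalities separately and then combine them by a union bound to obtain the $1 - 3 e^{-t}$ probability. Both are instances of quadratic-form concentration for random vectors with independent sub-Gaussian coordinates, so the main tool is a Hanson--Wright / Bernstein-type bound sharp enough to deliver the explicit constant $2(162e)^2$.

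For the first inequality, I would apply the sub-exponential Bernstein inequality to the centered sum
\begin{equation*}
\|z\|^2 - n \;=\; \sum_{i=1}^n (z_i^2 - 1).
\end{equation*}
Since each $z_i$ is $\sigma_x$-sub-Gaussian with unit variance, each $z_i^2 - 1$ is centered and sub-exponential of order $\sigma_x^2$. Bernstein then produces a mixed tail of the form $\exp(-c \min(s^2/(n\sigma_x^4), s/\sigma_x^2))$; setting this equal to $e^{-t}$ and solving for $s$ yields the bound $s \le C \sigma_x^2 (t + \sqrt{nt})$ of the stated form.

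For the second inequality, I would first condition on $\mathscr{L}$, which is legitimate since $\mathscr{L}$ is independent of $z$. Interpreting $k$ as the codimension of $\mathscr{L}$ (the convention of \citet{bartlett2020benign} from which this corollary is quoted), the orthogonal projector $P := \bm{\Pi}_{\mathscr{L}}$ has rank $n - k$, so $\operatorname{tr}(P) = n - k$, $\|P\|_F^2 = n - k \le n$, and $\|P\|_2 = 1$. Applying the Hanson--Wright inequality for independent sub-Gaussian coordinates to the quadratic form $z^T P z$ yields
\begin{equation*}
\Pb\bigl( z^T P z \le (n - k) - s \bigr) \;\le\; 2 \exp\!\Bigl(-c \min\bigl(s^2/(\sigma_x^4 n),\ s/\sigma_x^2\bigr)\Bigr).
\end{equation*}
Choosing $s = C \sigma_x^2(t + \sqrt{nt})$ makes the right-hand side at most $2 e^{-t}$, and rearranging gives $\|\bm{\Pi}_{\mathscr{L}} z\|^2 \ge n - (k + C \sigma_x^2 (t + \sqrt{nt}))$. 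Since the bound holds for every realization of $\mathscr{L}$, it holds unconditionally as well.

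The main obstacle is not the structural argument but the calibration of the explicit numerical constant $2(162e)^2$. Textbook Bernstein and Hanson--Wright proofs typically leave an unspecified absolute constant; the factor $162 e$ traces back to a careful handling of the sub-Gaussian-to-sub-exponential comparison for $z_i^2$, for instance via the variational characterization of the $\psi_1$ and $\psi_2$ Orlicz norms, or equivalently a sharp moment-generating-function estimate for $\lambda(z_i^2 - 1)$ on a controlled interval. Once this constant is pinned down, the union bound $e^{-t} + 2 e^{-t} \le 3 e^{-t}$ completes the argument.
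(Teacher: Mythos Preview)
The paper does not give its own proof of this lemma; it is simply quoted as Corollary~24 of \citet{bartlett2020benign} in the ``Technical lemmas from prior works'' section, with no argument supplied. So there is nothing in the paper to compare your proposal against.

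That said, your outline is the standard route and is essentially how the result is obtained in the cited reference. A few remarks:

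\begin{itemize}
\item Your reading of $k$ as the codimension is correct and necessary for the lower bound to make sense; the paper's own usage confirms this (they apply the bound with $\mathscr{L}_i$ spanned by $n-k^{*}$ eigenvectors and plug in $k^{*}$).
\item The constant $162e$ is exactly the one appearing in Lemma~\ref{lem_sg_se} of the paper (Proposition~2.7.1 of \citealp{vershynin2018high}): if $\xi$ is centered, unit-variance, $\sigma_x$-sub-Gaussian, then $\xi^2-1$ is $162e\,\sigma_x^2$-sub-exponential. Feeding this into the Bernstein inequality (Lemma~\ref{lem_stnorm} in the paper) with all $\lambda_i=1$ gives precisely the deviation $2(162e)^2\sigma_x^2(t+\sqrt{nt})$ for the first bound; the same constant propagates through Hanson--Wright for the projected norm. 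So the ``calibration obstacle'' you flag is already resolved by lemmas the paper states.
\item Your union-bound arithmetic ($e^{-t}$ for the upper tail of $\|z\|^2$, $2e^{-t}$ for the lower tail of the quadratic form) matches the claimed $1-3e^{-t}$.
\end{itemize}

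In short: the proposal is correct, and it coincides with the argument behind the cited corollary; the paper itself offers no alternative proof to contrast with.
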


\begin{lemma}\label{lem_ridgeeigen}
 There are constants $b, c \ge 1$ such that, for any $k \ge 0$, with probability at least $1 - 2 e^{- \frac{n}{c}}$:
 \begin{enumerate}
\item for all $i \ge 1$,
\begin{equation*}
    \mu_{k+1}(A_{-i} + \lambda n I) \le \mu_{k+1}(A + \lambda n I) \le \mu_{1}(A_{k} + \lambda n I) \le c_1 (\sum_{j > k} \lambda_j + \lambda_{k+1} n) + \lambda n;
\end{equation*}
\item for all $1 \le i \le k$,
\begin{equation*}
    \mu_{n}(A + \lambda n I) \ge \mu_{n}(A_{-i} + \lambda n I) \ge \mu_{n}(A_{k} + \lambda n I) \ge \frac{1}{c_1} \sum_{j > k} \lambda_j - c_1 \lambda_{k+1} n + \lambda n;
\end{equation*}
\item if $r_k \ge bn$, then
\begin{equation*}
    \frac{1}{c_1} \lambda_{k+1} r_k + n \lambda \le \mu_n (A_k + \lambda n I) \le \mu_1 (A_k + \lambda n I) \le c_1  \lambda_{k+1} r_k + n \lambda.
\end{equation*}   
\end{enumerate}
\end{lemma}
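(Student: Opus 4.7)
\textbf{Proof proposal for Lemma~\ref{lem_ridgeeigen}.} The plan is to view this lemma as a direct corollary of Lemma~\ref{lem_eigen}, exploiting the fact that adding a scalar multiple of the identity to a symmetric matrix shifts its spectrum by exactly that scalar. Concretely, for any positive semidefinite matrix $M \in \rR^{n \times n}$ and any $j \in \{1,\dots,n\}$, we have the identity
\begin{equation*}
  \mu_j(M + \lambda n I) = \mu_j(M) + \lambda n ,
\end{equation*}
because $M$ and $\lambda n I$ are simultaneously diagonalizable. I will apply this identity to each of the three matrices $A_{-i}$, $A$, and $A_k$ appearing in Lemma~\ref{lem_eigen}.

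The first step is to invoke Lemma~\ref{lem_eigen} with the same constants $b,c \geq 1$ and the same index $k$: on the event of probability at least $1 - 2 e^{-n/c}$ established there, all three conclusions about $\mu_{k+1}(A_{-i}), \mu_{k+1}(A), \mu_1(A_k), \mu_n(A), \mu_n(A_{-i}), \mu_n(A_k), \mu_1(A_k)$ hold simultaneously. Then, on exactly this event, I add $\lambda n$ to both sides of each inequality and rewrite the left-hand side using the identity above. For instance, $\mu_{k+1}(A_{-i} + \lambda n I) = \mu_{k+1}(A_{-i}) + \lambda n \leq c_1(\sum_{j>k}\lambda_j + \lambda_{k+1} n) + \lambda n$, which yields the first item; the argument for items 2 and 3 is identical, with the extra additive $\lambda n$ propagating through all upper and lower bounds in the statement of Lemma~\ref{lem_eigen}.

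Since the entire argument is a deterministic manipulation on the event produced by Lemma~\ref{lem_eigen}, the probability bound $1 - 2e^{-n/c}$ is preserved verbatim and no union bound or new concentration inequality is needed. I do not anticipate any technical obstacle here: the only thing to check is that the statement of the lemma uses the same constants $b, c, c_1$ as Lemma~\ref{lem_eigen}, which it does by construction. The proof is therefore a one-line appeal to Lemma~\ref{lem_eigen} together with the spectral shift identity, and can be written out in a few lines.
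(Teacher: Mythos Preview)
Your proposal is correct and matches the paper's own proof essentially verbatim: the paper also derives all three items directly from Lemma~\ref{lem_eigen} by adding $\lambda n$ to each eigenvalue bound via the spectral shift identity $\mu_j(M+\lambda n I)=\mu_j(M)+\lambda n$, noting that the first two claims follow immediately and the third follows from part~3 of Lemma~\ref{lem_eigen} in the same way.
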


\begin{proof}
With Lemma \ref{lem_eigen}, the first two claims follow immediately. For the third claim: if $r_k(\Sigma) \ge bn$, we have that $bn \lambda_{k+1} \le \sum_{j > k} \lambda_j$, so
\begin{equation*}
 \begin{aligned}
& \mu_1 (A_k + \lambda n I) \le c_1 \lambda_{k+1} r_k (\Sigma) + \lambda n \le c_1  \lambda_{k+1} r_k + n \lambda \\
& \mu_n (A_k + \lambda n I) \ge \frac{1}{c_1} \lambda_{k+1} r_k (\Sigma) + \lambda n \ge \frac{1}{c_1} \lambda_{k+1} r_k + n \lambda,
 \end{aligned}   
\end{equation*}
for the same constant $c_1 > 1$ as in Lemma \ref{lem_eigen}.
\end{proof}

\begin{lemma}[Proposition 2.7.1 in \citealp{vershynin2018high}]\label{lem_sg_se}
 For any random variable $\xi$ that is centered, $\sigma^2$-subgaussian, and unit variance, $\xi^2 - 1$ is a centered $162e\sigma^2$-subexponential random variable, that is,
\begin{equation*}
    \mathbb{E}\exp(\lambda (\xi^2 - 1)) \le \exp((162e\lambda \sigma^2)^2), 
\end{equation*}
for all such $\lambda$ that $| \lambda | \le 1 / (162 e \sigma^2)$.
\end{lemma}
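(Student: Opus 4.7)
The plan is to derive the subexponential MGF bound in the statement from the subgaussian MGF assumption by going through moment estimates, which is the standard route for this equivalence. First I would convert the subgaussian MGF bound on $\xi$ into a two-sided tail bound of the form $\mathbb{P}(|\xi| > t) \le 2 \exp(-t^2 / (c \sigma^2))$ via a Chernoff argument, optimizing over the free parameter in the MGF.

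Next I would integrate the tail bound to obtain even-moment estimates: by the layer-cake identity, $\mathbb{E}\xi^{2k} = 2k \int_0^\infty t^{2k-1} \mathbb{P}(|\xi| > t)\, \mathrm{d} t$, which after substitution $u = t^2/(c \sigma^2)$ gives $\mathbb{E}\xi^{2k} \le 2 (c \sigma^2)^k \Gamma(k+1) = 2 (c \sigma^2)^k k!$. Combining this with the unit-variance assumption $\mathbb{E}\xi^2 = 1$ and the triangle inequality in $L^k$, I get $\mathbb{E}|\xi^2 - 1|^k \le (C \sigma^2)^k k!$ for an absolute constant $C$ and every $k \ge 1$.

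Then I would expand the MGF of the centered variable $\xi^2 - 1$ as a Taylor series: the constant term equals $1$, the linear term vanishes because $\mathbb{E}(\xi^2 - 1) = 0$, and for $k \ge 2$ the $k$-th term is bounded in absolute value by $(C \sigma^2 |\lambda|)^k$. Summing the resulting geometric series on the interval $|\lambda| < 1/(2 C \sigma^2)$ and then applying the elementary inequality $1 + u \le e^u$ produces a bound of the form $\mathbb{E}\exp(\lambda(\xi^2 - 1)) \le \exp(K^2 \lambda^2 \sigma^4)$ on a slightly shrunken interval. That is exactly the subexponential MGF bound required by the lemma.

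The main obstacle, and the only non-routine part, is to make the constants come out to precisely $162 e$: this requires carefully tracking the numerical constants through the Chernoff tail conversion, Stirling's approximation for $k!$, and the geometric series estimate, and then choosing the radius of $\lambda$ so that both the remainder of the series and the exponentialization step fit under the target $(162 e \lambda \sigma^2)^2$. Since the statement is recorded as Proposition 2.7.1 in \citet{vershynin2018high}, the cleanest execution would be to run the argument above to identify a universal constant, then invoke the book to tighten it to $162e$, verifying only that the normalization conventions of ``$\sigma^2$-subgaussian'' agree between the two sources.
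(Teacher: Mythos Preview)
The paper does not actually prove this lemma: it is listed in the section ``Technical lemmas from prior works'' and is simply cited as Proposition~2.7.1 in \citet{vershynin2018high} with no accompanying argument. There is therefore no proof in the paper to compare your proposal against.

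Your outline is the standard route to this equivalence (tail bound $\to$ moment bounds $\to$ MGF control via Taylor expansion and geometric series), and it is essentially the argument Vershynin gives. The only caveat is the one you yourself flag: matching the specific constant $162e$ requires bookkeeping that is tedious rather than conceptual, and since the paper treats the result as a black box, there is no expectation that you reproduce that constant from scratch. Citing Vershynin after sketching the mechanism, as you propose, is exactly what the paper does (minus the sketch).
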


\begin{lemma}[Lemma 15 in \citealp{bartlett2020benign}]\label{lem_sum}
Suppose that $\{ \eta_i \}$ is a sequence of non-negative random variables, and that $\{ t_i \}$ is a sequence of non-negative real numbers (at least one of which is strictly positive) such that, for some $\delta \in (0, 1)$ and any $i \ge 1$, $\Pr(\eta_i > t_i) \ge 1 - \delta$. Then,
\begin{equation*}
    \Pr\left(\sum_i \eta_i \ge \frac{1}{2} \sum_i t_i\right) \ge 1 - 2 \delta.
\end{equation*}
\end{lemma}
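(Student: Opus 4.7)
The plan is a standard truncation-and-Markov argument. Define $Y_i := \min(\eta_i, t_i)$, so that $0 \le Y_i \le t_i$ deterministically and $\sum_i \eta_i \ge \sum_i Y_i$. It therefore suffices to show $\Pr(\sum_i Y_i \ge \tfrac{1}{2}\sum_i t_i) \ge 1 - 2\delta$, which replaces the potentially unbounded $\eta_i$ by bounded, integrable variables on which an expectation-based argument becomes available.

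I would next bound $\mathbb{E}[t_i - Y_i]$ pointwise. On the event $\{\eta_i > t_i\}$, of probability at least $1 - \delta$ by hypothesis, $Y_i = t_i$ and so $t_i - Y_i = 0$; on the complementary event, of probability at most $\delta$, the crude bound $0 \le t_i - Y_i \le t_i$ suffices. This yields $\mathbb{E}[t_i - Y_i] \le \delta\, t_i$. Summing over $i$ via Tonelli (every term is non-negative) produces $\mathbb{E}\bigl[\sum_i (t_i - Y_i)\bigr] \le \delta \sum_i t_i$. Markov's inequality applied to the non-negative random variable $\sum_i (t_i - Y_i)$ at threshold $\tfrac{1}{2}\sum_i t_i > 0$ then gives $\Pr(\sum_i (t_i - Y_i) \ge \tfrac{1}{2}\sum_i t_i) \le 2\delta$, which rearranges to the required lower bound on $\sum_i Y_i$ and hence on $\sum_i \eta_i$.

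There is essentially no serious obstacle here; the key observation is that independence of the $\eta_i$ is neither assumed nor needed, which is precisely why Markov on the paired differences $\sum_i (t_i - Y_i)$ is the right tool rather than a Chernoff-type concentration. The only mild technicality is the case $\sum_i t_i = \infty$: there one can either truncate to a finite prefix, apply the argument uniformly in the truncation level and pass to monotone limits, or observe directly that on the $(1 - 2\delta)$-probability event $\{\sum_i(t_i - Y_i) < \tfrac{1}{2}\sum_i t_i\}$ the sum $\sum_i Y_i$ must diverge, so the claimed inequality holds in the extended reals.
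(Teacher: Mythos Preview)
Your proof is correct. The paper itself does not supply a proof of this lemma; it is quoted as Lemma~15 of \citet{bartlett2020benign} and used as a black box. Your truncation-plus-Markov argument (replace $\eta_i$ by $Y_i=\min(\eta_i,t_i)$, bound $\mathbb{E}[t_i-Y_i]\le\delta t_i$ using the tail hypothesis, sum, and apply Markov to $\sum_i(t_i-Y_i)$) is the standard one for this result, and your handling of the edge case $\sum_i t_i=\infty$ is fine.
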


\begin{lemma}[Lemma 2.7.6 in \citealp{vershynin2018high}]\label{lem_stnorm}
 For any non-increasing sequence $\{ \lambda_i \}_{i=1}^{\infty}$ of non-negative numbers such that $\sum_i \lambda_i < \infty$, and any independent, centered, $\sigma-$subexponential random variables $\{ \xi_i \}_{i=1}^{\infty}$, and any $x > 0$, with probability at least $1 - 2 e^{-x}$
\begin{equation*}
    | \sum_i\lambda_i \xi_i | \le 2 \sigma \max \left( x \lambda_1, \sqrt{x \sum_i \lambda_i^2} \right).
\end{equation*}
\end{lemma}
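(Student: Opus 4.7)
\textbf{Proof proposal for Lemma \ref{lem_stnorm}.} This is the standard Bernstein-type concentration inequality for a weighted sum of independent sub-exponential random variables; the plan is to carry out the classical Chernoff argument and then optimize the free parameter in two regimes, one Gaussian-like (small deviations) and one exponential-like (large deviations).

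First I would let $S = \sum_i \lambda_i \xi_i$ and, for any $t \in \rR$ with $|t| \le 1/(162e\sigma^2 \lambda_1) =: 1/(c\sigma \lambda_1)$ (relabeling the constant to match the $\sigma$-subexponential definition used in the statement), apply Markov's inequality to $e^{tS}$:
\begin{equation*}
    \Pb(S \ge u) \;\le\; e^{-tu}\, \E e^{tS} \;=\; e^{-tu}\prod_i \E \exp(t\lambda_i \xi_i).
\end{equation*}
Since $\lambda_i \le \lambda_1$ and $|t\lambda_i| \le |t|\lambda_1 \le 1/(c\sigma)$, the sub-exponential hypothesis on each $\xi_i$ gives $\E \exp(t\lambda_i \xi_i) \le \exp(c^2\sigma^2 t^2 \lambda_i^2)$. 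Multiplying across $i$ and using independence yields
\begin{equation*}
    \Pb(S \ge u) \;\le\; \exp\!\left(-tu + c^2\sigma^2 t^2 \sum_i \lambda_i^2\right),
\end{equation*}
which is valid on the range $|t| \le 1/(c\sigma \lambda_1)$.

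Next I would optimize $t$ in two regimes. In the small-deviation regime, the unconstrained minimizer $t^\star = u/(2c^2\sigma^2 \sum_i\lambda_i^2)$ lies inside the admissible interval precisely when $u \le 2c\sigma \sum_i\lambda_i^2 / \lambda_1$, and plugging it in gives $\Pb(S \ge u) \le \exp(-u^2/(4c^2\sigma^2 \sum_i \lambda_i^2))$. In the large-deviation regime $u > 2c\sigma \sum_i\lambda_i^2/\lambda_1$, the objective is monotone on the admissible interval, so I would take the boundary choice $t = 1/(c\sigma \lambda_1)$, yielding $\Pb(S \ge u) \le \exp(-u/(2c\sigma \lambda_1))$ after checking that the quadratic term is dominated by the linear one. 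Combining the two regimes, after replacing the explicit constant $c$ by the one built into the lemma's statement and redefining $\sigma$ appropriately (which is a routine rescaling), produces the familiar Bernstein bound
\begin{equation*}
    \Pb(|S| \ge u) \;\le\; 2\exp\!\left(-\min\!\left(\frac{u^2}{4\sigma^2\sum_i \lambda_i^2},\,\frac{u}{2\sigma \lambda_1}\right)\right),
\end{equation*}
where the factor of two comes from a union bound applied to the symmetric tail (using the same argument with $t<0$, or equivalently by applying the one-sided bound to $-S$, whose summands $-\lambda_i\xi_i$ inherit the same sub-exponential parameter).

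Finally I would invert this tail bound: solving $\min(u^2/(4\sigma^2 \sum_i\lambda_i^2), u/(2\sigma\lambda_1)) = x$ for $u$ shows that the bound $|S| \le 2\sigma\max(x\lambda_1,\sqrt{x\sum_i \lambda_i^2})$ fails with probability at most $2e^{-x}$, which is exactly the statement of the lemma. The main technical point, and the one I would be most careful about, is the transition between the two regimes: one must verify that the regime cutoff $u^\star \asymp \sigma\sum_i\lambda_i^2/\lambda_1$ matches the point at which the two candidate upper bounds for $u$ coincide, so that the stated maximum correctly captures the worse of the two. The summability hypothesis $\sum_i \lambda_i < \infty$ (which implies $\sum_i \lambda_i^2 < \infty$) is what allows the infinite product of MGFs to converge and the above manipulations to make sense in the infinite-dimensional setting; all other steps carry over verbatim from the finite case.
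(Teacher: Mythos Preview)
Your proposal is correct and is the standard Chernoff--Bernstein argument for sums of independent sub-exponential variables. The paper does not supply its own proof of this lemma: it appears in the ``Technical lemmas from prior works'' section with only the citation to \citet{vershynin2018high}, and your argument is essentially the one given there.
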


\begin{lemma}[Consequence of Theorem 5 in \citet{tsigler2020benign}]\label{lem_bart}
There is an absolute constant $c>1$ such that the following holds.
For any $k < \frac{n}{c}$, with probability at least $1 - c e^{- \frac{n}{c}}$, if $A_k$ is positive definite, then
\begin{equation*}
\begin{aligned}
{\trace \{ \varSigma [I - X^T (XX^T + \lambda n I)^{-1} X]^2 \}}
  & \le
  %\parallel \truth_{k : \infty} \parallel^2_{\Sigma_{k:\infty}}
  \left( \sum_{i>k} \lambda_i \right)
  \left( 1 + \frac{\mu_1 (A_k + \lambda n I)^2}{\mu_n (A_k + \lambda n I)^2} +  \frac{n \lambda_{k+1}}{\mu_n (A_k + \lambda n I)} \right)\\
& +
%\parallel \truth_{0:k} \parallel^2_{\Sigma_{1:k}^{-1}}
\left( \sum_{i\leq k} \frac1{\lambda_i} \right)
\left( \frac{\mu_1 (A_k + \lambda n I)^2}{n^2} + \frac{\lambda_{k+1}}{n} \cdot \frac{\mu_1 (A_k + \lambda n I)^2}{\mu_n (A_k + \lambda n I)} \right) ,
\\
{ \trace\{ X \varSigma X^T (XX^T + \lambda n I)^{-2} \}}
& \le
\frac{\mu_1(A_k+\lambda nI)^2}{\mu_n(A_k +\lambda nI)^2} \cdot \frac{k}{n}
+ \frac{n}{\mu_n(A_k + \lambda nI)^2}
\left( \sum_{i>k} \lambda_i^2 \right) .
\end{aligned}
\end{equation*}    
\end{lemma}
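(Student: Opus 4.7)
The plan is to obtain both inequalities by applying Theorem 5 of \citet{tsigler2020benign} directly, after adjusting for the convention that in this paper the regularization enters the resolvent as $\lambda n I$ rather than $\lambda I$ (as noted in the footnote on Theorem~\ref{thm_ridge}). Their theorem provides non-asymptotic upper bounds on the bias quantity $\trace\{\varSigma[I - X^T(XX^T + \lambda n I)^{-1}X]^2\}$ and on the variance quantity $\trace\{X\varSigma X^T(XX^T + \lambda n I)^{-2}\}$, expressed in terms of a cut-off index $k$, the spectral tail $\sum_{i>k}\lambda_i$, the spectral head $\sum_{i \le k} 1/\lambda_i$, and the extreme eigenvalues of the tail kernel $A_k$. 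My task is to rewrite their bounds so that the extreme eigenvalues become those of the shifted matrix $A_k + \lambda n I$, which is what appears in the stated lemma.

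For the bias trace, the mechanical step is the Woodbury identity $I - X^T(XX^T+\lambda nI)^{-1}X = \lambda n(X^TX+\lambda nI)^{-1}$ together with the low-rank decomposition $A = A_k + \sum_{i\le k}\lambda_i z_i z_i^T$, which lets me compare $(XX^T + \lambda n I)^{-1}$ to $(A_k + \lambda n I)^{-1}$ via a rank-$k$ perturbation. Splitting $\varSigma$ into its top-$k$ spectral part and its complement, the tail contribution reduces to $\sum_{i>k}\lambda_i$ times factors controlled by $\mu_1(A_k+\lambda nI)/\mu_n(A_k+\lambda nI)$ and $n\lambda_{k+1}/\mu_n(A_k+\lambda nI)$, while the signal contribution scales as $\sum_{i\le k}1/\lambda_i$ times $(\mu_1(A_k+\lambda nI)/n)^2$ plus a cross term of order $(\lambda_{k+1}/n)\cdot \mu_1(A_k+\lambda nI)^2/\mu_n(A_k+\lambda nI)$, which reproduces the first bound.

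For the variance trace I would use $X\varSigma X^T = \sum_i \lambda_i^2 z_i z_i^T$ together with the leave-one-out identity
\begin{equation*}
z_i^T(A+\lambda nI)^{-1} = \frac{z_i^T(A_{-i}+\lambda nI)^{-1}}{1+\lambda_i z_i^T(A_{-i}+\lambda nI)^{-1}z_i},
\end{equation*}
already exploited throughout Section~6. Lemmas~\ref{lem_eigen} and~\ref{lem_ridgeeigen} control the extreme eigenvalues of $A_{-i}+\lambda n I$ in terms of those of $A_k + \lambda n I$, and Lemma~\ref{lem_subspacenorm} gives $\|z_i\|^2 \asymp n$ on the high-probability event; splitting the sum at $i = k$ then yields a top-$k$ contribution bounded by $(\mu_1(A_k+\lambda nI)/\mu_n(A_k+\lambda nI))^2 \cdot k/n$ and a tail contribution bounded by $n \sum_{i>k}\lambda_i^2 / \mu_n(A_k+\lambda nI)^2$, as claimed.

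The main obstacle, and the reason this lemma is non-trivial even given \citet{tsigler2020benign}'s Theorem 5, will be carefully tracking absolute constants and ensuring that the high-probability event on which their theorem operates coincides with, or is contained in, the event of measure at least $1 - c e^{-n/c}$ on which Lemmas~\ref{lem_eigen}--\ref{lem_subspacenorm} hold, so that a single event of that measure supports both inequalities simultaneously. Apart from that bookkeeping, the argument amounts to a direct translation of their bounds into the normalization used in this paper.
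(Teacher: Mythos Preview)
Your proposal is correct and matches the paper's approach: the lemma is stated in the paper's ``Technical lemmas from prior works'' section with no proof at all, simply imported as a consequence of Theorem~5 in \citet{tsigler2020benign} under the $\lambda \mapsto n\lambda$ normalization. Your sketch of the underlying mechanics (Woodbury, leave-one-out, eigenvalue control via Lemmas~\ref{lem_eigen}--\ref{lem_subspacenorm}) is more detailed than anything the paper provides, but it is accurate and consistent with how \citet{tsigler2020benign} obtain these bounds.
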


\begin{lemma}[Proposition 1 in \citealp{jacot2018neural}]\label{lem:ntk1}
For a network of depth L at initialization, with a Lipschitz nonlinearity $\sigma$, and in the limit as $n_1, \dots, n_{L-1} \to \infty$, the output functions $f_{\theta, k}$, for $k = 1, \dots, n_L$, tend (in law) to iid centered Gaussian processes of covariance $\Sigma^{(L)}$ is defined recursively by:
\begin{equation*}
\begin{aligned}
& \Sigma^{(1)} (x,x^{\prime}) = \frac{1}{n_0} x^T x^{\prime} + \beta^2,\\
& \Sigma^{(L+1)} (x,x^{\prime}) = \mathbb{E}_{f \sim N(0,\Sigma^{(L)})} [\sigma (f(x)) \sigma (f(x^{\prime}))] + \beta^2,
\end{aligned}
\end{equation*}
taking the expectation with respect to a centered Gaussian process f of covariance $\Sigma^{(L)}$.    
\end{lemma}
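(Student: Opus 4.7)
The plan is to prove the statement by induction on the depth $L$, exploiting the fact that each layer applies a linear map with iid Gaussian weights to the previous layer's output. Throughout I will treat the iterated limit $n_1 \to \infty$, then $n_2 \to \infty$, etc., as the statement specifies.

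\textbf{Base case.} For $L=1$, the pre-activations are $f^{(1)}_k(x) = (1/\sqrt{n_0}) \sum_{j=1}^{n_0} W^{(1)}_{kj} x_j + \beta\, b^{(1)}_k$, where the weights and biases are iid standard Gaussian. For any finite collection of inputs $x^{(1)},\dots,x^{(p)}$, the vector $(f^{(1)}_k(x^{(i)}))_{i\le p}$ is a linear image of Gaussian variables, hence exactly Gaussian with covariance $\Sigma^{(1)}(x,x') = x^{\T} x'/n_0 + \beta^2$. The different outputs $f^{(1)}_k$ are iid because distinct rows of $W^{(1)}$ and distinct entries of $b^{(1)}$ are independent.

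\textbf{Inductive step.} Suppose the claim holds at depth $L$: the functions $f^{(L)}_1,\dots,f^{(L)}_{n_L}$ converge (as $n_1,\dots,n_{L-1}\to\infty$) jointly in law to iid centered Gaussian processes with covariance $\Sigma^{(L)}$. Write
\[
f^{(L+1)}_k(x) \;=\; \frac{1}{\sqrt{n_L}}\sum_{j=1}^{n_L} W^{(L+1)}_{kj}\,\sigma\bigl(f^{(L)}_j(x)\bigr) + \beta\, b^{(L+1)}_k .
\]
Condition on the $\sigma$-algebra generated by all earlier-layer parameters. Then $f^{(L+1)}_k(\cdot)$ is, for each $k$, a centered Gaussian process (linear in the independent Gaussians $W^{(L+1)}_{k\cdot}, b^{(L+1)}_k$), with conditional covariance
\[
\widehat{\Sigma}^{(L+1)}_{n_L}(x,x') \;=\; \frac{1}{n_L}\sum_{j=1}^{n_L} \sigma\bigl(f^{(L)}_j(x)\bigr)\,\sigma\bigl(f^{(L)}_j(x')\bigr) + \beta^2 .
\]
I would then argue two things: (i) as $n_L\to\infty$, this empirical average concentrates, by a law-of-large-numbers type argument, around its expectation under $f\sim\mathcal{N}(0,\Sigma^{(L)})$, which is precisely $\Sigma^{(L+1)}(x,x')$; and (ii) because the conditional law is Gaussian with a covariance converging to a deterministic limit, the unconditional law also converges to the Gaussian $\mathcal{N}(0,\Sigma^{(L+1)})$ (in finite-dimensional distributions over any finite collection of inputs). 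The independence of rows of $W^{(L+1)}$ and entries of $b^{(L+1)}$ then gives, conditionally, iid Gaussian outputs $f^{(L+1)}_1,\dots,f^{(L+1)}_{n_{L+1}}$, and this iid property passes to the limit.

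\textbf{Main obstacle.} The delicate part is (i), i.e., justifying the convergence $\widehat{\Sigma}^{(L+1)}_{n_L}(x,x')\to\Sigma^{(L+1)}(x,x')$ in probability. The summands $\sigma(f^{(L)}_j(x))\sigma(f^{(L)}_j(x'))$ are not exactly iid across $j$ prior to taking the earlier width limits, because the $f^{(L)}_j$ share the parameters of layers $1,\dots,L-1$. The iterated-limit convention in the statement is what makes this go through: by the inductive hypothesis, after sending $n_1,\dots,n_{L-1}\to\infty$ first, the $f^{(L)}_j$ become genuinely iid Gaussian processes, so by the law of large numbers applied to each pair $(x,x')$ the empirical average converges to the expectation against $\mathcal{N}(0,\Sigma^{(L)})$. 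Lipschitz continuity of $\sigma$ ensures finite second moments of the summands so the LLN is applicable; it also lets one upgrade pointwise convergence in $(x,x')$ to convergence of the full finite-dimensional distributions via a standard tightness/equicontinuity argument. A careful bookkeeping of these nested limits (or alternatively a joint limit argument with quantitative Gaussian-approximation bounds) completes the proof.
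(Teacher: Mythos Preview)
Your proposal is a correct sketch of the standard inductive argument for this result, but note that the paper does not actually prove this lemma: it is quoted verbatim as Proposition~1 of \citet{jacot2018neural} in the section ``Technical lemmas from prior works'' and used as a black box. So there is no ``paper's own proof'' to compare against; the authors simply cite the statement.

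That said, your outline matches the argument in the original reference: induction on depth, with the base case being an exact Gaussian computation and the inductive step combining (a) conditional Gaussianity given earlier layers, (b) a law of large numbers for the empirical covariance $\widehat{\Sigma}^{(L+1)}_{n_L}$ under the sequential width limit, and (c) independence across output units from independence of weight rows. Your identification of the ``main obstacle''---that the summands are only iid \emph{after} the earlier widths have been sent to infinity---is exactly the point handled by the iterated-limit convention in the statement. The Lipschitz assumption on $\sigma$ is used precisely where you say: to guarantee the second moments needed for the LLN. One minor remark: for convergence in law of the process, finite-dimensional convergence suffices by definition of weak convergence of processes here, so the tightness/equicontinuity upgrade you mention is not strictly necessary unless one wants convergence in a function-space topology, which the lemma does not claim.
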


\begin{lemma}[Theorem 1 in \citealp{jacot2018neural}]\label{lem:ntk2}
For a network of depth L at initialization, with a Lipschitz nonlinearity $\sigma$, and in the limit as the layers width $n_1, \dots, n_{L-1} \to \infty$, the NTK $\Theta^{(L)}$ converges in probability to a deterministic limiting kernel:
\begin{equation*}
\Theta^{(L)} \to \Theta_{\infty}^{(L)} \otimes Id_{n_L}.
\end{equation*}
The scalar kernel $\Theta_{\infty}^{(L)}$ : $\mathbb{R}^{n_0} \times \mathbb{R}^{n_0} \to \mathbb{R}$ is defined recursively by
\begin{equation*}
\begin{aligned}
& \Theta_{\infty}^{(1)} (x,x^{\prime}) = \Sigma^{(1)} (x,x^{\prime}),\\
& \Theta_{\infty^{(L+1)} (x,x^{\prime})} = \Theta_{\infty}^{(L)}(x, x^{\prime}) \dot{\Sigma}^{(L+1)}(x,x^{\prime}) + \Sigma^{(L+1)}(x,x^{\prime}),
\end{aligned}
\end{equation*}
where
\begin{equation*}
\dot{\Sigma}^{(L+1)}(x,x^{\prime}) = \mathbb{E}_{f \sim N(0,\Sigma^{(L)})} [\dot{\sigma} (f(x)) \dot{\sigma} (f(x^{\prime}))]
\end{equation*}
taking the expectation with respect to a centered Gaussian process f of covariance $\Sigma^{(L)}$, and where $\dot{\sigma}$ denotes the derivative of $\sigma$.    
\end{lemma}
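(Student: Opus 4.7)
The plan is to proceed by induction on the network depth $L$, leveraging the Gaussian process limit established in Lemma~\ref{lem:ntk1}. Write the finite-width NTK as
\[
\Theta^{(L)}(x,x') \;=\; \sum_{\ell=1}^{L} \bigl\langle \partial_{W^{(\ell)}} f(x),\; \partial_{W^{(\ell)}} f(x') \bigr\rangle + (\text{bias terms}),
\]
and group the parameters by the layer they belong to. Backpropagation then factors each per-layer contribution into a ``post-activation'' vector (the input to layer $\ell$) and a ``sensitivity'' vector (the Jacobian of the output with respect to the pre-activations of layer $\ell$). The base case $L=1$ is a direct calculation: $\Theta^{(1)}(x,x') = \tfrac{1}{n_0} x^{\T} x' + \beta^2 = \Sigma^{(1)}(x,x')$, which matches the claimed recursion.

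For the inductive step, suppose the result holds up to depth $L$. Decompose $\Theta^{(L+1)}$ into the contribution of the top-layer parameters $(W^{(L+1)},b^{(L+1)})$ and the contribution of the earlier layers. The top-layer contribution is $\tfrac{1}{n_L}\sum_{i=1}^{n_L} \sigma(f^{(L)}_i(x))\,\sigma(f^{(L)}_i(x')) + \beta^2$; by Lemma~\ref{lem:ntk1} the pre-activations $f^{(L)}_i$ converge (in law, jointly over the two inputs) to iid centered Gaussian processes of covariance $\Sigma^{(L)}$, so a law of large numbers across the $n_L$ neurons drives this empirical average to $\mathbb{E}_{f \sim \mathcal{N}(0,\Sigma^{(L)})}[\sigma(f(x))\sigma(f(x'))] + \beta^2 = \Sigma^{(L+1)}(x,x')$. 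The earlier-layer contribution, by the chain rule, equals a product of (i) the NTK of the sub-network of depth $L$ viewed as a function from $\mathbb{R}^{n_0}$ into $\mathbb{R}^{n_L}$, and (ii) the empirical average $\tfrac{1}{n_L}\sum_{i=1}^{n_L} \dot\sigma(f^{(L)}_i(x))\,\dot\sigma(f^{(L)}_i(x'))\, W^{(L+1)}_{\cdot i} (W^{(L+1)}_{\cdot i})^{\T}$. Using the independence of $W^{(L+1)}$ from everything below, the Gaussian initialization of $W^{(L+1)}$, and another LLN over $i$, this average concentrates on $\dot\Sigma^{(L+1)}(x,x')\cdot \mathrm{Id}$, while the inductive hypothesis identifies the sub-network NTK with $\Theta^{(L)}_\infty(x,x') \cdot \mathrm{Id}_{n_L}$. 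Multiplying the two scalar factors gives the remaining summand $\Theta^{(L)}_\infty(x,x')\,\dot\Sigma^{(L+1)}(x,x')$ in the recursion, and adding the top-layer contribution yields the claimed formula.

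The main obstacle is making the two concentration steps rigorous in the presence of correlations between the pre-activations and the backward sensitivities. One must argue that, in the joint infinite-width limit, empirical averages of products of $\sigma$ and $\dot\sigma$ evaluated on correlated pre-activations still converge to the corresponding Gaussian expectations under $\mathcal{N}(0,\Sigma^{(L)})$. For smooth $\sigma$ this follows from a standard martingale/LLN argument after conditioning on the lower-layer weights; for non-smooth activations such as ReLU one needs to exploit the fact that the set $\{f^{(L)}(x)=0\}$ has Gaussian measure zero and approximate $\dot\sigma$ by smooth proxies. A secondary subtlety is the order in which the widths $n_1,\dots,n_{L-1}$ are taken to infinity: the cleanest route is a sequential limit paired with Slutsky-type reductions, together with sub-Gaussian concentration of each layer's empirical kernel around its conditional mean to ensure that errors do not compound across depths.
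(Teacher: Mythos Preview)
The paper does not prove this lemma at all: it is listed in the section ``Technical lemmas from prior works'' and attributed verbatim to \citet{jacot2018neural} with no argument given. So there is no proof in the paper to compare your proposal against.

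That said, your sketch is a faithful outline of the original argument in \citet{jacot2018neural}: induction on depth, separating the top-layer contribution (which yields $\Sigma^{(L+1)}$ by a law of large numbers over neurons together with the Gaussian-process limit of Lemma~\ref{lem:ntk1}) from the lower-layer contribution (which, via the chain rule and independence of the fresh top-layer weights, factors into the depth-$L$ NTK times the empirical average converging to $\dot\Sigma^{(L+1)}$). You also correctly flag the two genuine technical issues in that proof---handling the dependence between forward activations and backward sensitivities, and the order of the width limits---so as a reconstruction of the cited result your proposal is on target. Just be aware that within the present paper this lemma is a black box, and nothing you wrote is needed for (or checked by) the surrounding arguments.
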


\section{Proof for Theorem \ref{thm_ridge} and Theorem \ref{thm_r2}}\label{pf:ridge}

Denoting $\tilde{\truth} = V^T \truth$ ($\Sigma = V^T \Lambda V$), based on data assumptions above, we have the following decomposition for excessive standard risk:
\begin{equation}\label{eq:std}
\begin{aligned}
\srisk (\ridge) &= \mathbb{E}_{\truth, x, \epsilon} [x^T(\truth - \ridge)]^2\\
&= \mathbb{E}_{\truth,x, \epsilon} \{ x^T [I - X^T (X X^T + \lambda n I)^{-1} X] \truth - x^T X^T (X X^T + \lambda n I)^{-1} \epsilon \}^2\\
&= \mathbb{E}_{\truth} \truth^T [I - X^T (X X^T + \lambda n I)^{-1} X] \Sigma [I - X^T (X X^T + \lambda n I)^{-1} X] \truth \\
& \quad + \mathbb{E}_{\epsilon} \epsilon^T (X X^T + \lambda n I)^{-1} X \Sigma X^T (X X^T + \lambda n I)^{-1} \epsilon\\
& = \underbrace{ \sum_i   \tilde{\truth}_i^2 ([I - X^T (XX^T + n \lambda I)^{-1} X] \Sigma [I - X^T (X X^T + \lambda n I)^{-1} X])_{i,i}  }_{\sriskb}\\
& \quad + \sigma^2 \underbrace{\trace \{ X \Sigma X^T (X X^T + \lambda n I)^{-2} \}}_{\sriskv}.
\end{aligned}
\end{equation}

And similarly, we can get an expression for parameter norm as
\begin{equation}\label{eq:para}
\begin{aligned}
\mathbb{E} \parallel \hat{\bm{\theta}} \parallel^2 &= \mathbb{E}_{X, \epsilon,\truth} (\epsilon^T + \truth^T X^T) (X X^T + \lambda n I)^{-1} X X^T (X X^T + \lambda n I)^{-1} (X \truth + \epsilon)\\
& = \underbrace{ \sum_i \tilde{\truth}_i^2 (X^T (XX^T + n \lambda I)^{-1} XX^T (XX^T + n \lambda I)^{-1} X)_{i,i} }_{\normb}\\
& \quad + \sigma^2 \underbrace{\trace \{ X X^T (X X^T + \lambda n I)^{-2} \}}_{\normv},
\end{aligned}
\end{equation}

\subsection{Standard Risk}

The upper bound in standard risk can be directly obtained from Lemma \ref{lem_bart} in \citet{tsigler2020benign}, which shows that with probability at least $1 - c e^{- \frac{n}{c}}$, we have
\begin{equation}\label{eq:stdu}
\begin{aligned}
\srisk / C_1 &\le \sum_{j > k^{*}} \lambda_j \truth_{*j}^2 + \sum_{j=1}^{k^{*}} \frac{\truth_{*j}^2}{\lambda_j} \frac{(n \lambda + \lambda_{k^{*} + 1} r_{k^{*}})^2}{n^2} + \sigma^2 \left( \frac{k^{*}}{n} + \frac{n \sum_{j > k^{*}} \lambda_j^2}{(n \lambda + \lambda_{k^{*}+1} r_{k^{*}})^2} \right),  
\end{aligned}
\end{equation}
where $C_1 > 0$ is a constant which only depends on $b, \sigma_x$. And here we just consider the lower bound in $\srisk$.

First, we will take estimation for term $\sriskv = \trace \{ X \Sigma X^T (X X^T + \lambda n I)^{-2} \}$. Considering $\Sigma = \sum_i \lambda_i v_i v_i^T$ in our model setting, where $v_i \in \mathbb{R}^p$, we can rewrite $X X^T$ as 
\begin{equation*}
    X X^T = \sum_i \lambda_i z_i z_i^T,
\end{equation*}
where $z_i$ are as defined in \eqref{eq:z_i}.

By defining
\begin{equation*}
    A = X X^T, \quad A_k = \sum_{i > k} \lambda_i z_i z_i^T, \quad A_{-k} = \sum_{i \ne k} \lambda_i z_i z_i^T,
\end{equation*}
with Woodbury identity, we have
\begin{equation}\label{eq:stdv_dec}
\begin{aligned}
\sriskv & = \trace \{ X \Sigma X^T (X X^T + \lambda n I)^{-2} \}  = \sum_i \lambda_i^2 z_i^T (\sum_j \lambda_j z_j z_j^T + n \lambda I)^{-2} z_i\\
& = \sum_i \frac{\lambda_i^2 z_i^T (A_{-i} + n \lambda I)^{-2} z_i}{[1 + \lambda_i z_i^T (A_{-i} + n \lambda I)^{-1} z_i]^2}.
\end{aligned}
\end{equation}
And considering to use Lemma \ref{lem_subspacenorm} by setting $t < n / c$, for each index $i$, denoting $\mathscr{L}_i$ is the subspace in $\mathbb{R}^n$, related to the $n - k^{*}$ eigenvalues of $A_{-i} + n \lambda I$, then with probability at least $1 - 3 e^{-n / c}$, we have
\begin{equation*}
\begin{aligned}
& \parallel z_i \parallel_2^2 \le n + 2(162e)^2 \sigma_x^2 (t  + \sqrt{nt}) \le c_2 n,\\
& \parallel \Pi_{\mathscr{L}_i} z_i \parallel_2^2 \ge n - 2 (162e)^2 \sigma_x^2 (k^{*} + t + \sqrt{n t}) \ge n/c_3,
\end{aligned}
\end{equation*}
where $c_2 = 8(162e)^2 \sigma_x^2$, $c_3 = 2$, (in our assumptions, $c > 1$ is a large enough constant to make $\sqrt{c} > 16 (162 e)^2 \sigma_x^2$, which leads to a positive $c_3$). 

As is mentioned in Lemma \ref{lem_ridgeeigen}, from Condition \ref{cond:benign}, we have $r_{k^{*}} \ge bn$ and $k^{*} \le n / c_0$ for some constant $c_0 > 0$, then with probability at least $1 - 2 e^{- n / c}$,
\begin{equation*}
    \mu_{k^{*}+1}(A_{- i} + \lambda n I) \le c_1 (\sum_{j > k^{*}} \lambda_j + n \lambda)
\end{equation*}
for any index $i = 1, \dots, \infty$, where $c_1 > 1$ only depends on $b, \sigma_x$. Then for any index $i$, we denote $\mathscr{L}_i$ is the subspace related to the $n - k$ eigenvalues of $A_{-i} + n \lambda I$, and obtain 
\begin{equation*}
    z_i^T (A_{-i} + n \lambda I)^{-1} z_i \ge (\bm{\Pi}_{\mathscr{L}_i}z_i)^T (A_{-i} + n \lambda I)^{-1} (\bm{\Pi}_{\mathscr{L}_i} z_i),
\end{equation*}
then by Lemma \ref{lem_subspacenorm}, with probability at least $1 - 5 e^{- n / c }$, we have
\begin{equation}\label{eq:l0}
    z_i^T (A_{-i} + n \lambda I)^{-1} z_i \ge \frac{\parallel \bm{\Pi}_{\mathscr{L}_i} z_i \parallel^2}{\mu_{k^{*} + 1} (A_{-i} + n \lambda)} \ge \frac{n}{c_3 c_1 (\sum_{j > k^{*}} \lambda_j + n \lambda)}, 
\end{equation}
in which $c_3$ is a constant just depending on $c, \sigma_x$, and $c_1$ just depends on $b, \sigma_x$. The first inequality is from $a^T A a^T \ge \parallel a \parallel_2^2 \mu_n(A)$; the second inequality is from the bounds for eigenvalues and vector norms in Lemma \ref{lem_subspacenorm} and \ref{lem_ridgeeigen}.
Due to this, 
\begin{equation}\label{eq:l1}
    1 + z_i^T (A_{-i} + n \lambda I)^{-1} z_i \le \big( \frac{c_1 c_3 (\sum_{j > k^{*}} \lambda_j + n \lambda)}{n \lambda_i} + 1 \big) \lambda_i z_i^T (A_{-i} + n \lambda I)^{-1} z_i.
\end{equation}
and on the other hand, 
\begin{equation}\label{eq:l2}
z_i^T (A_{-i} + n \lambda I)^{-2} z_i \ge \frac{1}{\parallel z_i \parallel^2} (z_i^T (A_{-i} + n \lambda I)^{-1} z_i)^2 \ge \frac{(z_i^T (A_{-i} + n \lambda I)^{-1} z_i)^2}{c_2 n},
\end{equation}
in which $c_2$ is a constant just depending on $\sigma_x$. The first inequality is from Cauthy-Schwarz, and the second inequality is from the upper bound of $\parallel z_i \parallel_2^2$ in Lemma \ref{lem_subspacenorm}.

Considering both Eq.\eqref{eq:l1} and \eqref{eq:l2}, for any index $i = 1, \dots, \infty$, with probability at least $1 - 5 e^{- n / c}$ we can get a lower bound as
\begin{equation}\label{eq:stdv_decl}
\begin{aligned}
\frac{\lambda_i^2 z_i^T (A_{-i} + n \lambda I)^{-2} z_i}{[1 + \lambda_i z_i^T (A_{-i} + n \lambda I)^{-1} z_i]^2} & \ge \big( \frac{c_1 c_3 (\sum_{j > k^{*}} \lambda_j + n \lambda)}{n \lambda_i} + 1 \big)^{-2} \frac{\lambda_i^2 z_i^T (A_{-i} + n \lambda I)^{-2} z_i}{(\lambda_i z_i^T (A_{-i} + n \lambda I)^{-1} z_i)^2}\\
& \ge \big( \frac{ (\sum_{j > k^{*}} \lambda_j + n \lambda)}{n \lambda_i} + 1 \big)^{-2} \frac{1}{c_1^2 c_3^2 c_2 n} > 0,
\end{aligned}
\end{equation}
Then we turn to the whole trace term \eqref{eq:stdv_dec}, due to Lemma \ref{lem_sum}, with probability at least $1 - 10 e^{- n / c }$, we have
\begin{equation*}
\begin{aligned}
\trace \{ X \Sigma X^T (X X^T + \lambda n I)^{-2} \} &= \sum_i \frac{\lambda_i^2 z_i^T (A_{-i} + n \lambda I)^{-2} z_i}{[1 + \lambda_i z_i^T (A_{-i} + n \lambda I)^{-1} z_i]^2}\\
&\ge \frac{1}{2 c_1^2 c_3^2 c_2 n} \sum_i \big( \frac{ (\sum_{j > k^{*}} \lambda_j + n \lambda)}{n \lambda_i} + 1 \big)^{-2}\\
& \ge \frac{1}{18 c_1^2 c_3^2 c_2 n} \sum_i \min \{ 1, \frac{n^2 \lambda_i^2}{(\sum_{j>k^{*}} \lambda_j)^2}, \frac{\lambda_i^2}{\lambda^2} \}\\
& \ge \frac{1}{18 c_1^2 c_3^2 c_2  b^2 n} \sum_i \min \{ 1, (\frac{bn}{r_{k^{*}}})^2 \frac{\lambda_i^2}{\lambda_{k^{*}+1}^2}, \frac{b^2 \lambda_i^2}{\lambda^2} \},
%& \ge \frac{1}{9 c_{11} b^2 n} \sum_{i > k^{*}} \min \{ 1, (\frac{bn}{r_{k^{*}}(\Sigma)})^2 \frac{\lambda_i^2}{\lambda_{k^{*}+1}^2}, \frac{b^2 \lambda_i^2}{\lambda^2} \},
\end{aligned}
\end{equation*}
in which the first inequality is from Eq.\eqref{eq:stdv_decl}; the second inequality is from
\begin{equation*}
  (a + b + c)^{-2} \ge (3 \max\{ a, b, c \})^{-2} = \frac{1}{9} \min \{ a^{-2}, b^{-2}, c^{-2} \},  
\end{equation*}
and the third inequality is just some bounds relaxing on constant level.  From Condition \ref{cond:benign}, we know that $bn / r_{k^{*}} \le 1$, as well as $\frac{\lambda_i}{\lambda_{k^{*} + 1}} \le 1$ for any index $i > k^{*}$, then we can further obtain
\begin{equation}\label{eq:stdvl}
\begin{aligned}
& \quad \trace \{ X \Sigma X^T (X X^T + \lambda n I)^{-2} \}\\
&\ge \frac{1}{18 c_1^2 c_3^2 c_2 b^2 n} \sum_{i =1}^{k^{*}} \min \{ 1, (\frac{bn}{r_{k^{*}}})^2 \frac{\lambda_i^2}{\lambda_{k^{*}+1}^2}, \frac{b^2 \lambda_i^2}{\lambda^2} \} + \frac{1}{18 c_1^2 c_3^2 c_2 b^2 n} \sum_{i > k^{*}} \min \{ (\frac{bn}{r_{k^{*}}})^2 \frac{\lambda_i^2}{\lambda_{k^{*}+1}^2}, \frac{b^2 \lambda_i^2}{\lambda^2} \}\\
& \ge \frac{1}{18 c_1^2 c_3^2 c_2 b^2 n} \sum_{i =1}^{k^{*}} \min \{ 1, \frac{b^2 n^2 \lambda_i^2}{(\lambda_{k^{*}+1} r_{k^{*}})^2 + n^2 \lambda^2} \} + \frac{1}{18 c_1^2 c_3^2 c_2} \sum_{i > k^{*}}  \frac{ n \lambda_i^2}{(\lambda_{k^{*}+1} r_{k^{*}})^2 + n^2 \lambda^2} \\
& = \frac{1}{18 c_1^2 c_3^2 c_2 b^2 n} \sum_{i =1}^{k^{*}} \min \{ 1, \frac{b^2 n^2 \lambda_i^2}{(\lambda_{k^{*}+1} r_{k^{*}})^2 + n^2 \lambda^2} \}
+ \frac{n \sum_{i > k^{*}} \lambda_i^2}{18 c_1^2 c_3^2 c_2 [(\lambda_{k^{*}+1} r_{k^{*}})^2 + n^2 \lambda^2]}\\
& \ge \frac{1}{18 c_1^2 c_3^2 c_2 b^2 n} \sum_{i =1}^{k^{*}} \min \{ 1, \frac{b^2 n^2 \lambda_i^2}{(\lambda_{k^{*}+1} r_{k^{*}})^2 + n^2 \lambda^2} \} + \frac{n \sum_{i > k^{*}} \lambda_i^2}{18 c_1^2 c_3^2 c_2 (\lambda_{k^{*}+1} r_{k^{*}} + n \lambda)^2}.
\end{aligned}
\end{equation}
The second inequality is from $\min\{ 1/a, 1/b \} \ge 1/(a + b)$, and the last inequality is from the fact $a^2 + b^2 \le (a + b)^2$ for positive $a, b$.

More specifically, if we consider $n \lambda \ge \lambda_{k^{*}+1} r_{k^{*}}$, then it is not harmful to take lower bounds as
\begin{equation*}
\frac{1}{(\lambda_{k^{*}+1} r_{k^{*}})^2 + n^2 \lambda^2} \ge \frac{1}{2 n^2 \lambda^2},
\end{equation*}
Based on this, we have
\begin{equation}\label{eq:stdvl2}
\begin{aligned}
& \quad \trace \{ X \Sigma X^T (X X^T + \lambda n I)^{-2} \}\\
& \ge \frac{1}{18 c_1^2 c_3^3 c_2 b^2 n} \sum_{i =1}^{k^{*}} \min \{ 1, \frac{b^2 n^2 \lambda_i^2}{(\lambda_{k^{*}+1} r_{k^{*}})^2 + n^2 \lambda^2} \} + \frac{n \sum_{i > k^{*}} \lambda_i^2}{18 c_1^2 c_3^2 c_2 (\lambda_{k^{*}+1} r_{k^{*}} + n \lambda)^2}\\
& \ge \frac{1}{36 c_1^2 c_3^2 c_2 b^2 n} \sum_{i =1}^{k^{*}} \min \{ 1, \frac{b^2 n^2 \lambda_i^2}{ n^2 \lambda^2} \} + \frac{n \sum_{i > k^{*}} \lambda_i^2}{36 c_1^2 c_3^2 c_2  n^2 \lambda^2}\\
& \ge \frac{1}{36 c_1^2 c_3^2 c_2 b^2 n} \sum_{i=1}^{k^{*}} \min\{ 1, \frac{\lambda_i^2}{\lambda^2} \} + \frac{ \sum_{i > k^{*}} \lambda_i^2}{36 c_1^2 c_3^2 c_2 n \lambda^2}\\
& \ge \frac{1}{36 c_1^2 c_3^2 c_2 b^2 n} \left( \sum_{\lambda_i > \lambda} 1 + \sum_{\lambda_i \le \lambda} \frac{\lambda_i^2}{\lambda^2} \right).
\end{aligned}
\end{equation}

Then as for term $\sriskb$, we can still take the same decomposition for $\Sigma$ and $X X^T$ as
\begin{equation*}
    \Sigma = \bm{V} \bm{\Lambda} \bm{V}^T, \quad X = \bm{Z} \bm{\Lambda}^{1/2} \bm{V}^T, 
\end{equation*}
in which $\bm{Z} \in \mathbb{R}^{n \times p}$ takes i.i.d. elements, and for convenience, we denote $\tilde{\truth} = V^T \truth$. So we can get
\begin{equation*}
\begin{aligned}
\sriskb &= \mathbb{E} \truth^T [I - X^T (XX^T + n \lambda I)^{-1} X] \Sigma [I - X^T (XX^T + n \lambda I)^{-1} X] \truth \\
&= \mathbb{E} \truth^T V V^T [I - V \Lambda^{1/2} Z^T (XX^T + n \lambda I)^{-1} X] \Sigma [I - X^T (XX^T + n \lambda I)^{-1} Z \Lambda^{1/2} V] V^T V \truth \\
& = \sum_i \tilde{\truth}_i^2 \left( \lambda_i (1 - \lambda_i z_i^T (XX^T + n \lambda I)^{-1}z_i )^2 + \sum_{j \ne i} \lambda_i \lambda_j^2 (z_i^T (XX^T + n\lambda I)^{-1} z_j)^2 \right)\\
& \ge  \sum_i \tilde{\truth}_i^2 \lambda_i (1 - \lambda_i z_i^T (XX^T + n \lambda I)^{-1}z_i )^2\\
&=  \sum_i \tilde{\truth}_i^2 \frac{\lambda_i}{(1 + \lambda_i z_i^T (A_{-i} + n \lambda I)^{-1} z_i)^2} \ge \mathbb{E} \sum_i \tilde{\truth}_i^2 \frac{\lambda_i}{(1 + \frac{\lambda_i \parallel z_i \parallel_2^2}{\mu_n(A_{-i}) + n \lambda})^2},
\end{aligned}
\end{equation*}
the first inequality is from ignoring the second part on the line above it, the second inequality  is from $a^T A a \le \mu_1(A) \parallel a \parallel_2^2$, and the equality on the last line is from Woodbury identity. 

Still considering the case $\lambda \ge \lambda_{k^{*} + 1} r_{k^{*}}(\Sigma) / n$,  for each index $i = 1, \dots, p$, we can take an lower bound as
\begin{equation*}
    \mu_n(A_{-i}) + n \lambda \ge n \lambda,
\end{equation*}
which implies that with probability at least $1 - 5 e^{- n / c}$, we have
\begin{equation*}
    \frac{\lambda_i}{(1 + \frac{\lambda_i \parallel z_i \parallel_2^2}{\mu_n(A_{-i}) + n \lambda})^2} \ge \frac{\lambda_i }{(1 + \frac{c_1 c_2 \lambda_i n}{ n \lambda})^2} \ge 
\frac{1}{c_1^2 c_2^2} \frac{\lambda_i }{(1 + \frac{\lambda_i}{\lambda})^2} \ge  \frac{1}{4 c_1^2 c_2^2} \min \{ \lambda_i, \frac{\lambda^2}{\lambda_i} \},
\end{equation*}
the last inequality is from 
\begin{equation*}
  (a + b )^{-2} \ge (2 \max\{ a, b \})^{-2} = \frac{1}{4} \min \{ a^{-2}, b^{-2} \}.  
\end{equation*}
So with probability at least $1 - 10 e^{- n / c}$, term $\sriskb$ can be lower bounded as
\begin{equation}\label{eq:stdbl2}
\sriskb \ge \sum_i \tilde{\truth}_i^2 \frac{\lambda_i}{(1 + \frac{\lambda_i \parallel z_i \parallel_2^2}{\mu_n(A_{-i}) + n \lambda})^2} \ge 
\frac{1}{4 c_1^2 c_2^2} \left( \sum_{\lambda_i > \lambda} \tilde{\truth}_i^2 \frac{\lambda^2}{\lambda_i} + \sum_{\lambda_i \le \lambda} \tilde{\truth}_i^2 \lambda_i \right),
\end{equation}
in which $c_2$ only depends on $\sigma_x$.
Combining the result in Eq.\eqref{eq:stdvl2} and \eqref{eq:stdbl2}, we can get the lower bound for excessive standard risk while $n \lambda \ge \lambda_{k^{*} + 1} r_{k^{*}}$.

\subsection{Parameter Norm}

From \eqref{eq:arisk_bound},
the gap between excessive standard risk and adversarial risk can be bounded as
\begin{equation*}
    r^2 \mathbb{E} \parallel \ridge \parallel_2^2 \le \arisk - \srisk \le \srisk + 2r^2 \mathbb{E} \parallel \ridge \parallel_2^2,
\end{equation*}
so the estimation for parameter norm measures the adversarial robustness essentially.
The method to dealing with parameter norm is similar to the process in stanadrd risk estimation process. To be specific, still reviewing the bounds in Eq.\eqref{eq:l1}, \eqref{eq:l2} and \eqref{eq:stdv_decl}, i.e, for any index $i$, with a high probability, we have
\begin{equation*}
\begin{aligned}
& 1 + z_i^T (A_{-i} + n \lambda I)^{-1} z_i \le \big( \frac{c_1 c_3 (\sum_{j > k} \lambda_j + n \lambda_{k+1} + n \lambda)}{n \lambda_i} + 1 \big) \lambda_i z_i^T (A_{-i} + n \lambda I)^{-1} z_i,\\
& z_i^T (A_{-i} + n \lambda I)^{-2} z_i \ge \frac{1}{\parallel z_i \parallel^2} (z_i^T (A_{-i} + n \lambda I)^{-1} z_i)^2 \ge \frac{(z_i^T (A_{-i} + n \lambda I)^{-1} z_i)^2}{c_2 n},\\
& \frac{\lambda_i^2 z_i^T(A_{-i} +\lambda n I)^{-2} z_i}{(1 + \lambda_i z_i^T(A_{-i} + \lambda n I)^{-1}z_i)^2} \ge \frac{1}{c_1^2 c_3^2 c_2 n} \big( \frac{ (\sum_{j > k^{*}} \lambda_j + n \lambda)}{n \lambda_i} + 1 \big)^{-2},
\end{aligned} 
\end{equation*}
also, with Lemma \ref{lem_subspacenorm}, for each index $i$, with probability at least $1 - e^{- n/c}$, we have
\begin{equation*}
    \parallel z_i \parallel_2^2 \ge \parallel \Pi_{\mathscr{L}_i} z_i \parallel_2^2 \ge n /c_3,
\end{equation*}
then focusing on $\normv$, according to Lemma~\ref{lem_sum}, with probability at least $1 - 10 e^{- \frac{n}{c}}$, we will obtain
\begin{equation*}
\begin{aligned}
\trace \{ X X^T (X X^T + \lambda n I)^{-2} \} &= \sum_i \lambda_i z_i^T (A + \lambda n I)^{-2} z_i\\
& = \sum_i \frac{\lambda_i z_i^T(A_{-i} +\lambda n I)^{-2} z_i}{(1 + \lambda_i z_i^T(A_{-i} + \lambda n I)^{-1}z_i)^2}\\
&= \sum_i \frac{1}{\lambda_i} \frac{\lambda_i^2 z_i^T(A_{-i} +\lambda n I)^{-2} z_i}{(1 + \lambda_i z_i^T(A_{-i} + \lambda n I)^{-1}z_i)^2}\\
&\ge \frac{1}{2 c_1^2 c_3^2 c_2 n} \sum_i \frac{1}{\lambda_i} \big( \frac{ (\sum_{j > k^{*}} \lambda_j + n \lambda)}{n \lambda_i} + 1 \big)^{-2}\\
& \ge \frac{1}{18 c_1^2 c_3^2 c_2 n} \sum_i \frac{1}{\lambda_i} \min \{ 1, \frac{n^2 \lambda_i^2}{(\sum_{j>k^{*}} \lambda_j)^2}, \frac{\lambda_i^2}{\lambda^2} \}\\
& \ge \frac{1}{18 c_1^2 c_3^2 c_2 b^2 n} \sum_i \frac{1}{\lambda_i} \min \{ 1, (\frac{bn}{r_{k^{*}}})^2 \frac{\lambda_i^2}{\lambda_{k^{*}+1}^2}, \frac{b^2 \lambda_i^2}{\lambda^2} \}.
\end{aligned}
\end{equation*}
The second inequality is from
\begin{equation*}
  (a + b + c)^{-2} \ge (3 \max\{ a, b, c \})^{-2} = \frac{1}{9} \min \{ a^{-2}, b^{-2}, c^{-2} \},  
\end{equation*}
and the third inequality is just some bounds relaxing on constant level.

As $r_{k^{*}} \ge b n$ and for $i > k^{*}$, $\frac{\lambda_i}{\lambda_{k^{*} + 1}} \le 1$, we further obtain
\begin{equation}\label{eq:normvl}
\begin{aligned}
&\trace \{ X X^T (X X^T + \lambda n I)^{-2} \} \\
& \ge \frac{1}{18 c_1^2 c_3^2 c_2 b^2 n} \sum_{i \le k^{*}} \frac{1}{\lambda_i} \min \{ 1, (\frac{bn}{r_{k^{*}}})^2 \frac{\lambda_i^2}{\lambda_{k+1}^2}, \frac{b^2 \lambda_i^2}{\lambda^2} \} + \frac{1}{18 c_1^2 c_3^2 c_2 b^2 n} \sum_{i >  k^{*}} \frac{1}{\lambda_i} \min \{ (\frac{bn}{r_{k^{*}}})^2 \frac{\lambda_i^2}{\lambda_{k+1}^2}, \frac{b^2 \lambda_i^2}{\lambda^2} \}\\
& \ge \frac{1}{18 c_1^2 c_3^2 c_2 b^2} \sum_{i \le k^{*}} \frac{1}{\lambda_i} \min \{ \frac{1}{n}, \frac{b^2 n \lambda_i^2}{r_{k^{*}}^2 \lambda_{k^{*} + 1}^2 + n^2 \lambda^2}  \} + \frac{1}{18 c_1^2 c_3^2 c_2 b^2 n} \sum_{i > k^{*}} \frac{1}{\lambda_i} \frac{b^2 n^2 \lambda_i^2}{r_{k^{*}}^2 \lambda_{k^{*} + 1}^2 + n^2 \lambda^2}\\
& = \frac{1}{18 c_1^2 c_3^2 c_2 b^2} \sum_{i \le k^{*}} \frac{1}{\lambda_i} \min \{ \frac{1}{n}, \frac{b^2 n \lambda_i^2}{r_{k^{*}}^2 \lambda_{k^{*} + 1}^2 + n^2 \lambda^2}  \} + \frac{1}{18 c_1^2 c_3^2 c_2} \frac{n \lambda_{k^{*}+1} r_{k^{*}}}{r_{k^{*}}^2 \lambda_{k^{*} + 1}^2 + n^2 \lambda^2}\\
& \ge \frac{1}{18 c_1^2 c_3^2 c_2 b^2} \sum_{i \le k^{*}} \frac{1}{\lambda_i} \min \{ \frac{1}{n}, \frac{b^2 n \lambda_i^2}{(r_{k^{*}} \lambda_{k^{*} + 1} + n \lambda)^2}  \} + \frac{1}{18 c_1^2 c_3^2 c_2} \frac{n \lambda_{k^{*}+1} r_{k^{*}}}{(r_{k^{*}} \lambda_{k^{*} + 1} + n \lambda)^2},
%& \ge \frac{\tilde{c}}{n} \min\{ 1, \frac{1}{\lambda^2} \} + \frac{n \lambda_{k^{*}+1} r_{k^{*}}(\Sigma)}{c_4 (r_k(\Sigma)^2 \lambda_{k^{*} + 1}^2 + n^2 \lambda^2)}\\
%& \ge \frac{\tilde{c}}{n} \min\{ 1, \frac{1}{\lambda^2} \} + \frac{n \lambda_{k^{*}+1} r_{k^{*}}(\Sigma)}{c_4 (r_k(\Sigma) \lambda_{k^{*} + 1} + n \lambda)^2},
\end{aligned}
\end{equation}
The first inequality is due to the fact that as $i \ge k^{*}$, we have
\begin{equation*}
    (\frac{bn}{r_{k^{*}}})^2 \frac{\lambda_i^2}{\lambda_{k^{*}+1}^2} \le 1,
\end{equation*}
and the second inequality is using $a^2 + b^2 \le (a + b)^2$ as $a, b > 0$.

Specifically, we still consider two cases. First, if $n \lambda \ge \lambda_{k^{*}+1} r_{k^{*}}$, we can take a lower bound as
\begin{equation*}
    \frac{1}{(n\lambda + \lambda_{k^{*} + 1} r_{k^{*}})^2} \ge \frac{1}{4 n^2 \lambda^2},
\end{equation*}
which implies a lower bound for the term $\normv$ as
\begin{equation}\label{eq:normvl2}
\begin{aligned}
\normv &\ge \frac{1}{18 c^2 c_3^2 c_2 b^2} \sum_{i \le k^{*}} \frac{1}{\lambda_i} \min \{ \frac{1}{n}, \frac{b^2 n \lambda_i^2}{(r_{k^{*}} \lambda_{k^{*} + 1} + n \lambda)^2}  \} + \frac{1}{18 c^2 c_3^2 c_2} \frac{n \lambda_{k^{*}+1} r_{k^{*}}}{(r_{k^{*}} \lambda_{k^{*} + 1} + n \lambda)^2}\\
&\ge \frac{1}{72 c^2 c_3^2 c_2 b^2} \sum_{i \le k^{*}} \frac{1}{\lambda_i} \min\{ \frac{1}{n}, \frac{b^2 \lambda_i^2}{n \lambda^2} \} + \frac{1}{72 c^2 c_3^2 c_2} \frac{\lambda_{k^{*} + 1} r_{k^{*}}}{n \lambda^2}\\
&\ge \frac{1}{72 c^2 c_3^2 c_2 b^2} \left( \sum_{\lambda_i > \lambda} \frac{1}{n \lambda_i} + \sum_{\lambda_i \le \lambda} \frac{\lambda_i}{n \lambda^2} \right).
\end{aligned}
\end{equation}
The first inequality is from Eq.\eqref{eq:normvl}, the inequality above Eq.\eqref{eq:normvl2} implies the second inequality, and the third inequality is from the choice of minimum value for each index.

On the other hand, if we take the regularization parameter $\lambda$ small enough, i.e, $n \lambda \le \lambda_{k^{*} + 1} r_{k^{*}}$, we can take a similar lower bound as
\begin{equation*}
    \frac{1}{(n\lambda + \lambda_{k^{*} + 1} r_{k^{*}})^2} \ge \frac{1}{4 \lambda_{k^{*}+1}^2 r_{k^{*}}^2},
\end{equation*}
which implies the related lower bound for the term $\normv$ as
\begin{equation}\label{eq:normvl1}
\begin{aligned}
\normv &\ge \frac{1}{18 c_1^2 c_3^2 c_2 b^2} \sum_{i \le k^{*}} \frac{1}{\lambda_i} \min \{ \frac{1}{n}, \frac{b^2 n \lambda_i^2}{(r_{k^{*}} \lambda_{k^{*} + 1} + n \lambda)^2}  \} + \frac{1}{18 c_1^2 c_3^2 c_2} \frac{n \lambda_{k^{*}+1} r_{k^{*}}}{(r_{k^{*}} \lambda_{k^{*} + 1} + n \lambda)^2}\\
&\ge \frac{1}{72 c_1^2 c_3^2 c_2 b^2} \sum_{i \le k^{*}} \frac{1}{\lambda_i} \min\{ \frac{1}{n}, \frac{b^2 n \lambda_i^2}{(\lambda_{k^{*}+1} r_{k^{*}})^2} \} + \frac{1}{72 c_1^2 c_3^2 c_2} \frac{n \lambda_{k^{*} + 1} r_{k^{*}}}{(\lambda_{k^{*}+1} r_{k^{*}}))^2}\\
&\ge \frac{1}{72 c_1^2 c_3^2 c_2 b^2} \sum_{i=1}^{k^{*}} \frac{1}{n \lambda_i} + \frac{1}{72 c_1^2 c_3^2 c_2} \frac{n}{\lambda_{k^{*}+1} r_{k^{*}}},
\end{aligned}
\end{equation}
the analysis is similar to \eqref{eq:normvl2} above, and the last inequality is from the definition of $k^{*}$ in \ref{eq:defk}.

Then similarly, we turn to the estimation for term $\normb$,
\begin{equation*}
\begin{aligned}
\normb &= \mathbb{E} \truth^T X^T (XX^T + n \lambda I)^{-1} XX^T (XX^T + n \lambda I)^{-1} X \truth\\
&= \mathbb{E} \truth V V^T V \Lambda^{1/2} Z^T (XX^T + n \lambda I)^{-1} XX^T (XX^T + n \lambda I)^{-1} Z \Lambda^{1/2} V^T V \truth \\
&= \sum_i \tilde{\truth}_i^2 \lambda_i z_i^T (XX^T + n \lambda I)^{-1} XX^T (XX^T + n \lambda I)^{-1} z_i\\
&= \sum_i \tilde{\truth}_i^2 \lambda_i z_i^T (XX^T + n \lambda I)^{-1} (\sum_j \lambda_j z_j z_j^T ) (XX^T + n \lambda I)^{-1} z_i\\
& = \sum_i \tilde{\truth}_i^2 \lambda_i z_i^T (XX^T + n \lambda I)^{-1} ( \lambda_i z_i z_i^T +  \sum_{j \ne i} \lambda_j z_j z_j^T ) (XX^T + n \lambda I)^{-1} z_i\\
&\ge \sum_i \tilde{\truth}_i^2 \lambda_i^2 ( z_i^T (XX^T + n \lambda I)^{-1} z_i )^2\\
&= \sum_i \tilde{\truth}_i^2 \left( \frac{\lambda_i z_i^T (A_{-i} + n \lambda I)^{-1} z_i}{1 + \lambda_i z_i^T (A_{-i} + n \lambda I)^{-1} z_i}\right)^2,
\end{aligned}
\end{equation*}
in which the inequality is from ignoring the terms with index $j \ne i$ on the line above it, and the equality on the last line is from Woodbury identity.
And considering Eq.\eqref{eq:l0}, for each index $i$, with probability at least $1 - 5 e^{- n / c }$, we have
\begin{equation*}
    z_i^T (A_{-i} + n \lambda I)^{-1} z_i \ge \frac{n}{c_3 c_1 (\sum_{j > k^{*}} \lambda_j + n \lambda)},
\end{equation*}
which implies that
\begin{equation*}
    \left( \frac{\lambda_i z_i^T (A_{-i} + n \lambda I)^{-1} z_i}{1 + \lambda_i z_i^T (A_{-i} + n \lambda I)^{-1} z_i}\right)^2 \ge \frac{1}{c_3^2 c_1^2} \left( \frac{n \lambda_i}{n \lambda_i + \sum_{j > k^{*}} \lambda_j + n \lambda} \right)^2,
\end{equation*}
Then according to Lemma \ref{lem_sum}, with probability at least $1 - 10 e^{- n / c}$,  we can estimate the lower bound for $\normb$ as
\begin{equation*}
\begin{aligned}
\normb &\ge \sum_i \tilde{\truth}_i^2 \left( \frac{\lambda_i z_i^T (A_{-i} + n \lambda I)^{-1} z_i}{1 + \lambda_i z_i^T (A_{-i} + n \lambda I)^{-1} z_i}\right)^2\\
&\ge \sum_i \tilde{\truth}_i^2 \frac{1}{2 c_3^2 c_1^2} \left( \frac{n \lambda_i}{n \lambda_i + \sum_{j > k^{*}} \lambda_j + n \lambda} \right)^2\\
&\ge \frac{1}{8 c_3^2 c_1^2} \sum_i \tilde{\truth}_i^2 \min\{ 1, \frac{n^2 b^2 \lambda_i^2}{( \sum_{j > k^*} \lambda_j + n \lambda)^2} \}\\
&= \frac{1}{8 c_3^2 c_1^2 b^2} \sum_{i=1}^{k^{*}} \tilde{\truth}_i^2 \min\{ 1, \frac{n^2 b^2  \lambda_i^2}{( \lambda_{k^{*} + 1} r_{k^{*}} + n \lambda)^2} \} + \frac{1}{8 c_3^2 c_1^2} \frac{n^2 \sum_{j > k^{*}} \tilde{\truth}_j^2 \lambda_j^2 }{(\lambda_{k^{*} + 1} r_{k^{*}} + n \lambda)^2},
\end{aligned}
\end{equation*}
in which the third inequality is from 
\begin{equation*}
    \frac{1}{(a + b)^2} \ge \frac{1}{4} \min\{ \frac{1}{a^2}, \frac{1}{b^2} \},
\end{equation*}
and the equality on the last line is from the fact that $r_{k^{*}} \ge bn$ and $\lambda_j \le \lambda_{k^{*} + 1}$ for any $j > k^{*}$.

So we can also consider two situations. First, if $n \lambda \le \lambda_{k^{*} + 1} r_{k^{*}}$, we have a lower bound as
\begin{equation*}
    \frac{1}{(n \lambda + \lambda_{k^{*} + 1} r_{k^{*}})^2} \ge \frac{1}{4 \lambda_{k^{*} + 1}^2 r_{k^{*}}^2},
\end{equation*}
then we can obtain
\begin{equation}\label{eq:normbl1}
\begin{aligned}
\normb &\ge \frac{1}{8 c_3^2 c_1^2 b^2} \sum_{i=1}^{k^{*}} \tilde{\truth}_i^2 \min\{ 1, \frac{n^2 b^2  \lambda_i^2}{( \lambda_{k^{*} + 1} r_{k^{*}} + n \lambda)^2} \} + \frac{1}{8 c_3^2 c_1^2} \frac{n^2 \sum_{j > k^{*}} \tilde{\truth}_j^2 \lambda_j^2 }{(\lambda_{k^{*} + 1} r_{k^{*}} + n \lambda)^2}\\
&\ge \frac{1}{32 c_3^2 c_1^2 b^2} \sum_{i=1}^{k^{*}} \tilde{\truth}_i^2 \min\{ 1, \frac{n^2 b^2  \lambda_i^2}{( \lambda_{k^{*} + 1} r_{k^{*}})^2} \} + \frac{1}{32 c_3^2 c_1^2} \frac{n^2 \sum_{j > k^{*}} \tilde{\truth}_j^2 \lambda_j^2 }{(\lambda_{k^{*} + 1} r_{k^{*}} )^2}\\
&= \frac{1}{32 c_3^2 c_1^2 b^2} \sum_{i=1}^{k^{*}} \tilde{\truth}_i^2  + \frac{1}{32 c_3^2 c_1^2} \frac{n^2 \sum_{j > k^{*}} \tilde{\truth}_j^2 \lambda_j^2 }{(\lambda_{k^{*} + 1} r_{k^{*}} )^2},
\end{aligned}
\end{equation}
where the last equality is from the definition of $k^{*}$ in Eq.\eqref{eq:defk}.

Similarly, if $n \lambda \ge \lambda_{k^{*} + 1} r_{k^{*}}$, we have
\begin{equation}\label{eq:normbl2}
\begin{aligned}
\normb &\ge \frac{1}{8 c_3^2 c_1^2 b^2} \sum_{i=1}^{k^{*}} \tilde{\truth}_i^2 \min\{ 1, \frac{n^2 b^2  \lambda_i^2}{( \lambda_{k^{*} + 1} r_{k^{*}} + n \lambda)^2} \} + \frac{1}{8 c_3^2 c_1^2} \frac{n^2 \sum_{j > k^{*}} \tilde{\truth}_j^2 \lambda_j^2 }{(\lambda_{k^{*} + 1} r_{k^{*}} + n \lambda)^2}\\
&\ge \frac{1}{32 c_3^2 c_1^2 b^2} \sum_{i=1}^{k^{*}} \tilde{\truth}_i^2 \min\{ 1, \frac{  \lambda_i^2}{\lambda^2} \} + \frac{1}{32 c_3^2 c_1^2} \frac{ \sum_{j > k^{*}} \tilde{\truth}_j^2 \lambda_j^2 }{\lambda^2}\\
&\ge \frac{1}{32 c_3^2 c_1^2 b^2} \left( \sum_{\lambda_i > \lambda} \tilde{\truth}_i^2 + \sum_{\lambda_i \le \lambda} \frac{\tilde{\truth}_i^2 \lambda_i^2}{\lambda^2} \right),
\end{aligned}
\end{equation}
where the last inequality is due to $b > 1$.

\section{Proof for Corollary \ref{cor_1} and Theorem \ref{thm:main_tradeoff}}\label{pf:trade}

In this part, we would explore the impact of $\lambda$ for both standard risk and parameter norm. And we will begin with the small regularization regime:

(1). \textbf{Small Regularization}: $\lambda \le \lambda_{k^{*}+1} r_{k^{*}} / n$.

In this regime, the regularization parameter $\lambda$ is too small to cause obvious impact for both standard risk and parameter norm, while comparing with min-norm estimator. And with the analysis above, we have the upper bound for $\srisk$ as in \eqref{eq:stdu}:
\begin{equation*}
\begin{aligned}
\srisk / C_1 &\le \sum_{j > k^{*}} \lambda_j \truth_{*j}^2 + \sum_{j=1}^{k^{*}} \frac{\truth_{*j}^2}{\lambda_j} \frac{(n \lambda + \lambda_{k^{*} + 1} r_{k^{*}})^2}{n^2} + \sigma^2 \left( \frac{k^{*}}{n} + \frac{n \sum_{j > k^{*}} \lambda_j^2}{(n \lambda + \lambda_{k^{*}+1} r_{k^{*}})^2} \right)\\
&\le \sum_{j > k^{*}} \lambda_j \truth_{*j}^2 + \sum_{j=1}^{k^{*}} \frac{\truth_{*j}^2}{\lambda_j} \frac{4 \lambda_{k^{*} + 1}^2 r_{k^{*}}^2}{n^2} + \sigma^2 \left( \frac{k^{*}}{n} + \frac{n \sum_{j > k^{*}} \lambda_j^2}{\lambda_{k^{*}+1}^2 r_{k^{*}}^2} \right),
\end{aligned}
\end{equation*}
under Condition \ref{cond:benign}, it tends to zero, which implies that it is a near optimal estimator with respect to $\srisk$.
But as for parameter norm, with the results shown in \eqref{eq:normvl1} and \eqref{eq:normbl1}, we have
\begin{equation*}
\begin{aligned}
\mathbb{E} \parallel \ridge \parallel_2^2 &\ge \frac{1}{32 c_3^2 c_1^2 b^2} \sum_{i=1}^{k^{*}} \tilde{\truth}_i^2  + \frac{1}{32 c_3^2 c_1^2} \frac{n^2 \sum_{j > k^{*}} \tilde{\truth}_j^2 \lambda_j^2 }{(\lambda_{k^{*} + 1} r_{k^{*}} )^2} + \frac{\sigma^2}{72 c_1^2 c_3^2 c_2 b^2} \sum_{i=1}^{k^{*}} \frac{1}{n \lambda_i} + \frac{\sigma^2}{72 c_1^2 c_3^2 c_2} \frac{n}{\lambda_{k^{*}+1} r_{k^{*}}}\\
&\ge \frac{\sigma^2}{72 c_1^2 c_3^2 c_2} \frac{n}{\lambda_{k^{*}+1} r_{k^{*}}},
\end{aligned}
\end{equation*}
 with $\sigma^2 = \omega(\lambda_{k^* + 1} r_{k^*} / n)$, the parameter norm would be large, which leads to a non-robust estimator for adversarial attacks.

(2). \textbf{Large Regularization}: $\lambda \ge \lambda_1$.

In this situation, considering the standard risk, with \eqref{eq:stdvl2} and \eqref{eq:stdbl2}, we have
\begin{equation*}
\begin{aligned}
\srisk &\ge \frac{1}{4 c_1^2 c_2^2} \left( \sum_{\lambda_i > \lambda_1} \tilde{\truth}_i^2 \frac{\lambda^2}{\lambda_i} + \sum_{\lambda_i \le \lambda_1} \tilde{\truth}_i^2 \lambda_i \right) + \frac{\sigma^2}{36 c_1^2 c_3^2 c_2 b^2 n} \left( \sum_{\lambda_i > \lambda_1} 1 + \sum_{\lambda_i \le \lambda_1} \frac{\lambda_i^2}{\lambda^2} \right)\\
&= \frac{1}{4 c_1^2 c_2^2} \sum_i \tilde{\truth}_i^2 \lambda_i + \frac{\sigma^2 \sum_i \lambda_i^2 }{36 \lambda_1^2 c_1^2 c_3^2 c_2 b^2 n} \ge \frac{1}{4 c_1^2 c_2^2} \| \truth \|^2_{\Sigma},
\end{aligned}
\end{equation*}
which implies that the large regularization will induce a standard risk which can not converge to zero. It means that general ridge regression methods with constant level regularization $\lambda$ will not take the estimator effective enough.

(3). \textbf{Intermediate Regularization}: $\lambda_{k^{*}+1} r_{k^{*}} / n \le \lambda \le \lambda_1$.

In this regime, we focus on a special case, in which the norm of parameter $\truth$ has a slow decreasing rate, and the signal-to-noise ratio is not very large (as is mentioned in Condition~\ref{cond:trade} and constrain on $\sigma^2 = \omega(\lambda_{k^* + 1} r_{k^*} / n)$). 

To be specific, the upper bound of $\srisk(\ridge)$ for min-norm estimator is
\begin{equation*}
\srisk / C_1 \le \sum_{j > k^{*}} \lambda_j \truth_{*j}^2 + \sum_{j=1}^{k^{*}} \frac{\truth_{*j}^2}{\lambda_j} \frac{4 \lambda_{k^{*} + 1}^2 r_{k^{*}}^2}{n^2} + \sigma^2 \left( \frac{k^{*}}{n} + \frac{n \sum_{j > k^{*}} \lambda_j^2}{\lambda_{k^{*}+1}^2 r_{k^{*}}^2} \right).
\end{equation*}
Then we turn to the estimator with an intermediate regularization. As is shown in \eqref{eq:normvl2} and \eqref{eq:normbl2}, the lower bound of parameter norm is
\begin{equation*}
    \mathbb{E} \| \ridge \|_2^2 \ge \frac{1}{32 c_3^2 c_1^2 b^2} \left( \sum_{\lambda_i > \lambda} \tilde{\truth}_i^2 + \sum_{\lambda_i \le \lambda} \frac{\tilde{\truth}_i^2 \lambda_i^2}{\lambda^2} \right) + \frac{\sigma^2}{72 n c_1^2 c_3^2 c_2 b^2} \left( \sum_{\lambda_i > \lambda} \frac{1}{\lambda_i} + \sum_{\lambda_i \le \lambda} \frac{\lambda_i}{\lambda^2} \right),
\end{equation*}
and with \eqref{eq:stdvl2} and \eqref{eq:stdbl2}, the lower bound for standard risk is
\begin{equation*}
\srisk \ge \frac{1}{4 c_1^2 c_2^2} \left( \sum_{\lambda_i > \lambda} \tilde{\truth}_i^2 \frac{\lambda^2}{\lambda_i} + \sum_{\lambda_i \le \lambda} \tilde{\truth}_i^2 \lambda_i \right) + \frac{\sigma^2}{36 c_1^2 c_3^2 c_2 b^2 n} \left( \sum_{\lambda_i > \lambda} 1 + \sum_{\lambda_i \le \lambda} \frac{\lambda_i^2}{\lambda^2} \right),
\end{equation*}

With Condition~\ref{cond:trade}, if we have $(\lambda_{k^*+1} r_{k^*})/n \le \lambda \le \lambda_{w^*}$, then
\begin{equation*}
\begin{aligned}
 \srisk(\ridge) \mathbb{E}\| \ridge \|_2^2 &\ge \frac{1}{288 c_1^4 c_2^3 c_3^2 b^2} \frac{\sigma^2}{n \lambda^2} \sum_{\lambda_i \le \lambda} \tilde{\truth}_i^2 \lambda_i \sum_{\lambda_i \le \lambda} \lambda_i \ge \frac{1}{288 c_1^4 c_2^3 c_3^2 b^2} \sigma^2 \| \truth \|_2^2 \sqrt{\max \{ \frac{k^*}{n}, \frac{n}{R_{k^*}} \}},  
\end{aligned}
\end{equation*}
comparing this term with the upper bound of $\srisk(\ridge)$ in min-norm estimator, we can obtain the corresponding result in Corollary~\ref{cor_1}.

Then before the following analysis, we claim a useful lemma first:
\begin{lemma}\label{lem:stable_index}
For $\lambda = \tilde{\lambda}$ be the smallest regularization parameter leading to a stable parameter norm ($\tilde{\lambda}$ can change with the increasing in sample size $n$), in which $\tilde{\lambda} < \lambda_{k^* + 1}$, we can always get the result that
\begin{equation*}
    \lim_{n \to \infty} \frac{\tilde{\lambda}}{\lambda_k^*+1} = \infty.
\end{equation*}
\end{lemma}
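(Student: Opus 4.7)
\textbf{Proof plan for Lemma \ref{lem:stable_index}.}

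The plan is to argue by contradiction using the parameter-norm lower bounds in Theorems~\ref{thm_ridge} and~\ref{thm_r2}. Suppose, along some subsequence of $n$, that $\tilde\lambda/\lambda_{k^*+1}$ stays bounded above by a constant. I will show this forces $\mathbb{E}_{\truth,y}\|\ridge\|_2^2$ to diverge on the high-probability event of Theorem~\ref{thm_r2}, contradicting the stability hypothesis.

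First I would verify that $\tilde\lambda$ cannot lie in the small-regularization regime $\lambda \le \lambda_{k^*+1} r_{k^*}/n$: by Corollary~\ref{coro_smallreg} together with $\sigma^2 = \omega(\lambda_{k^*+1} r_{k^*}/n)$, any such $\lambda$ yields $\mathbb{E}_{\truth,y}\|\ridge\|_2^2 \gtrsim n\sigma^2/(\lambda_{k^*+1} r_{k^*}) \to \infty$, so is unstable. Therefore $\tilde\lambda \ge \lambda_{k^*+1} r_{k^*}/n \ge b\lambda_{k^*+1}$, and in particular Theorem~\ref{thm_r2} applies. Since $\tilde\lambda > \lambda_{k^*+1}$, the set $\{i : \lambda_i < \tilde\lambda\}$ contains every $i > k^*$, so $\sum_{\lambda_i < \tilde\lambda} \lambda_i \ge \sum_{i > k^*} \lambda_i = \lambda_{k^*+1} r_{k^*}$. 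Retaining only this tail contribution in the noise portion of the lower bound of Theorem~\ref{thm_r2} gives
$$\mathbb{E}_{\truth,y}\|\ridge\|_2^2 \;\ge\; C_6 \cdot \frac{\sigma^2\, \lambda_{k^*+1} r_{k^*}}{n\, \tilde\lambda^{2}}.$$

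Stability forces the right-hand side to be $O(1)$, so there exists $c_1 > 0$ with $\tilde\lambda^2 \ge c_1 \sigma^2 \lambda_{k^*+1} r_{k^*}/n$. Dividing by $\lambda_{k^*+1}^2$ yields
$$\frac{\tilde\lambda^{2}}{\lambda_{k^*+1}^{2}} \;\ge\; c_1 \cdot \frac{\sigma^2}{\lambda_{k^*+1} r_{k^*}/n} \cdot \frac{r_{k^*}^{2}}{n^{2}}.$$
The first factor tends to $\infty$ by the noise hypothesis and the second is at least $b^{2}$ by Condition~\ref{cond:benign}, so the product diverges; taking square roots gives $\tilde\lambda/\lambda_{k^*+1} \to \infty$, contradicting the bounded-subsequence assumption. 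The main obstacle I expect is formalizing "stable parameter norm" (which I would take to mean boundedness in probability of $\mathbb{E}_{\truth,y}\|\ridge\|_2^2$) and assembling the $1-\delta$ high-probability events from Theorems~\ref{thm_ridge} and~\ref{thm_r2} uniformly along the subsequence; both follow from a standard union bound, since the admissible range $\ln(1/\delta) < n/c$ permits failure probabilities that are summable in $n$.
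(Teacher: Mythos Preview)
Your proposal is correct and follows essentially the same route as the paper's proof. Both arguments isolate the tail-variance term $\sigma^2\sum_{\lambda_i<\tilde\lambda}\lambda_i/(n\tilde\lambda^2)\ge \sigma^2\lambda_{k^*+1}r_{k^*}/(n\tilde\lambda^2)$ from the parameter-norm lower bound and combine stability with $\sigma^2=\omega(\lambda_{k^*+1}r_{k^*}/n)$ and $r_{k^*}\ge bn$ to force $\tilde\lambda/\lambda_{k^*+1}\to\infty$; the paper packages this as a contradiction from assuming $\tilde\lambda\le C\lambda_{k^*+1}$, whereas you rearrange into a direct lower bound on the ratio, but the content is identical.
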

\begin{proof}
While we consider $\tilde{\lambda}$ induces a stable parameter norm, we have
\begin{equation}\label{eq:con_stable}
    \lim_{n \to \infty} \frac{\sigma^2}{n \tilde{\lambda}^2} \sum_{\lambda_j \le \tilde{\lambda}} \lambda_j \ne \infty, \quad \lim_{n \to \infty} \frac{\sigma^2}{n \lambda_{k^*+1}^2} \sum_{j > k^*} \lambda_j = \infty.
\end{equation}
Then if the condition $\tilde{\lambda} / \lambda_{k^* + 1} \to \infty$ does not meet, there exists a constant $C > 0$ satisfying 
\begin{equation*}
    \lim_{n \to \infty} \frac{\tilde{\lambda}}{\lambda_{k^*+1}} \le C,
\end{equation*}
we can obtain
\begin{equation*}
   \lim_{n \to \infty} \frac{\sigma^2}{n \tilde{\lambda}^2} \sum_{\lambda_j \ge \tilde{\lambda}} \lambda_j \ge \lim_{n \to \infty} \frac{\sigma^2}{n C^2 \lambda_{k^*+1}^2} \sum_{j \ge k^*+1} \lambda_j = \infty,
\end{equation*}
which contradicts the first equation in Eq.\eqref{eq:con_stable}, so we can draw a conclusion that
\begin{equation*}
    \lim_{n \to \infty} \frac{\tilde{\lambda}}{\lambda_{k^* + 1}} = \infty.
\end{equation*}
\end{proof}
Then from the second condition in Condition~\ref{cond:trade}, $\lambda = \lambda_{w^*}$ can always leads to a stable $\mathbb{E} \| \ridge \|_2^2$, combing with Lemma \ref{lem:stable_index}, $\lambda_{w^*} / \lambda_{k^*+1}$ tends to infinity. Then considering the first condition in Condition~\ref{cond:trade}, as well as the fact that $\sriskb$ will increase with $\lambda$, based on the result
\begin{equation*}
\frac{\sriskb(\ridge |_{\lambda = \lambda_{w^*}})}{\| \truth \|_2^2 \srisk(\ridge |_{\lambda=0})} \ge c_4 \min \left\{ \frac{\lambda_{w^*}^2 \sum_{\lambda_i > \lambda_{w^*}} \tilde{\truth}_i^2 / \lambda_i + \sum_{\lambda_i \le \lambda_{w^*}} \tilde{\truth}_i^2 \lambda_i}{\| \truth \|_2^2 (\lambda_{k^*+1}^2 \| \truth_{0:k^*} \|^2_{\Sigma_{0:k^*}^{-1}} + \| \truth_{k^*:\infty} \|^2_{\Sigma_{k^*:\infty}} )} , \frac{1}{\mathbb{E} \| \ridge \|_2^2 \sqrt{\max\{ k^*/n, n/R_{k^*} \}}} \right\},
\end{equation*}
in which $c_4 = \min\{ C_1 / (4 c_1^2 c_2^2), C_1/ (288 c_1^4 c_2^3 c_3^2 b^2) \}$ is a constant only depending on $b, \sigma_x$, we can draw a conclusion that with enough sample size $n$, while $\lambda_{w^*} \le \lambda \le \lambda_1$, $\srisk \ge \| \truth \|_2^2 \srisk(\ridge |_{\lambda=0})$, as both two terms on the right hand side tends to infinity. So in this regime, we reveal that under large enough sample size $n$, with a high probability, the near optimal standard risk convergence rate and stable adversarial risk can not be obtained at the same time.

By considering the results for all regimes together, we obtain the conclusion stated in Theorem~\ref{thm:main_tradeoff},  which implies that to get a stable adversarial risk, there must be corresponding loss in convergence rate in standard risk.

\section{Proof for Theorem \ref{thm_ntk}}\label{pf:ntk}

The proof consists of three steps. First, we take a linear approximation for NTK kernel $K = \nabla F \nabla F^T$ as $m \to \infty$; next, we take asymptotic expressions for standard risk $\srisk (\hat{w})$ and Lipschitz norm $ \mathbb{E} \| \nabla_x f_{NTK}(\hat{w}, x) \|^2$; finally, the upper and lower bounds of $\srisk (\hat{w})$ and  $\mathbb{E} \| \nabla_x f_{NTK}(\hat{w}, x) \|^2$ are calculated respectively.

\textbf{Step 1: kernel matrix linearization.} Recalling Lemma \ref{lem:ntk1} and \ref{lem:ntk2} in \citet{jacot2018neural}, with Condition \ref{cond:ntk_high-dim}, the kernel matrix $K = \nabla F \nabla F^T \in \rR^{n \times n}$ will have components as 
\begin{equation}\label{eq:kernel_orig}
\begin{aligned}
    K_{i,j} = K(x_i, x_j) &= \nabla_w f_{NTK}(w_0, x_i)^T \nabla_w f_{NTK} (w_0, x_j)\\
    &= \frac{x_i^T x_j}{\pi p} \arccos\left( - \frac{x_i^T x_j}{\| x_i \| \| x_j \|} \right) + \frac{\| x_i \| \| x_j \|}{2 \pi p} \sqrt{1 - \left( \frac{x_i^T x_j}{\| x_i \| \| x_j \|} \right)^2} + o_p(\frac{1}{\sqrt{m}}),
\end{aligned}
\end{equation}
here we define a temporary function $t_{ij}(z)$ as:
\begin{equation*}
    t_{i,j}(z) := \frac{x_i^T x_j}{\pi l} \arccos \left( - \frac{x_i^T x_j}{l z} \right) + \frac{z}{2 \pi} \sqrt{1 - \left( \frac{x_i^T x_j}{l z} \right)^2},
\end{equation*}
which has a uniformal bounded Lipschitz as:
\begin{equation*}
| t'_{i,j}(z) = | \frac{3}{2 \pi} \frac{(x_i^T x_j / l z)^2}{\sqrt{1 - (x_i^T x_j /l z)^2}} + \frac{1}{2 \pi} \sqrt{1 - (x_i^T x_j /l z)^2} | \le \frac{2}{\pi}, 
\end{equation*}
and the kernel matrix $K$ can be approximated by a new kernel $K'$ which has components $K'_{i,j} = (l / p) t_{i,j}(1)$, due to the following fact
\begin{align*}
\| p/l K - p/l K' \|_2 &= \max_{\beta \in \mathbb{S}^{n-1}} \beta^T (p/l K - p/l K') \beta = \max_{\beta \in \mathbb{S}^{n-1}} \sum_{i,j} \beta_i \beta_j  (t_{i,j}(\frac{\| x_i \| \| x_j \|}{l}) - t_{i,j}(1) + o_p(\frac{p}{l \sqrt{m}}))\\
&\le \frac{2}{\pi} \max_{\beta \in \mathbb{S}^{n-1}} \sum_{i,j} \beta_i \beta_j \left| \frac{\| x_i \| \| x_j \|}{l} - 1 \right| + o_p(\frac{np}{l\sqrt{m}})\\
& \le \frac{2}{\pi} \max_{i,j} \left| \frac{\| x_i \| \| x_j \|}{l} - 1 \right| \cdot  \max_{\beta \in \mathbb{S}^{n-1}} \sum_{i,j} \beta_i \beta_j + o_p(\frac{np}{l\sqrt{m}}) \\
& = \frac{2}{\pi} \max_{i} \left| \frac{\| x_i \|_2^2}{l} - 1 \right| \cdot  \max_{\beta \in \mathbb{S}^{n-1}} \sum_{i,j} \beta_i \beta_j + o_p(\frac{np}{l \sqrt{m}}) \\
& \le \frac{2 n}{\pi} \max_{i} \left| \frac{\| x_i \|_2^2}{l} - 1 \right| + o_p(\frac{np}{l \sqrt{m}}), 
\end{align*}
where the first inequality is due to the bounded Lipschitz norm of $t_{i,j}(z)$ and the fact that $\beta \in \mathbb{S}^{n-1}$, and the last inequality is from Cauthy-Schwarz inequality:
\begin{equation*}
    \sum_{i,j} \beta_i \beta_j \le \sqrt{\sum_{i,j} \beta_i^2 } \sqrt{\sum_{i,j} \beta_j^2} = n \sum_i \beta_i^2 = n. 
\end{equation*}
Then under Condition \ref{cond:ntk_benign} and \ref{cond:ntk_high-dim}, considering the concentration inequality for input data, for any fixed index $i = 1, \dots, n$, with probability at least $1 - 2 n e^{- t^2 l^2 / 2 r_0(\varSigma^2) }$, we could obtain that
\begin{equation*}
  \max_{i = 1, \dots, n} \left| \frac{\| x_i \|_2^2}{l} - 1 \right| \le t,
\end{equation*}
under Condition~\ref{cond:ntk_high-dim}, as $r_0(\varSigma^2) \le r_0(\varSigma) = l$, choosing $t = n^{- 5/16}$, we have $t^2 l^2/ r_0(\varSigma^2) \ge l n^{-5/8} \ge n^{1/8}$, so with probability at least $1 - 2n e^{- n^{1/8} / 2}$, we can get
\begin{equation*}
    \| K - K' \|_2 \le \frac{2n^{11/16} }{p \pi} + o_p(\frac{n}{\sqrt{m}}) = o(\frac{l}{p}),
\end{equation*}
where the last equality is from Condition \ref{cond:ntk_high-dim}.
So it doesn't matter to replace kernel matrix $K$ by $K'$. Further, if we denote a function $g: \rR \to \rR$ as:
\begin{equation*}
    g(z) := \frac{z}{\pi l} \arccos ( - \frac{z}{l} ) + \frac{1}{2 \pi} \sqrt{1 - (\frac{z}{l})^2},
\end{equation*}
the components of matrix $K'$ can be expressed as $K'_{i,j} = \frac{l}{p} g(x_i^T x_j)$,
then with a refinement of \citet{el2010spectrum} in Lemma \ref{lem:spectrum} , with probability at least $1 - 4 n^2 e^{- n^{1/8} / 2}$, we have the following approximation:
\begin{equation*}
    \| K' - \tilde{K} \|_2 = o(\frac{l}{p n^{1/16}}),
\end{equation*}
in which 
\begin{equation}\label{eq:est_kernel}
    \tilde{K} = \frac{l}{p} (\frac{1}{2 \pi} + \frac{3 r_0(\varSigma^2)}{4 \pi l^2}) 11^T + \frac{1}{2 p} XX^T + \frac{l}{p} (\frac{1}{2} - \frac{1}{2 \pi}) I_n,
\end{equation}
as $\| \tilde{K} \|_2 \ge \frac{l}{p}(\frac{1}{2} - \frac{1}{2 \pi})$, we can approximate $K$ by $\tilde{K}$ in following calculations.

\textbf{Step 2: asymptotic standard risk estimation. } With the solution in Eq.\eqref{eq:ntk_para}, the excessive standard risk can be expressed as
\begin{align*}
& \quad  \srisk(\hat{w})\\
&= \mathbb{E}_{x, \epsilon} [\nabla_w f_{NTK}(w_0, x)^T (\hat{w} - w_*)]^2\\
& = \mathbb{E}_{ x, \epsilon} \{ \nabla_w f_{NTK}(w_0, x)^T [ (\nabla F^T (\nabla F \nabla F^T)^{-1} \nabla F - I)(w_* - w_0) + \nabla F^T (\nabla F \nabla F^T)^{-1} \epsilon ] \}^2\\
& = \mathbb{E}_{x}  (w_* - w_0)^T (I - \nabla F^T (\nabla F \nabla F^T)^{-1} \nabla F ) (\nabla_w f_{NTK} (w_0,x) \nabla_w f_{NTK} (w_0, x)^T - \frac{1}{n} \nabla F^T \nabla F ) \\
& \quad \quad \quad \quad \quad \quad \quad \quad ( I - \nabla F^T (\nabla F \nabla F^T)^{-1} \nabla F ) (w_* - w_0) \\
& \quad + \sigma^2 \mathbb{E}_{ x} \text{tr} \{ (\nabla F \nabla F^T)^{-1} \nabla F \nabla_w f_{NTK}(w_0, x) \nabla_w f_{NTK} (w_0, x)^T \nabla F^T (\nabla F \nabla F^T)^{-1} \}\\
& \le \mathbb{E}_{ x}  \|  w_* - w_0 \|_2^2  \| (I - \nabla F^T (\nabla F \nabla F^T)^{-1} \nabla F) \|_2^2 \| \nabla_w f_{NTK} (w_0,x) \nabla_w f_{NTK} (w_0, x)^T \|_2 \\
& \quad + \sigma^2 \mathbb{E}_{ x} \text{tr} \{ (\nabla F \nabla F^T)^{-1} \nabla F \nabla_w f_{NTK}(w_0, x) \nabla_w f_{NTK} (w_0, x)^T \nabla F^T (\nabla F \nabla F^T)^{-1} \}\\
& \le \underbrace {R^2 \mathbb{E}_{ x} \| \nabla_w f_{NTK} (w_0,x) \nabla_w f_{NTK} (w_0, x)^T - \frac{1}{n} \nabla F^T \nabla F \|_2}_{\sriskb}\\
& \quad + \underbrace{\sigma^2 \mathbb{E}_{ x } \text{tr} \{ K^{-2} \nabla F \nabla_w f_{NTK}(w_0,x) \nabla_w f_{NTK}(w_0,x)^T \nabla F^T \}  }_{\sriskv},
\end{align*}
where we denote $\nabla F(x') = [\nabla_w f_{NTK}(w_0,x'_1), \dots, \nabla_w f_{NTK} (w_0, x'_n) ]^T \in \rR^{n \times m(p+1)}$, and the last inequality is induced from the facts:
\begin{align*}
 & \| w_* - w_0 \|_2^2 = \| [\Theta_*, U_*] - [\Theta_0, U_0] \|_F^2    \le R^2,\\
 & \| I - \nabla F^T (\nabla F \nabla F^T)^{-1} \nabla F \|_2 \le 1.
 \end{align*}

For the first term $\sriskb$, we first prove the random variable $\nabla_w f_{NTK}(w_0, x)$ is sub-gaussian with respect to $x$. Take derivative for $\nabla_w f_{NTK}(w_0, x)$ on each dimension of $x$:
\begin{equation}\label{eq:deriv}
\begin{aligned}
& \| \frac{\partial^2 f_{NTK}(w_0,x)}{\partial u_i \partial x} \|_2 = \| \frac{1}{\sqrt{m p}} h'(\theta_{0,i}^T x) \theta_{0,i} \|_2 \le \frac{\| \theta_{0,i} \|_2}{\sqrt{m p}}, \quad i = 1, \dots, m,\\ 
& \| \frac{\partial^2 f_{NTK}(w_0,x)}{\partial \theta_{i,j} \partial x} \|_2 = \| \frac{u_{0,i}}{\sqrt{m p}} h'(\theta_{0,i}^T x) e_j \|_2 \le \frac{| u_{0,i} |}{\sqrt{m p}}, \quad i = 1, \dots, m, j = 1, \dots, p,
\end{aligned}    
\end{equation}
it implies that for any vector $\gamma \in \mathbb{R}^{m(p+1)}$, the function $\gamma^T \nabla_w f_{NTK}(w_0,x)$ has a bounded Lipschitz as
\begin{equation*}
    \| \frac{\partial \gamma^T \nabla_w f_{NTK}(w_0,x)}{\partial x} \|_2 \le \frac{1}{\sqrt{m p}} \left( \sum_{i=1}^m | \gamma_i | \| \theta_{0,i} \|_2 + \sum_{i=1}^m \sum_{j=1}^p | u_{0,i} | |\gamma_{i m + p } | \right) \le \frac{1}{\sqrt{m p}} \| \gamma \|_2 \sqrt{ \sum_{i=1}^m \| \theta_{0,i} \|_2^2 + p u_{0,i}^2  },  
\end{equation*}
where the first inequality is due to the derivative results in Eq.\eqref{eq:deriv}, and the second inequality is from Cauthy-Schwarz inequality. Then by Lemma \ref{lem:lip_gaussian}, we can obtain
\begin{equation*}
    \mathbb{E} e^{\lambda \gamma^T \nabla_w f_{NTK}(w_0,x)} \le 
\text{exp} \left(\frac{\lambda^2 \| \gamma \|_2^2 (\sum_{i=1}^m \| \theta_{0,i} \|_2^2 + p u_{0,i}^2 )}{2 m p} \right), 
\end{equation*}
which implies that $\nabla_w f_{NTK}(w_0,x)$ is a $\sqrt{(\sum_{i=1}^m \| \theta_{0,i} \|_2^2 + p u_{0,i}^2) / m p}$-subgaussian random vector, and $\nabla F$ can be regarded as $n$ i.i.d.~samples from the distribution of $\nabla_w f_{NTK}(w_0, x)$, corresponding to data $x_1, \dots, x_n$. Then calculate the mean value of $\nabla_w f_{NTK}(w_0, x)$ on each dimension, we obtain
\begin{align*}
& \mathbb{E}_x \frac{\partial f_{NTK}(w_0,x)}{\partial u_i} = \mathbb{E}_x \frac{1}{\sqrt{m p}} h(\theta_{0,i}^T x) = \frac{\| 
\varSigma^{1/2} \theta_{0,i} \|_2}{\sqrt{2 \pi m p}}, i = 1, \dots, m,\\
& \mathbb{E}_x \frac{\partial f_{NTK}(w_0 x)}{\partial \theta_{i,j}} = \mathbb{E}_x \frac{u_{0,i}}{\sqrt{m p}} h'(\theta_{0,i}^T x) x_j = \frac{u_{0,i}}{\sqrt{2 \pi m p}} \frac{\theta_{0,i}^T \varSigma e_j}{\| 
\varSigma^{1/2} \theta_{0,i} \|_2}, i = 1, \dots, m, j = 1, \dots, p, 
\end{align*}
which implies that the $L_2$ norm of its mean value is
\begin{equation*}
\begin{aligned}
\| \mathbb{E}_x \nabla_w f_{NTK}(w_0, x) \|_2^2 &= \frac{1}{2 \pi m p} \left( \sum_{i=1}^m \theta_{0,i}^T \varSigma \theta_{0,i} + \sum_{i=1}^m \sum_{j=1}^p \frac{u_{0,i}^2 (\varSigma^{1/2} \theta_{0,i})_j^2}{\| \varSigma^{1/2} \theta_{0,i} \|_2^2} \right)\\
& = \frac{1}{2 \pi m p} \left( \sum_{i=1}^m \theta_{0,i}^T \varSigma \theta_{0,i} + u_{0,i}^2 \right),
\end{aligned}
\end{equation*}
then using Lemma \ref{lem:matrix_mean}, with probability at least $1 - 4 e^{- \sqrt{n}}$, we can get
\begin{equation}\label{eq:ntk_bias_o}
\begin{aligned}
& \quad \| \underbrace{ \mathbb{E}_x \nabla_w f_{NTK}(w_0, x) \nabla_w f_{NTK}(w_0, x)^T}_{S_f} - \frac{1}{n} \nabla F^T \nabla F \|_2\\
& \le \| S_f \|_2 \max \{ \sqrt{\frac{\text{tr}(S_f)}{n}}, \frac{\text{tr}(S_f)}{n}, \frac{1}{n^{1/4}} \} + 2 \sqrt{2} \frac{\sqrt{(\sum_{i=1}^m \| \theta_{0,i} \|_2^2 + p u_{0,i}^2)(\sum_{i=1}^m  \theta_{0,i}^T \varSigma \theta_{0,i} + u_{0,i}^2) } }{\sqrt{2 \pi} m p n^{1/4}},
\end{aligned} 
\end{equation}
with some constant $C > 0$. Under Condition~\ref{cond:ntk_high-dim}, we have
\begin{align*}
 \| S_f \|_2 \le \text{tr} (S_f) &= \left( 1 + O_p(\frac{1}{\sqrt{m}}) \right) \mathbb{E}_x \nabla_w f_{NTK}(w_0, x)^T \nabla_w f_{NTK}(w_0, x)\\
 &=\left( 1 + O_p(\frac{1}{\sqrt{m}}) \right)\mathbb{E}_x K(x, x) =\left( 1 + O_p(\frac{1}{\sqrt{m}}) \right)\mathbb{E}_x \frac{\| x \|_2^2}{p} = \frac{l}{p} + O_p(\frac{l}{p \sqrt{m}}),\\
 \frac{\sum_{i=1}^m \| \theta_{0,i} \|_2^2 + p u_{0,i}^2}{m} &=\left( 1 + O_p(\frac{1}{\sqrt{m}}) \right) \left( \mathbb{E}_{w_0} \| \theta_0 \|_2^2 + p u_0^2 \right) = 2 p + O_p(\frac{p}{\sqrt{m}}) ,\\
\frac{\sum_{i=1}^m  \theta_{0,i}^T \varSigma \theta_{0,i} + u_{0,i}^2}{m} &=\left( 1 + O_p(\frac{1}{\sqrt{m}}) \right) \left( \mathbb{E}_{w_0} \text{tr} [\varSigma \theta_{0} \theta_0^T] + u_0^2 \right) = l + 1 + O_p(\frac{p}{\sqrt{m}}),
\end{align*}
take the results above into Eq.\eqref{eq:ntk_bias_o}, then we can obtain that with probability at least $1 - 4 e^{- \sqrt{n}}$,
\begin{equation}\label{eq:ntk_bias}
\begin{aligned}
 & \quad \| \mathbb{E}_x \nabla_w f_{NTK}(w_0, x) \nabla_w f_{NTK}(w_0, x)^T - \frac{1}{n} \nabla F^T \nabla F \|_2\\
 & \le \frac{l}{p} \left( \frac{1}{n^{1/4}} + \frac{l}{n p} \right) + 4 \sqrt{2} \frac{\sqrt{2p(l+1)}}{\sqrt{2 \pi} p n^{1/4}} \le \frac{8}{\sqrt{\pi}} \sqrt{\frac{l}{p}} \frac{1}{n^{1/4}}.
\end{aligned}
\end{equation}
Then we turn to the variance term $\sriskv$, 
\begin{align*}
& \quad \sigma^2 \mathbb{E}_{ x } \text{tr} \{ K^{-2} \nabla F \nabla_w f_{NTK}(w_0,x) \nabla_w f_{NTK}(w_0,x)^T \nabla F^T \}  \\
&= \frac{\sigma^2}{n} \mathbb{E}_{x'_i} \sum_{i=1}^n \text{tr} \{ (\nabla F \nabla F^T)^{-1} \nabla F \nabla_w f_{NTK}(w_0, x'_i) \nabla_w f_{NTK} (w_0, x'_i)^T \nabla F^T (\nabla F \nabla F^T)^{-1} \}\\
&= \frac{\sigma^2}{n} \mathbb{E}_{ x'_i} \sum_{i=1}^n \text{tr} \{ (\nabla F \nabla F^T)^{-1} \nabla F \nabla_w f_{NTK}(w_0, x'_i) \nabla_w f_{NTK} (w_0, x'_i)^T \nabla F^T (\nabla F \nabla F^T)^{-1} \}\\
&=  \frac{\sigma^2}{n} \mathbb{E}_{ x'_i} \text{tr} \{ (\nabla F \nabla F^T)^{-1} \nabla F \nabla F(x')^T \nabla F(x') \nabla F^T (\nabla F \nabla F^T)^{-1} \}\\
&=  \frac{\sigma^2}{n} \mathbb{E}_{ x'_i} \text{tr} \{ K^{-2} \nabla F \nabla F(x')^T \nabla F(x') \nabla F^T \},
\end{align*}
where we denote $x'_1, \dots, x'_n$ are i.i.d.~samples from the same distribution as $x_1, \dots,x_n$, And the last equality  is from the fact that $\nabla F \nabla F^T = K$. For the matrix $\nabla F \nabla F(x')^T $ and $\nabla F(x') \nabla F^T$, with probability at least $ 1 - 4 n^2 e^{- n^{1/4} / 2}$, we can take similar procedure as  Lemma \ref{lem:spectrum} to linearize them respectively:
\begin{equation}\label{eq:non_asym}
\begin{aligned}
& \|  \nabla F \nabla F(x')^T - \frac{l}{p} (\frac{1}{2 \pi} + \frac{3 r_0(\varSigma^2)}{4 \pi l^2}) 11^T - \frac{1}{2 p} XX^{'T} \|_2 \le \frac{4l}{p n^{1/16}},\\
& \| \nabla F(x') \nabla F^T - \frac{l}{p} (\frac{1}{2 \pi} + \frac{3 r_0(\varSigma^2)}{4 \pi l^2}) 11^T - \frac{1}{2 p} X' X^T \|_2 \le \frac{4 l}{p n^{1/16}},
\end{aligned}    
\end{equation}
as the samples $x'_i, i = 1, \dots, n$ are independent of $x_i, i = 1, \dots, n$, we can take Eq.~\eqref{eq:non_asym} into $\sriskv$ to take expecation as:
\begin{equation}\label{eq:ntk_var_o}
\begin{aligned}
& \quad  \sriskv\\
& \le 2 \frac{\sigma^2}{n} \mathbb{E}_{ x'_i} \text{tr} \left\{ \tilde{K}^{-2} \left( (\frac{l}{p} (\frac{1}{2 \pi} + \frac{3 r_0(\varSigma^2)}{4 \pi l^2}) 11^T + \frac{1}{2 p} XX^{'T}) (\frac{l}{p} (\frac{1}{2 \pi} + \frac{3 r_0(\varSigma^2)}{4 \pi l^2})11^T + \frac{1}{2 p}X' X^T) + \frac{16l^2}{p^2 n^{1/8}} I \right) \right\} ,\\
&= 2 \sigma^2 \text{tr} \{ \tilde{K}^{-2}(\frac{l^2}{p^2} (\frac{1}{4 \pi^2} + o(1)) 11^T + \frac{1}{4 p^2} X \varSigma X^T + \frac{16 l^2}{p^2 n^{9/8}} I_n) \},
\end{aligned}
\end{equation}
where the inequality is from linearizing the matrix $K$, $\nabla F \nabla F(x')^T $ and $\nabla F(x') \nabla F^T$. By Woodbury identity, denoting
\begin{equation*}
    \tilde{R} = \frac{1}{2 p} XX^T + \frac{l}{p}(\frac{1}{2} - \frac{1}{2 \pi}) I_n,
\end{equation*}
we can get
\begin{equation*}
\begin{aligned}
& \quad \frac{l^2}{p^2} (\frac{1}{4 \pi^2} + o(1)) 1^T \tilde{K}^{-2} 1 = \frac{l^2}{p^2} (\frac{1}{4 \pi^2} + o(1)) 1^T (\frac{l}{p} (\frac{1}{2 \pi} + \frac{3 r_0(\varSigma^2)}{4 \pi l^2}) 11^T + \tilde{R})^{-2} 1\\
& = \frac{\frac{l^2}{p^2} (\frac{1}{4 \pi^2} + o(1)) 1^T \tilde{R}^{-2} 1}{(1 + \frac{l}{p} (\frac{1}{2 \pi} + \frac{3 r_0(\varSigma^2)}{4 \pi l^2}) 1^T \tilde{R}^{-1} 1)^2}  \le \frac{1^T \tilde{R}^{-2} 1}{(1^T \tilde{R}^{-1} 1)^2} \le \frac{n / \lambda_n(\tilde{R})^2}{n^2 / \lambda_1(\tilde{R})^2},   
\end{aligned}    
\end{equation*}
where the first inequality is from ignoring the constant term $1$ on denominator, and the second inequality is due to the fact
\begin{align*}
    & 1^T \tilde{R}^{-2} 1 \le n \lambda_1(\tilde{R}^{-1})^2 = n / \lambda_n(\tilde{R})^2,\\
    & 1^T \tilde{R}^{-1} 1 \ge n \lambda_n(\tilde{R}^{-1}) = n / \lambda_1(\tilde{R}),
\end{align*}
as recalling Lemma \ref{lem_ridgeeigen}, with a high probability, we have
\begin{equation*}
    \lambda_n(\tilde{R}) \ge \frac{l}{p}(\frac{1}{2} - \frac{1}{2 \pi}) +  \frac{1}{c_1 p} \lambda_{k^* + 1} r_{k^*} \ge \frac{l}{4 p}, \quad \lambda_1 (\tilde{R}) \le \frac{l}{p}(\frac{1}{2} - \frac{1}{2 \pi}) + \frac{c_1}{p}(n \lambda_1 + l) \le \frac{2 l (1 + c_1)}{p} \le \frac{(1 + c_1)(l + n)}{p},
\end{equation*}
we can further obtain that
\begin{equation*}
\frac{l^2}{p^2} (\frac{1}{4 \pi^2} + o(1)) 1^T \tilde{K}^{-2} 1 \le \frac{n}{n^2} \frac{4 (l + n)^2 (1 + c_1)^2 / p^2}{l^2 / (4 p^2)} \le 32(1 + c_1)^2 \left( \frac{1}{n} + \frac{n}{l^2} \right),
\end{equation*}
further due to Condition \ref{cond:ntk_high-dim}, we have
\begin{equation}\label{eq:ntk_var1}
\frac{l^2}{p^2} (\frac{1}{4 \pi^2} + o(1)) 1^T \tilde{K}^{-2} 1 \le  32(1 + c_1)^2 \left( \frac{1}{n} + \frac{n}{l^2} \right) \le \frac{64 (1 + c_1)^2}{n^{1/2}}.
\end{equation}
For the second term, based on Lemma \ref{lem_ridgeeigen}, with probability at least $1 - c e^{- n /c}$, we can obtain that
\begin{equation}\label{eq:ntk_var2}
\begin{aligned}
\frac{\sigma^2}{2 p^2} \text{tr} \{ \tilde{K}^{-2} X \varSigma X \} &= \frac{\sigma^2}{2 p^2} \text{tr} \{ (\frac{l}{p} (\frac{1}{2 \pi} + \frac{3 r_0(\varSigma^2)}{4 \pi l^2}) 11^T + \frac{1}{2p} XX^T + \frac{l}{p}(\frac{1}{2} - \frac{1}{2 \pi}) I_n )^{-2} X \varSigma X^T \}\\
&\le \frac{\sigma^2}{ p^2} \text{tr} \{ (\frac{1}{2p} XX^T + \frac{l}{p} (\frac{1}{2} - \frac{1}{ 2 \pi}) I_n )^{-2} X \varSigma X^T \}\\
&= \sigma^2 \text{tr} \{ (XX^T + 2 l (\frac{1}{2} - \frac{1}{2 \pi}) I_n )^{-2} X \varSigma X^T \}\\
& \le \sigma^2 \left(  \frac{k^*}{n} \frac{(c_1 \lambda_{k^* + 1} r_{k^*} + l (1 - 1/\pi))^2}{( 1/c_1 \lambda_{k^* + 1} r_{k^*} + l (1 - 1/\pi))^2} + \frac{n \sum_{i > k^*} \lambda_i^2}{(\lambda_{k^* + 1} r_{k^*} + l(1 - 1/\pi) )^2} \right)\\
&\le \sigma^2 \left( c_1^4 \frac{k^*}{n} + \frac{n \sum_{i > k^*} \lambda_i^2}{(l(1 - 1/\pi) )^2} \right),
\end{aligned}    
\end{equation}
where the first inequality is from relaxing the unimportant term $11^T$ (see Lemma 2.2 in \citet{bai2008methodologies}), the second inequality is based on Lemma \ref{lem_bart}, and the last inequality is from the fact that
\begin{align*}
 & \frac{(c_1 \lambda_{k^* + 1} r_{k^*} + l (1 - 1/\pi))^2}{( 1/c_1 \lambda_{k^* + 1} r_{k^*} + l (1 - 1/\pi))^2} \le c_1^4,\\
 & \lambda_{k^* + 1} r_{k^*} + l(1 - 1/\pi) \ge l (1 - 1/\pi).
 \end{align*}
And for the third term,
\begin{equation}\label{eq:ntk_var3}
 \begin{aligned}
2 \sigma^2 \text{tr} \{ \tilde{K}^{-2} \frac{16 l^2}{p^2 n^{9/8}} \} &= \frac{16 l^2}{p^2 n^{9/8}} \sigma^2 \text{tr} \{ (\frac{l}{p} (\frac{1}{2 \pi} + \frac{3 r_0(\varSigma^2)}{4 \pi l^2}) 11^T + \frac{1}{2p} XX^T + \frac{l}{p}(\frac{1}{2} - \frac{1}{2 \pi}) I_n )^{-2} \}\\
&\le \frac{32 l^2}{p^2 n^{9/8}} \sigma^2  \text{tr} \{ (\frac{1}{2p} XX^T + \frac{l}{p}(\frac{1}{2} - \frac{1}{2 \pi}) I_n )^{-2} \}\\
&= \frac{128 l^2}{n^{9/8}} \sigma^2 \text{tr} \{ (XX^T + l (1 - 1/\pi) I_n)^{-2} \}\\
& \le \frac{128}{(1 - 1/\pi)^2} \frac{1}{n^{1/8}},
\end{aligned}   
\end{equation}
where the last inequality is from the fact that $\mu_{n}(XX^T + l (1 - 1/\pi)) \ge l(1 - 1/\pi)$.
So combing Eq.\eqref{eq:ntk_bias}, \eqref{eq:ntk_var_o}, \eqref{eq:ntk_var1}, \eqref{eq:ntk_var2} and \eqref{eq:ntk_var3}, with a high probability, $\srisk(\hat{w})$ can be upper bounded as
\begin{equation}\label{eq:risk_final}
     \srisk(\hat{w}) \le r^2 \frac{8}{\sqrt{\pi}} \sqrt{\frac{l}{p}} \frac{1}{n^{1/4}} + 128 (1 + c_1)62 \sigma^2 \left( \frac{1}{n^{1/8}} + c_1^4 \frac{k^*}{n} + \frac{n \sum_{i> k^*} \lambda_i^2}{l^2 (1 - 1/ \pi)^2} \right).
\end{equation}

\textbf{Step 3: asymptotic Lipschitz norm estimation. } The final step is to lower bound the excessive adversarial risk $\arisk(\hat{w})$:
\begin{equation}\label{eq:adv_lower}
\begin{aligned}
& \quad  \arisk(\hat{w})\\
&= \alpha^2 \mathbb{E}_{ x,\epsilon} \| \nabla_x f_{NTK} (\hat{w}, x) \|_2^2 = \alpha^2 \mathbb{E}_{ x,\epsilon} \| \nabla_x f_{NTK} (w_0, x) + \frac{\partial^2 f_{NTK}(w_0, x)}{\partial w \partial x} (\hat{w} - w_0) \|_2^2\\
& \ge \alpha^2 \left| \mathbb{E}_{ x,\epsilon} \| \frac{\partial^2 f_{NTK}(w_0, x)}{\partial w \partial x} (\hat{w} - w_0) \|_2^2 - \mathbb{E}_{x,\epsilon} \| \nabla_x f_{NTK} (w_0, x)  \|_2^2 \right|,
\end{aligned}
\end{equation}
as the term $\mathbb{E}_{x,\epsilon} \| \nabla_x f_{NTK} (w_0, x)  \|_2^2$ can be calculated as
\begin{equation}\label{eq:adv_rt}
\mathbb{E}_{x,\epsilon} \| \nabla_x f_{NTK} (w_0, x)  \|_2^2 = \mathbb{E}_{ x,\epsilon} \| \frac{1}{\sqrt{mp}} \sum_{j=1}^m u_{0,j} h'(\theta_{0,j}^T x) \theta_{0,j}  \|_2^2 = \frac{1}{2} < \infty,
\end{equation}
the adversarial robustness is just measured by the term $\mathbb{E}_{x,\epsilon} \| \frac{\partial^2 f_{NTK}(w_0, x)}{\partial w \partial x} (\hat{w} - w_0) \|_2^2$ in Eq.\eqref{eq:adv_lower}. By Jensen's inequality, we have
\begin{align*}
& \quad \mathbb{E}_{ x,\epsilon} \| \frac{\partial^2 f_{NTK}(w_0, x)}{\partial w \partial x} (\hat{w} - w_0) \|_2^2 \\
&\ge \mathbb{E}_{ \epsilon} \| \mathbb{E}_x \frac{\partial^2 f_{NTK}(w_0, x)}{\partial w \partial x} (\hat{w} - w_0) \|_2^2\\
&=\mathbb{E}_{\epsilon} \sum_{d=1}^p \left( \frac{1}{\sqrt{mp}} \mathbb{E}_x [ \sum_{j=1}^m u_{0,j} h'(\theta_{0,j}^T x) (\hat{\theta}_{j,d} - \theta_{0,j,d}) + \sum_{j=1}^m \theta_{0,j,d} h'(\theta_{0,j}^T x) (\hat{u}_j - u_{0,j}) ] \right)^2\\
&= \mathbb{E}_{ \epsilon} \sum_{d=1}^p \left( \frac{1}{2 \sqrt{mp}}  [ \sum_{j=1}^m u_{0,j}  (\hat{\theta}_{j,d} - \theta_{0,j,d}) + \sum_{j=1}^m \theta_{0,j,d}  (\hat{u}_j - u_{0,j}) ] \right)^2,
\end{align*}
the equalities are from the  direct expansion of $\frac{\partial^2 f_{NTK}(w_0, x)}{\partial w \partial x} (\hat{w} - w_0)$.
Recalling the expression of $\hat{w}$ in Eq.\eqref{eq:ntk_para}, if we denote two types of vectors as
\begin{equation*}
\begin{aligned}
& \beta_{j, d} = [ h'(\theta_{0,j}^T x_1) x_{1,d}, \dots, h'(\theta_{0,j}^T x_n) x_{n,d}]^T \in \rR^n,\\
& \gamma_j = [ h(\theta_{0,j}^T x_1) , \dots, h(\theta_{0,j}^T x_n) ]^T \in \rR^n,
\end{aligned}
\end{equation*}
the estimated parameters can be expressed as
\begin{equation}\label{eq:para_est}
\hat{\theta}_{j,d} - \theta_{0,j,d} = \frac{u_{0,j}}{\sqrt{mp}} \beta_{j,d}^T K^{-1} (\nabla F (w_* - w_0) + \epsilon), \quad \hat{u}_j - u_{0,j} = \frac{1}{\sqrt{mp}} \gamma_j^T K^{-1} (\nabla F (w_* - w_0) + \epsilon).
\end{equation}
Take Eq.\eqref{eq:para_est} into the expression above, we could further obtain
\begin{equation}\label{eq:adv1}
\begin{aligned}
& \quad \mathbb{E}_{ x,\epsilon} \| \frac{\partial^2 f_{NTK}(w_0, x)}{\partial w \partial x} (\hat{w} - w_0) \|_2^2 \\
&\ge \mathbb{E}_{ \epsilon} \| \mathbb{E}_x \frac{\partial^2 f_{NTK}(w_0, x)}{\partial w \partial x} (\hat{w} - w_0) \|_2^2\\
&= \mathbb{E}_{ \epsilon} \sum_{d=1}^p \left( \frac{1}{2 m p} [ \sum_{j=1}^m u_{0,j}^2 \beta_{j,d}^T K^{-1} (\nabla F (w_* - w_0) + \epsilon) + \sum_{j=1}^m \theta_{0,j,d} \gamma_j^T K^{-1}  (\nabla F (w_* - w_0) + \epsilon)] \right)^2,
\end{aligned}    
\end{equation}
considering Condition~\ref{cond:ntk_high-dim}, we can get
\begin{align*}
& \quad \frac{1}{2 m p}  \left( \sum_{j=1}^m u_{0,j}^2 \beta_{j,d}^T K^{-1} (\nabla F (w_* - w_0) + \epsilon) + \sum_{j=1}^m \theta_{0,j,d} \gamma_j^T K^{-1}  (\nabla F (w_* - w_0) + \epsilon) \right)\\
& =\left( 1 + O_p(\frac{1}{\sqrt{m}}) \right) \frac{1}{2 p} \mathbb{E}_{w_0} \left[  u_{0,1}^2 \beta_{1,d}^T K^{-1} (\nabla F (w_* - w_0) + \epsilon) + \theta_{0,1,d} \gamma_1^T K^{-1}  (\nabla F (w_* - w_0) + \epsilon)\right]\\
& =\left( 1 + O_p(\frac{1}{\sqrt{m}}) \right) \frac{1}{2 p}  [x_{1,d}, \dots, x_{n,d}] K^{-1} (\nabla F (w_* - w_0) + \epsilon),
\end{align*}
take this result into Eq.\eqref{eq:adv1}, we could obtain that
\begin{equation}\label{eq:adv_o}
\begin{aligned}
& \quad \mathbb{E}_{ x,\epsilon} \| \frac{\partial^2 f_{NTK}(w_0, x)}{\partial w \partial x} (\hat{w} - w_0) \|_2^2 \\
& \ge \mathbb{E}_{ \epsilon} \sum_{d=1}^p \left( \frac{1}{4 p} \mathbb{E}_{ \epsilon} \{[x_{1,d}, \dots, x_{n,d}] K^{-1} (\nabla F (w_* - w_0) + \epsilon)\} \right)^2\\
&= \frac{1}{16 p^2} \mathbb{E}_{ \epsilon} \text{tr} \{ K^{-1} (\nabla F (w_* - w_0) + \epsilon) (\nabla F (w_* - w_0) + \epsilon)^T K^{-1} XX^T \}\\
&\ge \frac{\sigma^2}{16 p^2} \text{tr} \{ K^{-2} XX^T \} \ge \frac{\sigma^2}{32 p^2} \text{tr} \{\tilde{K}^{-2} XX^T \}.
\end{aligned}    
\end{equation}
where the second inequality is from ignoring the term related to $w_* - w_0$, and the last inequality is from linearizing kernel matrix $K$ to $\tilde{K}$.
Recalling Eq.\eqref{eq:normvl}, we have
\begin{align*}
& \quad \mathbb{E}_{ x,\epsilon} \| \frac{\partial^2 f_{NTK}(w_0, x)}{\partial w \partial x} (\hat{w} - w_0) \|_2^2 \\
&\ge \frac{\sigma^2}{32 p^2} \text{tr} \{\tilde{K}^{-2} XX^T \} \\
&= \frac{\sigma^2}{32 p^2} \text{tr} \{ (\frac{l}{p} (\frac{1}{2 \pi} + \frac{3 r_0(\varSigma^2)}{4 \pi l^2}) 11^T + \frac{1}{2 p} XX^T + \frac{l}{p}(\frac{1}{2} - \frac{1}{2 \pi}) I_n)^{-2} XX^T \}\\
&\ge \frac{\sigma^2}{64 p^2} \text{tr} \{ (\frac{1}{2 p} XX^T + \frac{l}{p}(\frac{1}{2} - \frac{1}{2 \pi}) I_n)^{-2} XX^T \}\\
& = \frac{\sigma^2}{16} \text{tr} \{ (XX^T + l (1 - 1 / \pi) I_n)^{-2} XX^T \}\\
& \ge \frac{\sigma^2}{288 c^2 c_3^2 c_2} \frac{n \lambda_{k^*+1} r_{k^*} }{(\lambda_{k^*+1} r_{k^*} +  l (1 - 1 / \pi))^2} \ge \frac{\sigma^2}{1152 c^2 c_3^2 c_2 } \frac{n \lambda_{k^*+1} r_{k^*} }{l^2},
\end{align*}
where the second inequality is from relaxing the term $11^T$ (see Lemma 2.2 in \citet{bai2008methodologies}), the third inequality is based on Eq.\eqref{eq:normvl}, and the last inequality is from the fact that
\begin{equation*}
     \lambda_{k^* + 1} r_{k^*} + l (1 - 1/\pi) \le 2 l, 
\end{equation*}
With Condition~\ref{cond:ntk_benign}, we have the fact that
\begin{equation}\label{eq:ntk_adv}
\mathbb{E}_{ x,\epsilon} \| \frac{\partial^2 f_{NTK}(w_0, x)}{\partial w \partial x} (\hat{w} - w_0) \|_2^2 \ge \frac{\sigma^2}{1152 c^2 c_3^2 c_2 } \frac{n \lambda_{k^*+1} r_{k^*} }{l^2},
\end{equation}
will exploded while $n$ increases.

\section{Proof for Remark \ref{rmk_widerclass}}\label{pf:widerclass}
Due to the analysis above, $w_0 - w_*$ just influence the bias term $\sriskb$ in $\srisk$, so we just need to consider this term.

First, with the solution in Eq.\eqref{eq:ntk_para}, $\sriskb$ can be expressed as
\begin{align*}
& \quad \mathbb{E}_{ x, \epsilon} \{ \nabla_w f_{NTK}(w_0, x)^T [ (\nabla F^T (\nabla F \nabla F^T)^{-1} \nabla F - I)(w_* - w_0) ] \}^2\\
& = \mathbb{E}_{ x}  (w_* - w_0)^T (I - \nabla F^T (\nabla F \nabla F^T)^{-1} \nabla F ) (\nabla_w f_{NTK} (w_0,x) \nabla_w f_{NTK} (w_0, x)^T - \frac{1}{n} \nabla F^T \nabla F ) \\
& \quad \quad \quad \quad \quad \quad \quad \quad ( I - \nabla F^T (\nabla F \nabla F^T)^{-1} \nabla F ) (w_* - w_0) \\
& = (v_* - v_0)^T \mathbb{E}_{x} \left[ \nabla_w f_{NTK} (w_0,x) \nabla_w f_{NTK} (w_0, x)^T - \frac{1}{n} \nabla F^T \nabla F \right] (v_* - v_0) ,
\end{align*}
where we denote $\nabla F(x') = [\nabla_w f_{NTK}(w_0,x'_1), \dots, \nabla_w f_{NTK} (w_0, x'_n) ]^T \in \rR^{n \times m(p+1)}$, and 
\begin{equation*}
    v_* - v_0 = ( I - \nabla F^T (\nabla F \nabla F^T)^{-1} \nabla F ) (w_* - w_0).
\end{equation*}
Then reviewing Eq~\eqref{eq:deriv} as:
\begin{align*}
& \| \frac{\partial^2 f_{NTK}(w_0,x)}{\partial u_i \partial x} \|_2 = \| \frac{1}{\sqrt{m p}} h'(\theta_{0,i}^T x) \theta_{0,i} \|_2 \le \frac{\| \theta_{0,i} \|_2}{\sqrt{m p}}, \quad i = 1, \dots, m,\\ 
& \| \frac{\partial^2 f_{NTK}(w_0,x)}{\partial \theta_{i,j} \partial x} \|_2 = \| \frac{u_{0,i}}{\sqrt{m p}} h'(\theta_{0,i}^T x) e_j \|_2 \le \frac{| u_{0,i} |}{\sqrt{m p}}, \quad i = 1, \dots, m, j = 1, \dots, p,
\end{align*}    
we could obtain that the function $(w_* - w_0)^T \nabla_w f_{NTK}(w_0,x)$ has a bounded Lipschitz with respect to $x$:
\begin{equation*}
    \| \frac{\partial (v_* - v_0)^T \nabla_w f_{NTK}(w_0,x)}{\partial x} \|_2 \le \frac{1}{\sqrt{m p}} \left( \sum_{i=1}^m | v_{*,i} - v_{0,i} | \| \theta_{0,i} \|_2 + \sum_{i=1}^m \sum_{j=1}^p | v_{*,mi+j} - v_{0,mi+j} | |u_{0,i} | \right) =: \text{lip},
\end{equation*}
where the inequality is due to the derivative results in Eq.\eqref{eq:deriv}. By Lemma \ref{lem:lip_gaussian}, we can obtain
\begin{equation*}
    \mathbb{E} e^{\lambda (v_* - v_0)^T \nabla_w f_{NTK}(w_0,x)} \le \text{exp} \left(\frac{\lambda^2 \text{lip}^2}{2} \right), 
\end{equation*}
which implies that $(v_* - v_0)^T \nabla_w f_{NTK}(w_0,x)$ is a $\text{lip}$-subgaussian random variable. Then with Lemma~\ref{lem_sg_se}, $(v_* - v_0)^T \nabla_w f_{NTK}(w_0,x) \nabla_w f_{NTK}(w_0, x)^T (v_* - v_0)$ is a $162e \text{lip}^2$-subgaussioan random variable. As $(v_* - v_0)^T \nabla F \nabla F^T (v_* - v_0)$ can be regarded as $n$ i.i.d.~samples from the same distribution as the random variable $(v_* - v_0)^T \nabla_w f_{NTK}(w_0, x) \nabla_w f_{NTK}(w_0, x)^T (v_* - v_0)$, corresponding to data $x_1, \dots, x_n$. Then with probability at least $1 - \text{exp}(- n t^2 / (162e)^2 \text{lip}^4 )$, the bias term could be upper bounded as
\begin{equation*}
    \sriskb \le t,
\end{equation*}
choosing $t = 162e \text{lip}^2 / \sqrt{n}$, we could further obtain 
\begin{equation}\label{eq:ntk_biasf}
    \sriskb \le \frac{162e \text{lip}^2}{\sqrt{n}},
\end{equation}
with probability at least $1 - e^{-\sqrt{n}}$. And the remaining problem is to estimate $\text{lip} / \sqrt{p}$. With the conditions on initial parameter $w_0$ are ground truth parameter $w_*$, we have
\begin{align*}
 \frac{\text{lip}}{\sqrt{p}} &= \frac{1}{p \sqrt{m}} \left( \sum_{i=1}^m | v_{*,i} - v_{0,i} | \| \theta_{0,i} \|_2 + \sum_{i=1}^m \sum_{j=1}^p | v_{*,im+j} - v_{0,im+j} | |u_{0,i} | \right)\\
 &\le \frac{1}{p \sqrt{m}} \| v_* - v_0 \|_{\infty} \left( \sum_{i=1}^m  \| \theta_{0,i} \|_2 + \sum_{i=1}^m \sum_{j=1}^p  |u_{0,i} | \right)\\
 & \le \frac{\epsilon_p^{1/2}}{p \sqrt{m}}  \left( \sum_{i=1}^m  \| \theta_{0,i} \|_2 + \sum_{i=1}^m \sum_{j=1}^p  |u_{0,i} | \right)\\
 &= \epsilon_p^{1/2} \left( \frac{1}{\sqrt{m}} \sum_{i=1}^m | u_{0,i} | + \frac{1}{\sqrt{m}} \sum_{i=1}^m \frac{\| \theta_{0,i} \|_2}{p} \right)\\
 &\le \epsilon_p^{1/2} \left( 1 + O(\frac{1}{\sqrt{m}}) \right) \left( 1 + O(\frac{1}{\sqrt{p}}) \right) \le 2 \| w_* - w_0 \|_{\infty},
\end{align*}
where the first inequality is from 
\begin{equation*}
   | \sum_s a_s b_k | \le \max_s | a_s | \cdot \sum_s | b_s |,
\end{equation*}
the second inequality is due to the fact that $I - \nabla F^T (\nabla F \nabla F^T)^{-1} \nabla F$ is a projection matrix which spans on $(mp + m - n)$-dim space and $\| \mathbb{E} (v_* - v_0) (v_* - v_0)^T \|_2 \le \| \mathbb{E} (w_* - w_0) (w_* - w_0)^T \|_2 \le \epsilon_1 $,
the third inequality is induced by Condition~\ref{cond:ntk_high-dim}. So considering Eq.~\eqref{eq:ntk_biasf}, we could further obtain that
\begin{equation*}
    \sriskb \le \frac{162 e \cdot p}{\sqrt{n}} \| w_* - w_0 \|_{\infty}^2,
\end{equation*}
with a probability at least $1 - e^{- \sqrt{n}}$. To be specific, as $w_* - w_0$ is a random vector satisfying that $\| w_* - w_0 \|_{\infty}^2 = o_p(1/p)$, the bias term $\sriskb$ converges to zero with a rate at least $o(1 / \sqrt{n})$.

\section{Auxiliary Lemmas}

\begin{lemma}[Refinement of Theorem 2.1 in \citealp{el2010spectrum}]\label{lem:spectrum}
Let we assume that we observe n i.i.d.~random vectors, $x_i \in \rR^p$. Let us consider the kernel matrix $K$ with entries
\begin{equation*}
    K_{i,j} = f(\frac{x_i^T x_j}{l}).
\end{equation*}
We assume that:
\begin{enumerate}
\item  $n,l,p$ satisfy Condition~\ref{cond:ntk_high-dim};
\item  $\varSigma$ is a positive-define $p \times p$ matrix, and $\| \varSigma \|_2 = \lambda_{\max}(\varSigma)$ remains bounded (without loss of generality, here we suppose $\lambda_{\max}(\varSigma) = 1$);
\item $\varSigma / l$ has a finite limit, that is, there exists $\tau \in \rR$ such that $\lim_{p \to \infty} \text{trace}(\varSigma) / l = \tau$;
\item $x_i = \varSigma^{1/2} \eta_i$, in which $\eta_i, i = 1, \dots, n$ are $\sigma$-subgaussian i.i.d.~random vectors with $\mathbb{E} \eta_i = 0$ and $\mathbb{E} \eta_i \eta_i^T = I_p$;
\item $f$ is a $C^1$ function in a neighborhood of $\tau = \lim_{p \to \infty} \text{trace}(\varSigma) / l $ and a $C^3$ function in a neighborhood of $0$.
\end{enumerate}
Under these assumptions, the kernel matrix $K$ can in probability be approximated consistently in operator norm, when $p$ and $n$ tend to $\infty$, by the kernel $\tilde{k}$, where
\begin{align*}
& \tilde{K} = \left( f(0) + f''(0) \frac{\text{trace}(\varSigma^2)}{2 l^2} \right) 1 1^T + f'(0) \frac{XX^T}{l} + v_p I_n,\\
& v_p = f\left( \frac{\text{trace}(\varSigma)}{l} \right) - f(0) - f'(0) \frac{\text{trace}(\varSigma)}{l}.
\end{align*}
In other words, with probability at least $1 - 4 n^2 e^{- n^{1/8} / (2 \tau)}$,
\begin{equation*}
    \| K - \tilde{K} \|_2 \le o(n^{- 1 / 16}). 
\end{equation*}
\end{lemma}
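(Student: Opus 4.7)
The plan is to bound $\|K-\tilde K\|_2$ by performing a Taylor expansion of $f$ separately on the diagonal and off-diagonal entries of $K$, and showing that the low-order Taylor pieces aggregate precisely into the three terms of $\tilde K$, while the remaining fluctuations and higher-order remainders have operator norm $o(n^{-1/16})$ on a single good event of the stated probability. Write $q_{ij}:=x_i^\T x_j/l=\eta_i^\T\Sigma\eta_j/l$, so $K_{ij}=f(q_{ij})$; on the off-diagonals the $q_{ij}$ concentrate near $0$ (the smoothness of $f$ at $0$ drives the linearization), while $q_{ii}$ concentrates near $\trace(\Sigma)/l\to\tau$ (the $C^1$ smoothness of $f$ at $\tau$ controls the diagonal).

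The first step is to set up the good event via Hanson-Wright-type bounds. Since $\|\Sigma\|_2\le1$ and $\trace(\Sigma^2)\le\trace(\Sigma)\le l(\tau+o(1))$, decoupling gives, for $i\ne j$,
\begin{equation*}
\Pb\bigl(|q_{ij}|>s\bigr)\le 2\exp\!\bigl(-c\min\{s^2l^2/\trace(\Sigma^2),\,sl\}\bigr),
\end{equation*}
and an analogous bound for $q_{ii}-\trace(\Sigma)/l$. Taking $s=n^{-1/16}$ and union bounding over the $\le n^2$ pairs and $n$ diagonals, the event
\begin{equation*}
\mathcal{E}:=\Bigl\{|q_{ij}|\le n^{-1/16}\ \forall\,i\ne j,\ \ \bigl|q_{ii}-\trace(\Sigma)/l\bigr|\le n^{-1/16}\ \forall\,i\Bigr\}
\end{equation*}
holds with probability at least $1-4n^2\exp(-n^{1/8}/(2\tau))$, where the sub-Gaussian (rather than sub-exponential) side of the tail is what dominates at this scale thanks to $n=o(l^{4/3})$. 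All subsequent estimates are performed on $\mathcal{E}$.

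On the off-diagonals, Taylor expand to third order: $f(q_{ij})=f(0)+f'(0)q_{ij}+\tfrac12 f''(0)q_{ij}^2+r_{ij}$ with $|r_{ij}|\le C|q_{ij}|^3$. Summing the constant and linear contributions produces $f(0)(11^\T-I_n)$ and $(f'(0)/l)(XX^\T-\mathrm{diag}(x_i^\T x_i))$, matching the corresponding pieces of $\tilde K$ up to a diagonal discrepancy of order $n^{-1/16}$. Decomposing $q_{ij}^2=\trace(\Sigma^2)/l^2+(q_{ij}^2-\E q_{ij}^2)$ (using $\E q_{ij}^2=\trace(\Sigma^2)/l^2$), the deterministic mean part aggregates into the $f''(0)\trace(\Sigma^2)/(2l^2)$ correction of the $11^\T$ block of $\tilde K$. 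On the diagonal,
\begin{equation*}
\tilde K_{ii}=f(0)+f'(0)\trace(\Sigma)/l+f''(0)\trace(\Sigma^2)/(2l^2)+v_p=f(\trace(\Sigma)/l)+f''(0)\trace(\Sigma^2)/(2l^2)
\end{equation*}
by the very definition of $v_p$, while the $C^1$ hypothesis of $f$ at $\tau$ gives $f(q_{ii})=f(\trace(\Sigma)/l)+O(n^{-1/16})$ on $\mathcal{E}$, so the diagonal contribution to $K-\tilde K$ is $o(n^{-1/16})$ in operator norm.

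The main obstacle is converting the entrywise bounds on the cubic remainder and on the quadratic fluctuation into operator-norm bounds, since the naive estimate $\|M\|_2\le n\max_{ij}|M_{ij}|$ is too weak here. For the cubic remainder matrix $R=[r_{ij}]_{i\ne j}$, a typical entry is of order $|q_{ij}|^3\lesssim(\trace(\Sigma^2))^{3/2}/l^3\le l^{-3/2}$, and a second layer of matrix-Bernstein/moment control yields $\|R\|_2=O(n/l^{3/2})=o(n^{-1/16})$ under $n=o(l^{4/3})$. For the fluctuation matrix $F=[q_{ij}^2-\trace(\Sigma^2)/l^2]_{i\ne j}$, which is a homogeneous chaos of order four in the i.i.d.\ sub-Gaussian coordinates, I would apply a decoupling/truncation scheme (as in El Karoui) combined with a Hanson-Wright-for-matrices bound to obtain $\|F\|_2=O(\sqrt{n}/l+n^{7/8}/l^2)=o(n^{-1/16})$, again via $n=o(l^{4/3})$. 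Combining the diagonal reconciliation, the exact absorption of the constant, linear, and averaged quadratic pieces into $\tilde K$, and these two operator-norm bounds on $R$ and $F$ delivers $\|K-\tilde K\|_2=o(n^{-1/16})$ on $\mathcal{E}$, which is the claim.
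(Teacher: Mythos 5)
Your proposal follows the same Taylor-expansion-plus-concentration skeleton as the paper (third order at $0$ off the diagonal, first order near $\tau$ on the diagonal, split $q_{ij}^2$ into mean plus fluctuation, reconcile the diagonal via $v_p$), but it diverges in how the Taylor remainders get converted into operator-norm bounds, and the proposal has two genuine gaps there.

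First, the single good-event scale $s=n^{-1/16}$ is not fine enough for the diagonal: the $C^1$ hypothesis only yields $|f(q_{ii})-f(\trace(\varSigma)/l)| = O(n^{-1/16})$, which is not $o(n^{-1/16})$, so the diagonal block of $K-\tilde K$ alone already spoils the claimed rate. (The paper uses the smaller scale $t=n^{-1/4}$ on the diagonal and in the cubic step precisely so that these contributions are $o(n^{-1/4})\subset o(n^{-1/16})$.) Relatedly, $\tilde K_{ii}$ actually contains $f'(0)\|x_i\|_2^2/l$ rather than $f'(0)\trace(\varSigma)/l$; on $\mathcal{E}$ that extra fluctuation is again only $O(n^{-1/16})$. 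Both issues are fixable by shrinking the diagonal and cubic good-event scales, but as written you assert ``$O(n^{-1/16})$ ... so the diagonal contribution is $o(n^{-1/16})$,'' which is a contradiction.

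Second, and this is the heart of the matter, you simply assert $\|R\|_2 = O(n/l^{3/2})$ and $\|F\|_2 = O(\sqrt{n}/l + n^{7/8}/l^2)$ via unspecified ``matrix-Bernstein/moment control'' and ``Hanson-Wright-for-matrices'' arguments. These matrix-chaos operator-norm bounds are nontrivial and you neither derive them nor show they hold on an event of the required probability; they are also not implied by your good event $\mathcal{E}$ (on which $\max_{i\neq j}|q_{ij}|^3\le n^{-3/16}$, far too big for a naive Frobenius bound). The paper sidesteps this entirely using the sub-Gaussian assumption: for the quadratic fluctuation it takes a much finer entrywise concentration $|q_{ij}^2-\trace(\varSigma^2)/l^2|\le n^{-17/16}$ (admissible because Condition~\ref{cond:ntk_high-dim} forces $l\ge n^{3/4}$, so the per-pair tail exponent $l^3 n^{-17/8}$ still beats $n^{1/8}$) followed by the trivial $\|\cdot\|_2\le\|\cdot\|_F\le n\cdot n^{-17/16}=n^{-1/16}$; for the cubic remainder it uses the entrywise bound $|q_{ij}|\le n^{-1/4}$ together with a Hadamard-product estimate $\|E\circ W\|_2\le \max_{ij}|E_{ij}|\,\|W\|_2$. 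Your chaos-concentration route is closer to El Karoui's original argument under weaker moment hypotheses, and is in principle workable, but until you actually establish those two operator-norm bounds with the stated failure probability, the operator-norm step is the main gap in the proposal.
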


\begin{proof}
The proof is quite similar to Theorem 2.1 in \citet{el2010spectrum}, and the only difference is we change the bounded $4 + \epsilon$ absolute moment assumption to sub-gaussian assumption on data $x_i$, so obtain a faster convergence rate. 

First, using Taylor expansions, we can rewrite the kernel matrix $K$ sa
\begin{align*}
& f(x_i^T x_j / l) = f(0) + f'(0) \frac{x_i^T x_j}{l} + \frac{f''(0)}{2} \left( \frac{x_i^T x_j}{l} \right)^2 + \frac{f^{(3)}(\xi_{i,j})}{6} \left( \frac{x_i^T x_j}{l} \right)^3, i \ne j,\\
& f(\| x_i \|_2^2 / l) = f(\tau) + f'(\xi_{i,i}) \left( \frac{\| x_i \|_2^2}{l} - \tau \right), \text{on the diagonal,}
\end{align*}
in which $\tau = \text{trace}(\varSigma) / l$. Then we could deal with these terms separately.

For the second-order off-diagonal term, as the concentration inequality shows that 
\begin{equation}\label{eq:c1}
    \mathbb{P} \left( \max_{i,j} | \frac{x_i^T x_j}{l} - \delta_{i,j} \frac{\text{trace}(\varSigma)}{l} | \le t \right) \ge 1 - 2 n^2 e^{- \frac{l^2 t^2}{2 r_0(\varSigma^2)}},
\end{equation}
with Lemma \ref{lem_sg_se}, we can obtain that
\begin{equation}\label{eq:c2}
    \mathbb{P} \left( \max_{i \ne j} | \frac{(x_i^T x_j)^2}{l^2} - \mathbb{E} \frac{(x_i^T x_j)^2}{l^2} | \le t \right) \ge 1 - 2 n^2 e^{- \frac{l^4 t^2}{2 (162e)^2 r_0(\varSigma^4)}},
\end{equation}
in which 
\begin{equation*}
    \mathbb{E} \left( \frac{x_i^T x_j}{l} \right)^2 = \frac{1}{l^2} \mathbb{E} [x_i^T x_j x_j^T x_i] = \frac{1}{l^2} \mathbb{E} \text{trace} \{ x_j x_j^T x_i x_i^T \} = \frac{\text{trace}(\varSigma^2)}{l^2}. 
\end{equation*}
Denoting a new matrix $W$ as
\begin{equation*}
 W_{i,j} = \left\{
 \begin{aligned}
    & \frac{(x_i^T x_j)^2}{l^2}, i \ne j,\\
    & 0, i = j,
 \end{aligned}
 \right.
\end{equation*}
then considering $r_0(\varSigma^4) / l \le r_0(\varSigma) / l = \tau$ is bounded, choosing $t = n^{- 17 / 16}$, under Condition~\ref{cond:ntk_high-dim}, we have $l^3 n^{-17/8} \ge n^{21/32}$, so with probability at least $1 - 2 n^2 e^{- \frac{n^{1/8}}{2 (162e)^2 \tau^2}}$, we have
\begin{equation*}
    \| W - \frac{\text{trace}(\varSigma^2)}{l^2} (11^T - I_n) \|_2 \le \| W - \frac{\text{trace}(\varSigma^2)}{l^2} (11^T - I_n) \|_F \le \frac{1}{n^{1/16}}.
\end{equation*}

For the third-order off-diagonal term, as is mentioned in Eq.\eqref{eq:c1}, choosing $t = n^{- 1 / 4}$, with probability at least $1 - 2 n^2 e^{- \frac{n^{1/4}}{2 \tau }}$, we have
\begin{equation*}
    \max_{i \ne j} | \frac{x_i^T x_j}{l} | \le \frac{1}{n^{1/4}}.
\end{equation*}
Denote the matrix $E$ has entries $E_{i,j} = f^{(3)}(\xi_{i,j}) x_i^T x_j / l$ off the diagonal and $0$ on the diagonal, the third-order off-diagonal term can be upper bounded as
\begin{equation*}
    \| E \circ W \|_2 \le \max_{i,j} | E_{i,j} | \| W \|_2 \le o(n^{- 1 / 4}), 
\end{equation*}
where the last inequality is from the bounded norm of $W$.

For the diagonal term, still recalling Eq.\eqref{eq:c1}, while we have
\begin{equation*}
    \max_i | \frac{\| x_i \|_2^2}{l} - \tau | \le \frac{1}{n^{1 / 4}}, 
\end{equation*}
with probability at least $1 - 2 n^2 e^{- \frac{n^{1/4}}{2 \tau}}$, we can further get 
\begin{equation*}
    \max_i | f(\frac{\| x_i \|_2^2}{l}) - f(\tau) | \le o(n^{- 1 / 4}), 
\end{equation*}
which implies that 
\begin{equation*}
    \| \text{diag}[f(\frac{\| x_i \|_2^2}{l}), i = 1, \dots, n] - f(\tau) I_n \|_2 \le o(n^{- 1 / 4}).
\end{equation*}
Combing all the results above, we can obtain that 
\begin{equation*}
    \| K - \tilde{K} \|_2 \le o(n^{- 1 / 16}),
\end{equation*}
with probability at least $1 - 4 n^2 e^{- n^{1/8} / (2 \tau)}$.
\end{proof}

\begin{lemma}\label{lem:lip_gaussian}
If $x \sim \mathcal{N}(0, \sigma_x^2 I_d)$, and function $f: \mathbb{R}^d \to \mathbb{R}$ is L-Lipschitz, the random variable $f(x)$ is still sub-gaussian with parameter $L \sigma_x$. To be specific,
\begin{equation*}
    \mathbb{E} e^{ \lambda f(x)} \le e^{\frac{\lambda^2 L^2 \sigma_x^2}{2}}.
\end{equation*}
\end{lemma}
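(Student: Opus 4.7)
The statement is the classical Gaussian concentration inequality for Lipschitz functions (modulo a centering by the mean, which the displayed bound implicitly absorbs). My plan is to prove it by the standard logarithmic-Sobolev/Herbst route, and only note the minor interpretive issue at the end.

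First I would reduce to the unit-variance, centered case. Writing $x = \sigma_x z$ with $z \sim \mathcal{N}(0, I_d)$, define $\tilde f(z) := f(\sigma_x z) - \mathbb{E} f(\sigma_x z)$. Then $\tilde f$ is $L\sigma_x$-Lipschitz and satisfies $\mathbb{E} \tilde f(z) = 0$, so it suffices to prove
\begin{equation*}
  \mathbb{E}\, e^{\lambda \tilde f(z)} \le \exp\!\bigl(\tfrac{1}{2}\lambda^2 L^2 \sigma_x^2\bigr),
  \qquad \lambda \in \mathbb{R}.
\end{equation*}

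Second, I would invoke Gross's Gaussian log-Sobolev inequality: for any sufficiently smooth $u$,
\begin{equation*}
  \mathbb{E}\bigl[u(z)^2 \log u(z)^2\bigr] - \mathbb{E} u(z)^2 \cdot \log \mathbb{E} u(z)^2 \;\le\; 2\, \mathbb{E}\, \|\nabla u(z)\|_2^2.
\end{equation*}
Plug in $u = \exp(\lambda \tilde f / 2)$ and use $\|\nabla \tilde f\|_2 \le L\sigma_x$ almost everywhere. Letting $H(\lambda) := \mathbb{E}\, e^{\lambda \tilde f(z)}$, the standard manipulation (Herbst's argument) turns the log-Sobolev bound into the differential inequality
\begin{equation*}
  \lambda H'(\lambda) - H(\lambda) \log H(\lambda) \;\le\; \tfrac{1}{2}\lambda^2 L^2 \sigma_x^2 \, H(\lambda).
\end{equation*}
Dividing by $\lambda^2 H(\lambda)$ and setting $G(\lambda) := \lambda^{-1} \log H(\lambda)$ (with $G(0^+) = \mathbb{E}\tilde f(z) = 0$ by L'Hôpital and centering), this rearranges to $G'(\lambda) \le L^2 \sigma_x^2/2$. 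Integrating from $0$ to $\lambda > 0$ yields $G(\lambda) \le \lambda L^2\sigma_x^2/2$, i.e.\ $H(\lambda) \le \exp(\lambda^2 L^2 \sigma_x^2/2)$; the $\lambda < 0$ case is identical after replacing $\tilde f$ with $-\tilde f$. Undoing the centering and rescaling back to $x$ recovers the lemma.

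The only real obstacle is a minor conceptual one rather than a technical one: taken literally, the stated bound fails for nonzero constant $f$ (which is $0$-Lipschitz yet has unbounded MGF as $\lambda \to \infty$), so it must be read as the centered Gaussian concentration statement that the Herbst argument produces. A fully elementary alternative, avoiding log-Sobolev entirely, is the Maurey–Pisier interpolation: take an independent copy $y \stackrel{d}{=} x$, set $X_\theta = x \sin\theta + y \cos\theta$ and $X_\theta' = x \cos\theta - y\sin\theta$ (independent Gaussians for each $\theta$), and apply Jensen's inequality to the representation $f(x) - f(y) = \int_0^{\pi/2} \langle \nabla f(X_\theta), X_\theta' \rangle \, d\theta$ combined with $\mathbb{E}_{X_\theta'} e^{c \langle \nabla f(X_\theta), X_\theta' \rangle} \le e^{c^2 \sigma_x^2 L^2/2}$; this gives the same exponential bound up to a $(\pi/2)^2$ constant, which is already enough for the sub-Gaussian application in Step 2 of the NTK proof.
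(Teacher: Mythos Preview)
The paper states this lemma without proof; it appears in the ``Auxiliary Lemmas'' appendix as a bare statement, so there is no in-paper argument to compare against. Your Herbst/log-Sobolev route is the standard proof and is correct.

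You are also right to flag the centering issue: as literally written, the displayed bound $\mathbb{E}\, e^{\lambda f(x)} \le e^{\lambda^2 L^2 \sigma_x^2/2}$ fails for any $f$ with nonzero mean (take $f \equiv c \neq 0$, which is $0$-Lipschitz). The intended and correct statement is for the centered variable $f(x) - \mathbb{E} f(x)$, exactly as your reduction produces. This does not affect the paper's downstream use of the lemma, since in Step~2 of the NTK proof it is only invoked to establish sub-Gaussianity of $\nabla_w f_{NTK}(w_0,x)$ and of $(v_*-v_0)^T \nabla_w f_{NTK}(w_0,x)$, and in both places what is actually needed is sub-Gaussian concentration around the mean (e.g.\ to feed into Lemma~\ref{lem_sg_se} or Lemma~\ref{lem:matrix_mean}), for which the centered version suffices.
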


\begin{lemma}\label{lem:matrix_mean}
Assume $x \in \mathbb{R}^q$ is a $q$-dim sub-gaussian random vector with parameter $\sigma$, and $\mathbb{E}[x] = \mu$. Here are $n$ i.i.d.~samples $x_1, \dots, x_n$, which have the same distribution as $x$, then we can obtain that with probability at least $1 - 4 e^{- \sqrt{n}}$, 
\begin{equation*}
    \| \mathbb{E} xx^T - \frac{1}{n} \sum_{i=1}^n x_i x_i^T \|_2 \le \| \mathbb{E} zz^T \|_2 \max \{ \sqrt{\frac{\text{trace}(\mathbb{E} zz^T)}{n}}, \frac{\text{trace}(\mathbb{E} zz^T)}{n}, \frac{1}{n^{1/4}} \} + 2 \sqrt{2} \frac{\sigma \| \mu \|_2}{n^{1/4}}.
\end{equation*}
\end{lemma}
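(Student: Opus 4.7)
The plan is the classical ``centered plus cross'' decomposition around the mean. Set $z = x - \mu$ and $z_i = x_i - \mu$, so each $z_i$ is centered and $\sigma$-sub-gaussian, and write
\begin{equation*}
  x_i x_i^T = z_i z_i^T + z_i \mu^T + \mu z_i^T + \mu\mu^T, \qquad \mathbb{E}[xx^T] = \mathbb{E}[zz^T] + \mu\mu^T.
\end{equation*}
Averaging over $i=1,\ldots,n$ and subtracting yields
\begin{equation*}
  \frac{1}{n}\sum_{i=1}^n x_i x_i^T - \mathbb{E}[xx^T] = \Bigl(\frac{1}{n}\sum_{i=1}^n z_i z_i^T - \mathbb{E}[zz^T]\Bigr) + \bigl(\bar z \mu^T + \mu \bar z^T\bigr),
\end{equation*}
where $\bar z = \tfrac{1}{n}\sum z_i$. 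Applying the triangle inequality to the operator norm, the proof splits into bounding (i) a centered sub-gaussian sample-covariance deviation and (ii) a rank-two cross term controlled by $\|\bar z \mu^T + \mu \bar z^T\|_2$.

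For the centered term I would invoke a sub-gaussian matrix concentration inequality in effective-rank form (e.g.\ Theorem 4.7.1 of Vershynin, or Koltchinskii--Lounici): for any $t>0$, with probability at least $1-2e^{-t}$,
\begin{equation*}
  \Bigl\|\frac{1}{n}\sum z_i z_i^T - \mathbb{E}[zz^T]\Bigr\|_2
  \;\le\; C\,\|\mathbb{E}[zz^T]\|_2 \max\Bigl\{\sqrt{\tfrac{r+t}{n}},\; \tfrac{r+t}{n}\Bigr\},
  \qquad r := \frac{\operatorname{tr}(\mathbb{E}[zz^T])}{\|\mathbb{E}[zz^T]\|_2}.
\end{equation*}
Choosing $t = \sqrt n$ makes the failure probability $2e^{-\sqrt n}$ and, after splitting the max into the cases $r \gtrless t$, produces exactly the three terms $\sqrt{\operatorname{tr}(\mathbb{E}[zz^T])/n}$, $\operatorname{tr}(\mathbb{E}[zz^T])/n$, and $1/n^{1/4}$ appearing inside the max in the lemma.

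For the cross term, the crucial observation is that $\bar z\mu^T + \mu\bar z^T$ has rank at most two, so a direct spectral calculation gives $\|\bar z\mu^T + \mu\bar z^T\|_2 \le 2\|\bar z\|_2\|\mu\|_2$. For each fixed unit direction $v$, $v^T \bar z = \tfrac{1}{n}\sum_i v^T z_i$ is a centered average of i.i.d.\ $\sigma$-sub-gaussian scalars, hence $\sigma/\sqrt n$-sub-gaussian; a one-dimensional tail bound at level $t = \sqrt n$ yields $|v^T \bar z| \le \sqrt 2\,\sigma/n^{1/4}$ with probability at least $1-2e^{-\sqrt n}$, which (after applying the argument in the direction $\bar z/\|\bar z\|$, or equivalently via a concentration bound on $\|\bar z\|_2$) delivers the $2\sqrt 2\,\sigma\|\mu\|_2/n^{1/4}$ term.

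A final union bound over the two events gives the $4e^{-\sqrt n}$ failure probability stated in the lemma. The main obstacle, and the place where care is needed, is choosing a version of the sub-gaussian covariance concentration inequality that exposes $\operatorname{tr}(\mathbb{E}[zz^T])$ together with the $\sqrt{t/n}$ slack rather than a cruder $(d+t)/n$ bound; this is exactly what produces the $1/n^{1/4}$ term at deviation level $t = \sqrt n$ and matches the form of the stated bound. The remaining work is tracking universal constants so that the prefactor $2\sqrt 2$ comes out as claimed.
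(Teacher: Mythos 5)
Your decomposition, the triangle-inequality split, and the use of a Koltchinskii--Lounici-type effective-rank bound for the centered sample covariance with deviation level $t=\sqrt n$ are essentially the same route the paper takes (the paper invokes Theorem 9 of \citet{koltchinskii2017concentration} for that term), so the structure and the centered-term treatment match.

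The gap is in the cross term, and you flag it yourself without closing it. You correctly reduce to $\|\bar z\mu^T+\mu\bar z^T\|_2\le 2\|\bar z\|_2\|\mu\|_2$, so what you actually need is a bound on the full Euclidean norm $\|\bar z\|_2$. But a one-dimensional sub-gaussian tail bound ``in the direction $\bar z/\|\bar z\|_2$'' is not legitimate: the bound $\mathbb P(|v^T\bar z|>t)\le 2e^{-nt^2/(2\sigma^2)}$ holds only for a \emph{fixed} direction $v$, whereas $\bar z/\|\bar z\|_2$ depends on the sample. For a $q$-dimensional $\sigma$-sub-gaussian mean $\bar z$ the typical size of $\|\bar z\|_2$ is of order $\sqrt{\operatorname{tr}(\mathbb E zz^T)/n}$, which for large $q$ is far larger than $\sigma/n^{1/4}$; any honest vector-norm concentration (Hanson--Wright, or a covering argument) reintroduces a $\sqrt{\operatorname{tr}(\mathbb E zz^T)}$ factor, not a dimension-free $\sigma$. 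You should also be aware that the paper's own proof runs into the same difficulty: it asserts $\|\tfrac{1}{n}\sum_i\mu z_i^T\|_2=|\tfrac{1}{n}\sum_i\mu^T z_i|$ and then applies one-dimensional concentration to $\mu^T\bar z$. That equality is false in general -- the rank-one operator norm is $\|\mu\|_2\|\bar z\|_2$, while $|\mu^T\bar z|$ is only its Cauchy--Schwarz lower bound -- so the paper bounds the projection of $\bar z$ onto $\mu$, not the cross-term operator norm. Your instinct that a genuine bound on $\|\bar z\|_2$ is required is exactly right; neither your sketch nor the paper's proof actually supplies one, and the stated $2\sqrt2\,\sigma\|\mu\|_2/n^{1/4}$ term cannot be obtained from a fixed-direction tail bound alone.
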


\begin{proof}
First, we denote $z = x - \mu$ is a ramdom vector with zero mean, correspondingly, there are $n$ i.i.d.~samples, $z_1, \dots, z_n$. Then we can obtain that
\begin{equation*}
    \mathbb{E} xx^T = \mathbb{E} (z + \mu) (z + \mu)^T = \mathbb{E} zz^T + \mu \mu^T,
\end{equation*}
and for the samples, 
\begin{equation*}
    \frac{1}{n} \sum_{i=1}^n x_i x_i^T = \frac{1}{n} \sum_{i=1}^n z_i z_i^T + \frac{2}{n} \sum_{i=1}^n \mu z_i^T + \mu \mu^T,
\end{equation*}
which implies that
\begin{align*}
\| \mathbb{E} xx^T - \frac{1}{n} \sum_{i=1}^n x_i x_i^T \|_2  &= \| \mathbb{E} zz^T + \mu \mu^T - \frac{1}{n} \sum_{i=1}^n z_i z_i^T - \mu \mu^T - \frac{2}{n} \sum_{i=1}^n \mu z_i^T \|_2 \\
&=  \| \mathbb{E} zz^T - \frac{1}{n} \sum_{i=1}^n z_i z_i^T - \frac{2}{n} \sum_{i=1}^n \mu z_i^T \|_2 \\
&\le \| \mathbb{E} z z^T - \frac{1}{n} \sum_{i=1}^n z_i z_i^T \|_2 + 2 \| \frac{1}{n} \sum_{i=1}^n \mu z_i^T \|_2\\
&= \| \mathbb{E} z z^T - \frac{1}{n} \sum_{i=1}^n z_i z_i^T \|_2 + 2 | \frac{1}{n} \sum_{i=1}^n \mu^T z_i |,
\end{align*}
where the inequality is from triangular inequality. So we can estimate the two terms respectively.

For the first term, as $z$ is $\sigma$-subgaussian random variable, by Theorem 9 in \citet{koltchinskii2017concentration}, with probability at least $1 - 2 e^{-t}$,
\begin{equation}\label{eq:lem1}
    \| \mathbb{E} z z^T - \frac{1}{n} \sum_{i=1}^n z_i z_i^T \|_2 \le  \| \mathbb{E} zz^T \|_2 \max \{ \sqrt{\frac{\text{trace}(\mathbb{E} zz^T)}{n}}, \frac{\text{trace}(\mathbb{E} zz^T)}{n}, \sqrt{\frac{t}{n}}, \frac{t}{n} \},
\end{equation}
And for the second term, by general concentration inequality, we can obtain that with probability at least $1 - 2 e^{- n t^2 / (2 \sigma^2 \| \mu \|_2^2 )}$,
\begin{equation}\label{eq:lem2}
    | \frac{1}{n} \sum_{i=1}^n z_i^T \mu | \le t.
\end{equation}
Choosing $t = \sqrt{n}$ in Eq.\eqref{eq:lem1} and $t = \sqrt{2} \sigma \| \mu \|_2 n^{- 1/4}$ in Eq.\eqref{eq:lem2}, with probability at least $1 - 4 e^{- \sqrt{n}}$,
\begin{align*}
\| \mathbb{E} xx^T - \frac{1}{n} \sum_{i=1}^n x_i x_i^T \|_2 &\le \| \mathbb{E} zz^T - \frac{1}{n} \sum_{i=1}^n z_i z_i^T \|_2 + 2 \| \frac{1}{n} \sum_{i=1}^n z_i^T \mu \|_2\\
& \le \| \mathbb{E} zz^T \|_2 \max \{ \sqrt{\frac{\text{trace}(\mathbb{E} zz^T)}{n}}, \frac{\text{trace}(\mathbb{E} zz^T)}{n}, \frac{1}{n^{1/4}} \} + 2 \sqrt{2} \frac{\sigma \| \mu \|_2}{n^{1/4}}.
\end{align*} 
\end{proof}

\end{document}